\newtheorem{assumption}{Assumption}  
\newcommand{\ie}{\textit{i.e.}}
\newcommand{\eg}{\textit{e.g.}}
\theoremstyle{definition}
\newtheorem{definition}{Definition}[section]
\theoremstyle{plain}
\newtheorem{theorem}[definition]{Theorem}
\newtheorem{lemma}[definition]{Lemma}
\newtheorem{proposition}[definition]{Proposition}
\theoremstyle{remark}
\newtheorem*{remark}{Remark}
\renewenvironment{remark}[1][]{%
  \ifx\relax#1\relax
    \begin{itshape}%
    \textbf{Remark.}\ 
  \else
    \begin{itshape}%
    \textbf{Remark} (#1).\ 
  \fi
}{%
  \end{itshape}%
}
\newcommand{\comments}[1]{}
\def\l@subsubsection#1#2{} 
\begin{document}

\let\oldaddcontentsline\addcontentsline

\renewcommand{\addcontentsline}[3]{}

\title{Artificial Entanglement in the Fine-Tuning of Large Language Models}

\author{Min Chen}
\affiliation{Department of Computer Science, University of Pittsburgh, Pittsburgh, PA 15260, USA}

\author{Zihan Wang}
\affiliation{Department of Computer Science, Northwestern University, Evanston, IL 60208, USA}

\author{Canyu Chen}
\affiliation{Department of Computer Science, Northwestern University, Evanston, IL 60208, USA}

\author{Zeguan Wu}
\affiliation{Department of Computer Science, University of Pittsburgh, Pittsburgh, PA 15260, USA}

\author{Manling Li}
\affiliation{Department of Computer Science, Northwestern University, Evanston, IL 60208, USA}

\author{Junyu Liu}
\affiliation{Department of Computer Science, University of Pittsburgh, Pittsburgh, PA 15260, USA}

\date{Dated: November 11, 2024}

\date{\today}

\begin{figure*}
    \centering
\includegraphics[width=\textwidth]{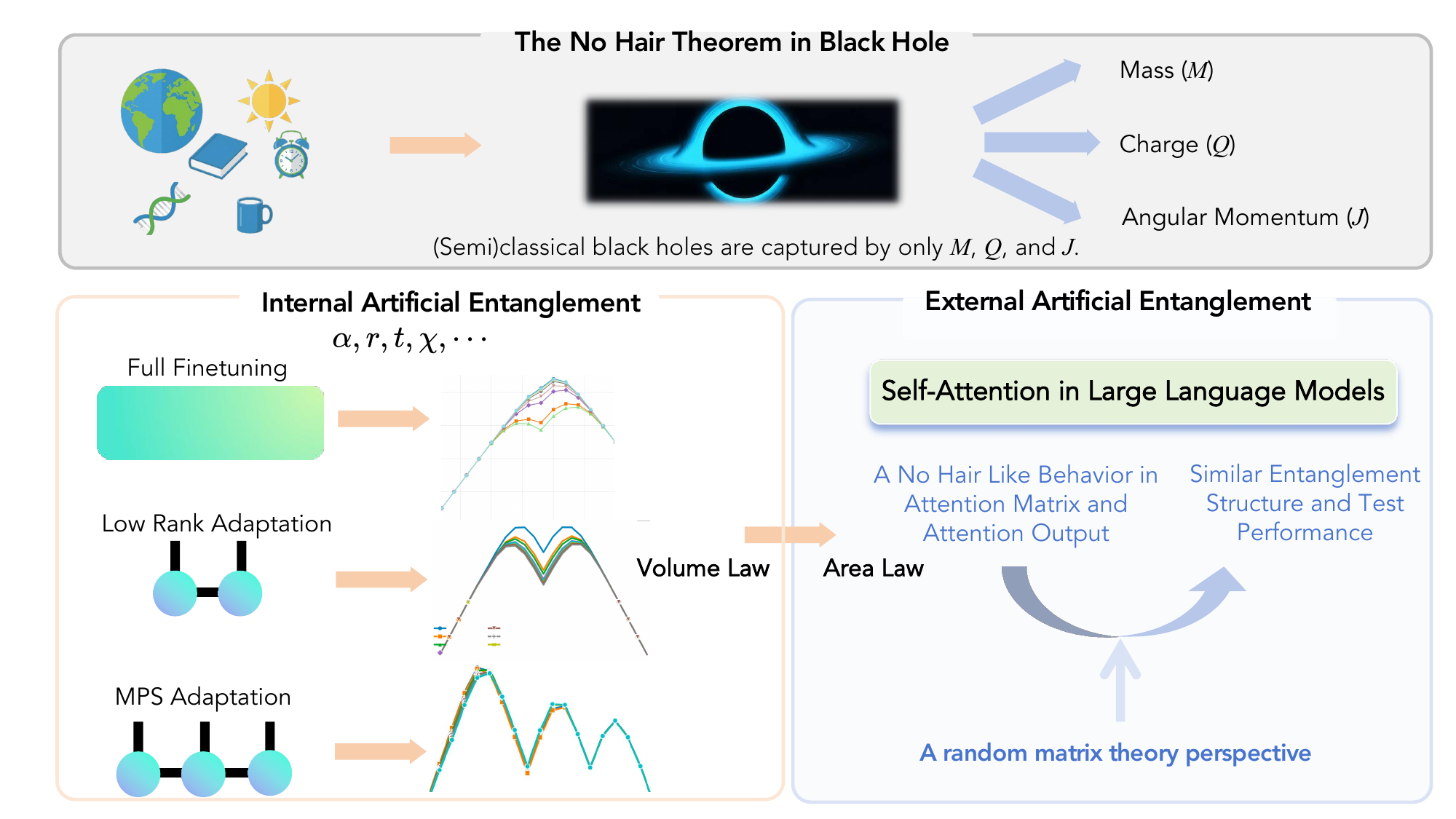}   
    \vspace{-10pt}
    \caption{Key Findings.
\underline{\textbf{(i)}} Projection matrices (\(\Delta W_Q\) and \(\Delta W_V\)) exhibit volume-law internal entanglement profiles with distinctive entanglement valleys that differ between FFT, LoRA and MPS adaptation.
\underline{\textbf{(ii)}} Attention matrices show area-law scaling with logarithmic corrections. Random matrix theory explains this through the Attention Cardy Formula.
\underline{\textbf{(iii)}} Despite internal differences, external attention outputs remain invariant, revealing a no-hair–like effect where the attention mechanism acts as a coarse-graining operator.}
\label{fig:overall_framework}
\end{figure*}

\begin{abstract}
Large language models (LLMs) can be adapted to new tasks using parameter-efficient fine-tuning (PEFT) methods that modify only a small number of trainable parameters, often through low-rank updates. In this work, we adopt a quantum-information-inspired perspective to understand their effectiveness. From this perspective, low-rank parameterizations naturally correspond to low-dimensional Matrix Product States (MPS) representations, which enable entanglement-based characterizations of parameter structure. 
Thereby, we term and measure \textit{``Artificial Entanglement''}, defined as the entanglement entropy of the parameters in artificial neural networks (in particular the LLMs). We first study the representative low-rank adaptation (LoRA) PEFT method, alongside full fine-tuning (FFT), using LLaMA models at the 1B and 8B scales trained on the Tulu3 and OpenThoughts3 datasets, and uncover: \underline{(i)} \textit{Internal artificial entanglement} in the updates of query and value projection matrices ($\Delta W_Q, \Delta W_V$) in LoRA follows a \textit{volume law} with a central suppression (termed as the \textit{``Entanglement Valley''}), which is sensitive to hyper-parameters and is distinct from that in FFT;  
\underline{(ii)} \textit{External artificial entanglement} in attention matrices, corresponding to token--token correlations in representation space, follows an \textit{area law} with logarithmic corrections and remains robust to LoRA hyper-parameters and training steps.
Drawing a parallel to the No-Hair Theorem in black hole physics, we propose that although LoRA and FFT induce distinct internal entanglement signatures, such differences do not manifest in the attention outputs, suggesting a ``no-hair'' property that results in the effectiveness of low rank updates. 
We further provide theoretical support based on random matrix theory, and extend our analysis to an \textit{MPS Adaptation} PEFT method, which exhibits qualitatively similar behaviors.
\end{abstract} 

\maketitle

\section{Introduction}
\label{sec:introduction}

Recently, large language models (LLMs) have demonstrated desirable abilities by scaling model size and pretraining on large corpora of data~\cite{radford2018improving,radford2019language,brown2020language,devlin2019bert,raffel2020exploring,vaswani2017attention,kaplan2020scaling,hoffmann2022training,henighan2020scaling,chowdhery2023palm,achiam2023gpt4}. 
Post-training, usually via fine-tuning, is adopted to adapt pretrained LLMs for downstream tasks. However, it is often considered unnecessary and computationally expensive to update the full set of parameters. Therefore, many parameter-efficient fine-tuning (PEFT) methods are proposed, including low-rank adaptation (LoRA)~\cite{hu2022lora} and its variants~\cite{dettmers2023qlora, zhang2023adalora,liu2024dora, kopiczko2024vera,hayou2024loraplus, valipour2023dylora,zhang2023lora,zhang2024loraprune,zi2023delta,lialin2023relora}. 
Previous studies~\cite{hu2022lora,li2018intrinsic,shuttleworth2024lora,hayou2024loraplus,kalajdjieski2024rank,pan2024lora} also contribute significantly to explaining their effectiveness. However, a fundamental question remains unsolved: \textit{how complex are the internal parameter structures induced by distinct fine-tuning methods, and to what extent are such complexities reflected in attention-level representations?}

We investigate this question by adopting a quantum information perspective, and draw a parallel with a well-known phenomenon in theoretical physics: the tension between internal quantum correlations and external simplicity.
This tension has been most prominently explored within black hole physics~\cite{bombelli1986quantum,srednicki1993entropy,eisert2008area}. A fundamental conflict~\cite{israel1967event,carter1971axisymmetric,bhattacharya2007no,mavromatos1996eluding} emerges between the quantum mechanical description of black holes (via entanglement entropy) and their classical geometric description. 
While the scaling behavior of entanglement entropy implies that black holes are complex systems characterized by a vast number of degrees of freedom, classical gravitational theories predict that black holes are surprisingly simple objects with no distinct features. 
This classical perspective is formalized by uniqueness theorems~\cite{israel1968event,carter1971axisymmetric,robinson1975uniqueness}. The physical essence of these theorems is encapsulated in the No-Hair Theorem~\cite{israel1967event,israel1968event,carter1971axisymmetric,robinson1975uniqueness}, which states that all stationary black hole solutions of the Einstein–Maxwell equations can be completely characterized by only three externally observable classical parameters: mass, electric charge, and angular momentum.
All other semi-classical information about the matter that formed the black hole (the ``hair'') disappears behind the event horizon and is inaccessible to an external observer \cite{harlow2016jerusalem}.

Motivated by this dilemma, we define and probe the \textit{``artificial entanglement''}, where ``artificial'' refers to the application of entanglement entropy measures to artificial neural networks (particularly the LLMs). Notably, ``artificial'' emphasizes that large language models are entirely classical systems and do not possess any physical quantum degrees of freedom or genuine quantum entanglement. Therefore, this adopted perspective serves purely as a mathematical tool.  \textbf{Our contributions are as follows:} 

\textbf{\underline{(i)}} We first represent the updates of query and value projection matrices ($\Delta W_Q, \Delta W_V$) as Matrix Product State (MPS) \cite{eisert2008area,vidal2002computable,vidal2003efficient,vidal2003entanglement,vidal2004efficient,vidal2007entanglement,orus2019tensor,berezutskii2025tensor,milsted2022collisions} and compute the von Neumann entanglement entropy across bond partitions, yielding an \textit{artificial entanglement profile} that captures internal parameter structure. We thereby refer the corresponding entanglement as the \textit{``internal artificial entanglement''}.
We show that the resulting internal artificial entanglement under LoRA follows a \textit{volume law} (\ie, entanglement entropy scaling linearly with system size, indicating a high degree of correlation that cannot be faithfully captured by simple representations)~\cite{eisert2008area,amico2008entanglement} with a pronounced central suppression (termed as the \textit{ ``Entanglement Valley''}), is highly sensitive to hyperparameters (e.g., rank $r$ and scaling factor $\alpha$), and is qualitatively distinct from full fine-tuning (FFT), revealing substantial differences in internal parameter correlation. Meanwhile, we also measure the \textit{external artificial entanglement} via similar routine, where ``external'' means here we choose the entanglement to be the one on attention matrices and outputs that is  produced during the forward computation rather than on model parameters, corresponding to token--token correlations in representation space.
We demonstrate that it follows an approximate \textit{area law with logarithmic corrections} 
(\ie, entanglement entropy scaling logarithmically with sequence length). Despite the markedly different internal entanglement structures induced by LoRA and FFT, this external artificial entanglement remains empirically robust across a certain range of hyper-parameters and training stages, suggesting that variations in internal parameter correlation do not significantly propagate to attention-level representations within the regimes explored. Drawing a parallel to the No-Hair Theorem, we show that the attention mechanism exhibits a ``no-hair'' property, whereby high-correlation internal entanglement signatures are coarse-grained and rendered indistinguishable at the level of attention outputs. Combining the above results, we propose that LoRA learns intrinsically different internal parameter structures with FFT, but remains effective because the attention mechanism coarse-grains these differences through a no-hair property. \textbf{\underline{(ii)}} We provide theoretical support for the observations based on random matrix theory by establishing an \textit{``Attention Cardy Formula''}, which shows that the entanglement entropy of attention matrices exhibits a logarithmic scaling with sequence length under certain initialization conditions, analogous to the Cardy formula~\cite{calabrese2004entanglement,calabrese2009entanglement} for critical quantum systems. Moreover, we relate the emergence of low external artificial entanglement to \textit{stable rank collapse} (\ie, spectral mass concentrates onto a few dominant modes)~\cite{saadamind}. \textbf{\underline{(iii)}} We extend our analysis beyond LoRA to an \textit{MPS Adaptation} PEFT strategy, in which weight updates are parameterized directly via MPS, and demonstrate that it exhibits qualitatively similar internal and external artificial entanglement behaviors, validating the generality of our findings.

Our experiments are conducted on the Llama 3 series models~\cite{dubey2024llama} at the 1B and 8B scales, and employ the Tulu3~\cite{lambert2024tulu} and OpenThoughts3~\cite{guha2025openthoughts} datasets, which target instruction following and reasoning tasks respectively with distinct scope and structures. Our key findings are summarized in FIG.~\ref{fig:overall_framework}.
We begin by outlining the fine-tuning methods under consideration and introducing our overall framework (Sec.~\ref{sec:sketch_MPS}).
We then present empirical results on a LLaMA model that reveal a no-hair property, together with several additional phenomena (Sec.~\ref{sec:Entanglement Structure in FFT and LoRA}).
Next, we provide a theoretical interpretation based on random matrix theory (Sec.~\ref{sec:random_matrix_theory}).
Building on the above results, we extend our study to an MPS adaptation method, which exhibits qualitatively similar behaviors (Sec.~\ref{sec:MPS_adaptation}).
Finally, we describe our methods in detail and conclude with a discussion on potential applications (Secs.~\ref{sec:method} and~\ref{sec:discussion}).

\section{Results}

\subsection{FFT, LoRA and the Overall Framework for Artificial Entanglement Analysis}
\label{sec:sketch_MPS}

In this section, we first review FFT and LoRA~\cite{hu2022lora} methods.
FFT adapts a pre-trained model by updating all model parameters.
Given a pre-trained weight projection matrix $W_0$, the model learns a full matrix update
$\Delta W_{\mathrm{Full}}$, resulting in $W = W_0 + \Delta W_{\mathrm{Full}}$.
In contrast, LoRA injects trainable low-rank matrices to approximate the weight
updates.
Specifically, LoRA decomposes the update into two smaller matrices, denoted as
$A \in \mathbb{C}^{r \times d_{\mathrm{in}}}$ and
$B \in \mathbb{C}^{d_{\mathrm{out}} \times r}$,\footnote{
For notational generality, we allow the matrices to be complex-valued.
All experiments in this work are conducted with real-valued parameters 
as in standard LoRA implementations.
}
where the rank $r \ll \min(d_{\mathrm{in}}, d_{\mathrm{out}})$.
In addition, LoRA uses a scaling hyperparameter $\alpha$ to control the magnitude
of the low-rank update. Formally, LoRA update is formulated as
\begin{equation}
    \Delta W_{\mathrm{LoRA}} := \frac{\alpha}{r} \, BA
    \in \mathbb{C}^{d_{\mathrm{out}} \times d_{\mathrm{in}}},
\end{equation}
which is then added to the frozen pre-trained weight $W_0$.

To analyze artificial entanglement in fine-tuning, we employ a framework that includes MPS decomposition with entanglement entropy computation. Our approach consists of four key steps: (i) reshaping matrices (including the updates of weight projection matrices and attention-related matrices) into higher-order tensors, (ii) factorizing them into MPS representations, (iii) interpreting the MPS as a mathematical formalism of many-body states (formally analogous to quantum many-body states, though the underlying systems are entirely classical), and (iv) computing the von Neumann entanglement entropy across different bonds\footnote{With a slight abuse of terminology, we sometimes use the terms
\emph{bond}, \emph{cut}, \emph{cut position}, and \emph{bi-partition}
interchangeably, as they are in one-to-one correspondence in the
one-dimensional settings considered here. Specifically, a cut at a given position induces a bi-partition of the
Hilbert space, while in an MPS representation such a cut corresponds to cutting a virtual bond.
} by performing SVD and obtaining singular values (Schmidt indices), yielding an \textit{artificial entanglement profile}. This profile quantifies how correlations vary across different positions, revealing the fruitful internal structure. Below we take the update of a projection matrix \(\Delta W\) (\(\Delta W_Q\) and \(\Delta W_V\)) as an example and detail the essential steps (See preliminaries on tensor basis in \textit{Appendix~\ref{sec:Preliminaries and Related Works}}):

\underline{Step 1}: Although $\Delta W \in \mathbb{C}^{d_{\mathrm{out}} \times d_{\mathrm{in}}}$ is originally an Order-2 tensor, we first reshape it into a higher-order tensor by factorizing its input and output dimensions into their prime components. This produces the finest-grained decomposition
\begin{equation}
    d_{\mathrm{out}} = \prod_{k=1}^{n} f_k,\qquad 
d_{\mathrm{in}} = \prod_{\ell=1}^{m} g_\ell,
\end{equation}
where $n$ and $m$ denote the number of prime factors in $d_{\mathrm{out}}$ and $d_{\mathrm{in}}$, respectively. This allows us to view $\Delta W$ as a tensor living on $(n+m)$ sites. This provides the ``lattice'' on which we build the MPS. 

\underline{Step 2:} We then perform a sequence of singular value decompositions (SVDs) along the reshaped tensor dimensions to factorize $\Delta W$ into a chain of local Order-3 tensors. The resulting representation
\begin{equation}
    \Delta W \ \longrightarrow\ \{\mathcal{T}^{[1]},\mathcal{T}^{[2]},\dots,\mathcal{T}^{[n+m]}\}
\end{equation}
is an MPS whose virtual bond indices capture how different tensor sites are correlated. For LoRA, the factorization naturally reflects its low-rank structure: the update $BA$ induces a restricted bond dimension. For FFT, the MPS is obtained directly from $\Delta W_{\mathrm{Full}}$ without such constraints.

\underline{Step 3:} Once factorized, the MPS can be interpreted as a mathematical formalism of many-body states. Specifically, the MPS factorization of $\Delta W$ is formally analogous to the MPS representation of the amplitude tensor of a quantum many-body state $|\Psi\rangle$~\cite{liu2023simulating,chen2023towards} (though the underlying system is entirely classical). Each ``site'' corresponds to one of the factorized input/output degrees of freedom, and the virtual bonds represent the latent coupling strength between subsystems.

\underline{Step 4:} When performing an SVD at each bond during the MPS decomposition, we obtain the singular values $\{\sigma_{\alpha_k}\}$, where $k$ denotes the bond position (the $k$-th bond in the MPS chain) and $\alpha_k$ indexes the Schmidt modes at bond $k$. 
We normalize the singular values according to
\begin{equation}
    \lambda_{\alpha_k} = \frac{\sigma_{\alpha_k}}{\sqrt{\sum_{\alpha_k} \sigma_{\alpha_k}^2}},
\end{equation}
such that $\sum_{\alpha_k} \lambda_{\alpha_k}^2 = 1$.
These normalized Schmidt coefficients define the bipartite entanglement entropy,
\begin{equation}
    S_k = -\sum_{\alpha_k} \lambda_{\alpha_k}^2 \log \lambda_{\alpha_k}^2,
\end{equation}
where $S_k$ quantifies the entanglement entropy across bond $k$. 
Scanning across all bonds yields an \textit{``artificial entanglement profile''}, allowing us to probe correlation patterns. 
The full details are provided in Sec.~\ref{sec:method}.

\begin{remark}[On the use of the term ``artificial entanglement"]
The entanglement entropy considered in this work is purely a mathematical
quantity derived from the Schmidt decomposition of high-dimensional tensors, and should not be interpreted as physical quantum entanglement. Although large language models are entirely classical systems, MPS representations and their associated entanglement measures provide a principled, quantum-inspired, basis-independent way to characterize how correlations are distributed under different bipartitions. In this sense, the resulting ``artificial entanglement profiles'' are used as descriptive diagnostics of structural correlations, rather than as real physical entanglement in an underlying physical Hilbert space.
\end{remark}

\subsection{Artificial Entanglement Profile in FFT and LoRA Fine-tuning}
\label{sec:Entanglement Structure in FFT and LoRA}

\begin{figure}[!t]
    \centering
\includegraphics[width=\linewidth]{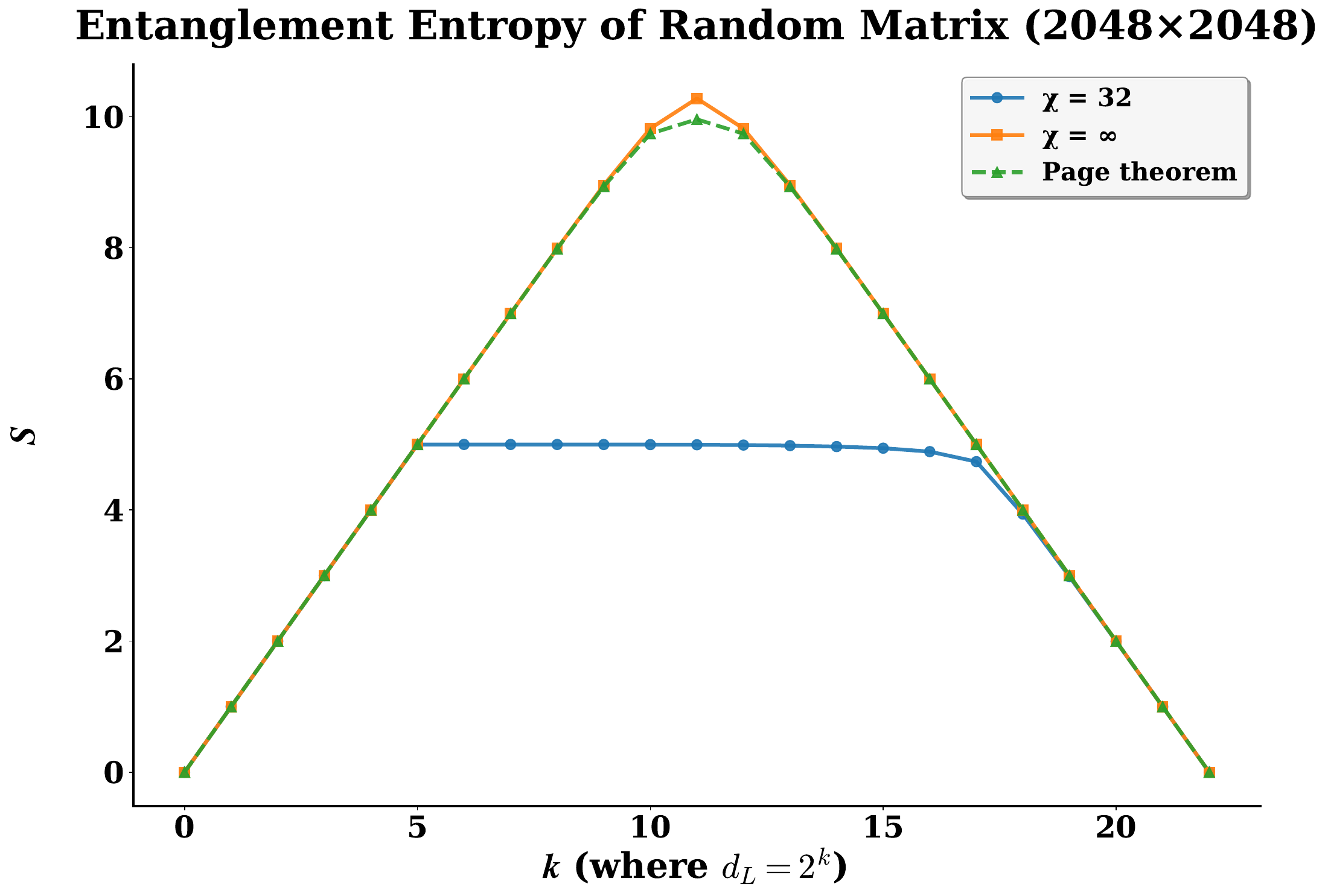}
    \caption{Artificial entanglement profiling of a random Gaussian matrix. The orange curve (\(\chi\) = \(\infty\)) corresponds to the full SVD at each bond and closely matches the Page-curve prediction for Haar-random states (green dashed line). The blue curve ($\chi=32$) demonstrates the effect of truncating the MPS bond dimension the entropy saturates once the Schmidt rank exceeds \(\chi\), forming a plateau, while the true entropy (the orange) continues to increase toward the Page limit.}
    \label{fig:random_matrix_entropy}
\end{figure}

\textbf{Volume law in a random Gaussian matrix.} 
A tensor (including a matrix) with i.i.d.\ Gaussian entries, when reshaped into a vector in a tensor-product space and properly normalized, is statistically equivalent to
sampling a Haar-random pure state~\cite{mezzadri2006generate}.
Here, viewing the tensor as a vector is natural, since an MPS is precisely a structured decomposition of such a high-dimensional vector via successive
Schmidt decompositions.
In the quantum domain, Haar-random pure states serve as a canonical model of
maximally unstructured states, whose bipartite entanglement properties are
well captured by Page theory~\cite{page1993average}.
Accordingly, the von Neumann entanglement entropy $S$ obeys the Page law, which
to leading order reads
$S_{\mathrm{Page}}(d_L) \simeq \log d_L - \frac{d_L}{2 d_R \ln 2}$
for $d_L \le d_R$, where $d_L$ and $d_R$ denote the dimensions of the left and right subsystems respectively after a bi-partition. This implies a \textit{volume-law} scaling of entanglement, where $S$ increases linearly with the size of the smaller subsystem. For Haar-random states, this linear growth further leads to a saturation near the mid cut. We show this profiling curve by sampling a real-valued random matrix \(R \in \mathbb{R}^{m \times n}\) with i.i.d. Gaussian entries, with the total dimension \(D = mn\) where we set \(m, n = 2048\). We then factorize \(D\) into a list of local dimensions using \emph{prime factorization}, where in this case the factor is 2. Then a similar construction of the MPS is conducted (See Sec.~\ref{sec:sketch_MPS}). At each cut, we retain at most \(\chi_{\text{max}}\) singular values, where \(\chi_{\text{max}}=32 \text{ or } \infty\). Therefore the exact \(\chi = \min\{\chi_{\text{max}}, d_L, d_R\}\). FIG.~\ref{fig:random_matrix_entropy} shows this artificial entanglement profiling curve, where we also draw \(S_{\mathrm{Page}}\) as an analytic benchmark. FIG.~\ref{fig:random_matrix_entropy} here also provides a benchmark profiling about how an artificial entanglement profiling is visualized in our study. 

\textbf{Default Experimental Setups.} In the main text, we present the results in fine-tuning the Llama-3.2-1B-Instruct, an instruction-tuned model with 1 billion parameters from  Llama-3.2 family~\cite{touvron2023llama,dubey2024llama}, on a subset of the tulu-3-sft-mixture instruction-tuning (Tulu3) corpus~\cite{lambert2024tulu}. For more experiments in other model and dataset setups, see \textit{Appendix \ref{app:more_experimental_results}}. Unless otherwise stated, we adopt and keep the following settings: we uniformly subsample $5{,}000$ examples from the full dataset and reserve $5\%$ as a test split. For LoRA we use rank $r=256$ and a scaling hyperparameter $\alpha = 16$, applied only to the self-attention query and value projections (\(W_Q\) and \(W_V\))~\cite{hu2022lora}. This is to follow the default settings~\cite{schulman2025lora}, serving as representative cases to study the resulting behaviors. The learning rate is set to $3\times 10^{-4}$ for LoRA and \(3 \times 10^{-5}\) for FFT, since under this setup the testing loss is approximately equal~\cite{schulman2025lora}. The test loss is defined as the average token-level cross-entropy, \ie, negative log-likelihood (NLL). We quantify artificial entanglement using the von Neumann entropy~\cite{eisert2008area,orus2019tensor,berezutskii2025tensor} and denote it as \(S\), the standard measure of entanglement in quantum information theory.

\begin{figure}[!t]
    \centering
\includegraphics[width=\linewidth]{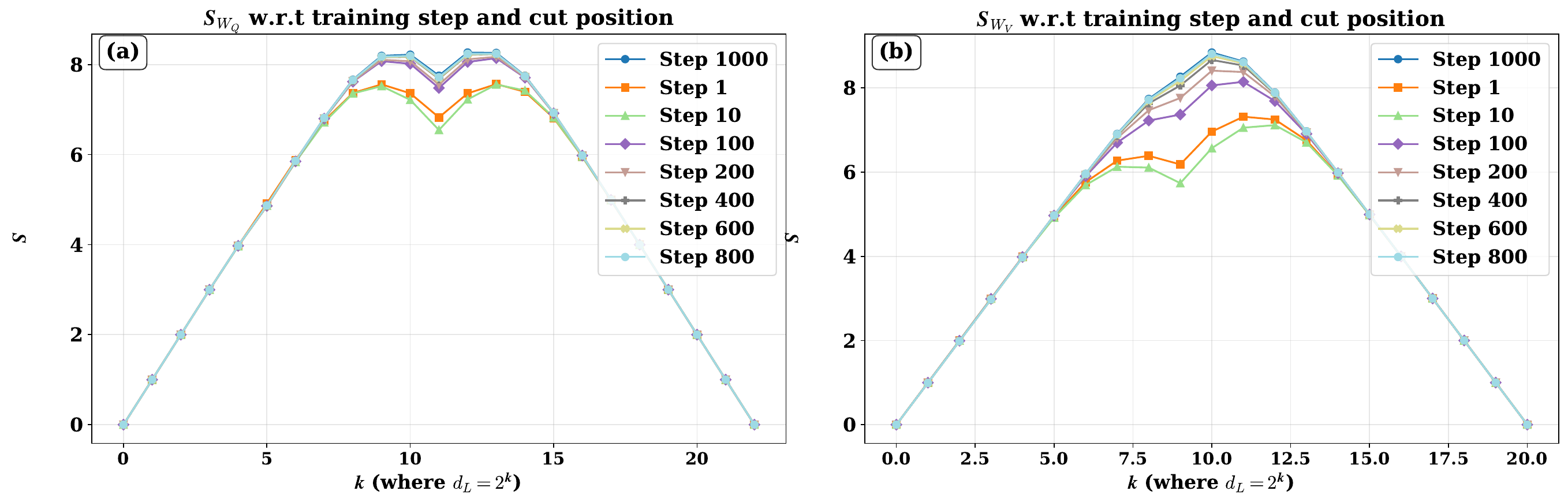}
    \caption{
Artificial entanglement profiling of $\Delta W_Q$ (a) and $\Delta W_V$ (b) across different bi-partition positions $k$ during FFT. Each curve corresponds to a training step, which shows how \(S\) evolves and gradually converges as fine-tuning progresses.
}
    \label{fig:full_wq_wv}
\end{figure}

\begin{figure}[!t]
    \centering
\includegraphics[width=\linewidth]{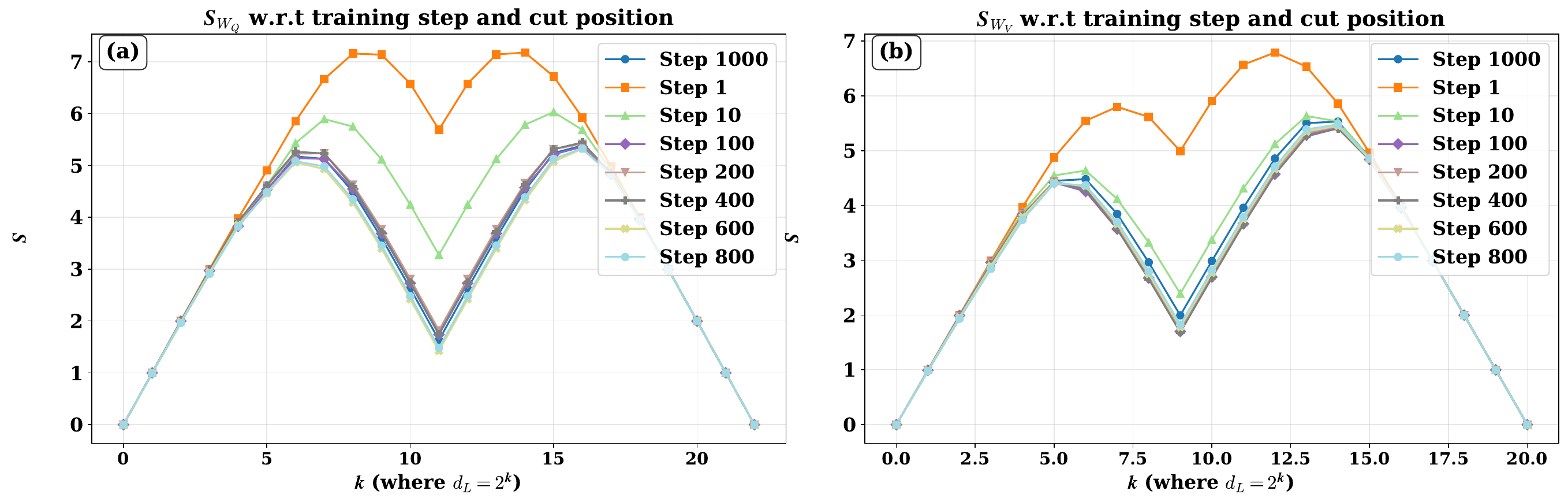}
    \caption{Artificial entanglement profiling of $\Delta W_Q$ (a) and $\Delta W_V$ (b) during LoRA fine-tuning, similar to FIG.~\ref{fig:full_wq_wv}.}
    \label{fig:lora_wq_wv}
\end{figure}

\textbf{Artificial Entanglement Profiling \(S_{\Delta W_Q}\) and \(S_{\Delta W_V}\) \footnote{For notational convenience, we interchangeably use $S_{W_Q}$ and $S_{W_V}$ to denote $S_{\Delta W_Q}$ and $S_{\Delta W_V}$ respectively.} when \(\alpha\) is small\footnote{Here ``small'' refers to relatively small values of the scaling parameter $\alpha$ (\eg, $\alpha = 16$) compared to larger values used in experiments (e.g., $\alpha = 256$ or higher).}.}
We compare the artificial entanglement profiling of FFT and LoRA across bonds, see FIG.~\ref{fig:full_wq_wv} and FIG.~\ref{fig:lora_wq_wv}. 
Both methods exhibit overall \textbf{volume-law} scaling of entanglement with a \emph{central dip}. We term it as the \emph{entanglement valley} (A detailed analysis is provided in \textit{Appendix \ref{app:SVD_dist_cut}}). However, their entanglement evolution w.r.t time differs substantially: In LoRA with a small scaling parameter $\alpha = 16$, the valley deepens as training progresses, making the valley more pronounced, whereas FFT gradually lifts the valley. This contrast is also type-dependent: in $W_V$, the valley almost disappears in later FFT steps, while in $W_Q$ it remains clearly visible even at late time. 

\begin{remark}
\textit{From the perspective of quantum information theory, a volume-law entanglement profile signals intrinsically high correlation that requires a large number of effective degrees of freedom to represent, exceeding what a low-rank parametrization such as LoRA can faithfully capture. Therefore, we propose that \textbf{LoRA is fundamentally unable to represent the full entanglement}, resulting in distinct  signatures.}
\end{remark}

\begin{figure}[t!]
    \centering
\includegraphics[width=\linewidth]{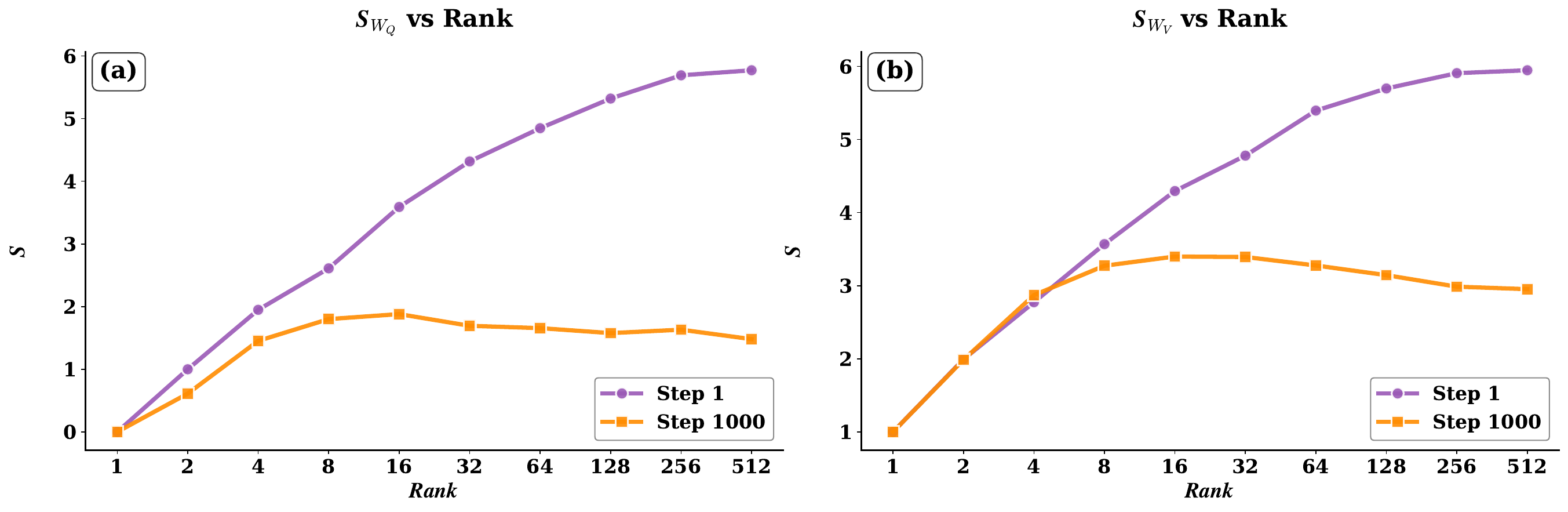}
    \caption{
Artificial entanglement profiling of \(\Delta W_Q\) and \(\Delta W_V\) as a function of LoRA rank \(r\) in different steps. 
(a) $S_{\Delta W_Q}$ at the early time of fine-tuning (Step~1) and the late time (Step~1000). 
(b) $S_{\Delta W_V}$ at the early time of fine-tuning (Step~1) and the late time (Step~1000). These reveal how differing \(r\) and training time jointly shape the entanglement structure of the learned updates.
}
\label{fig:S_WQ_WV_rank_Step_1_1000}
\end{figure}

\begin{figure}[t!]
    \centering
\includegraphics[width=\linewidth]{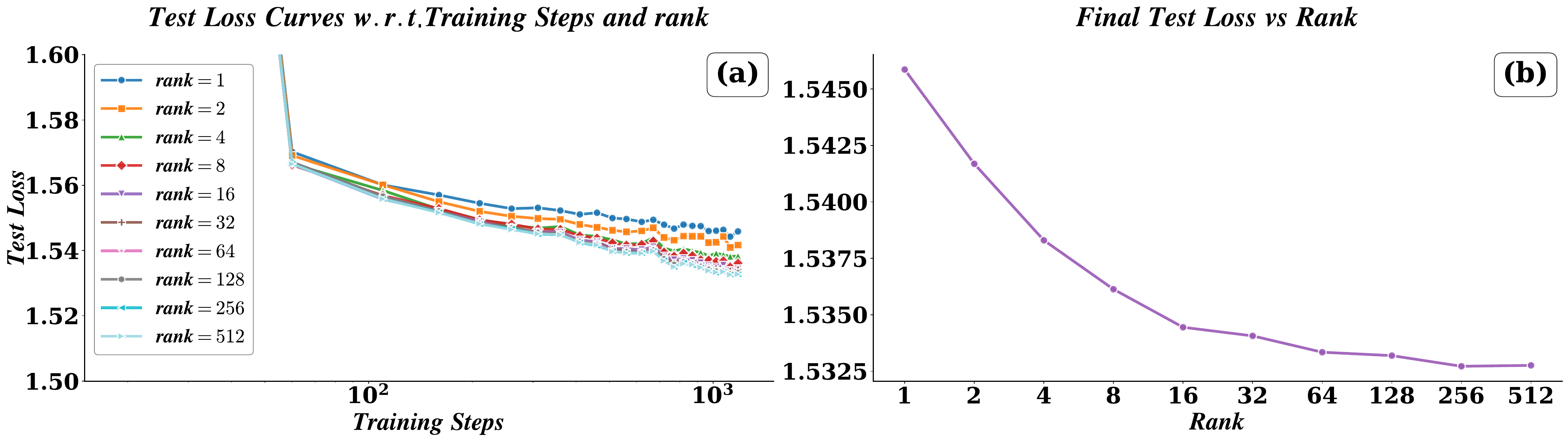}
    \caption{Test loss across training steps for different LoRA ranks \(r\). When \(r\) is small, increasing \(r\) generally improves performance. While increasing to a specific \(r\) the test loss  saturates and is not improved much. We keep other hyper-parameters fixed, such as the learning rate and \(\alpha\).}
\label{fig:test_loss_rank_time}
\end{figure}

\begin{figure}[t!]
    \centering
\includegraphics[width=\linewidth]{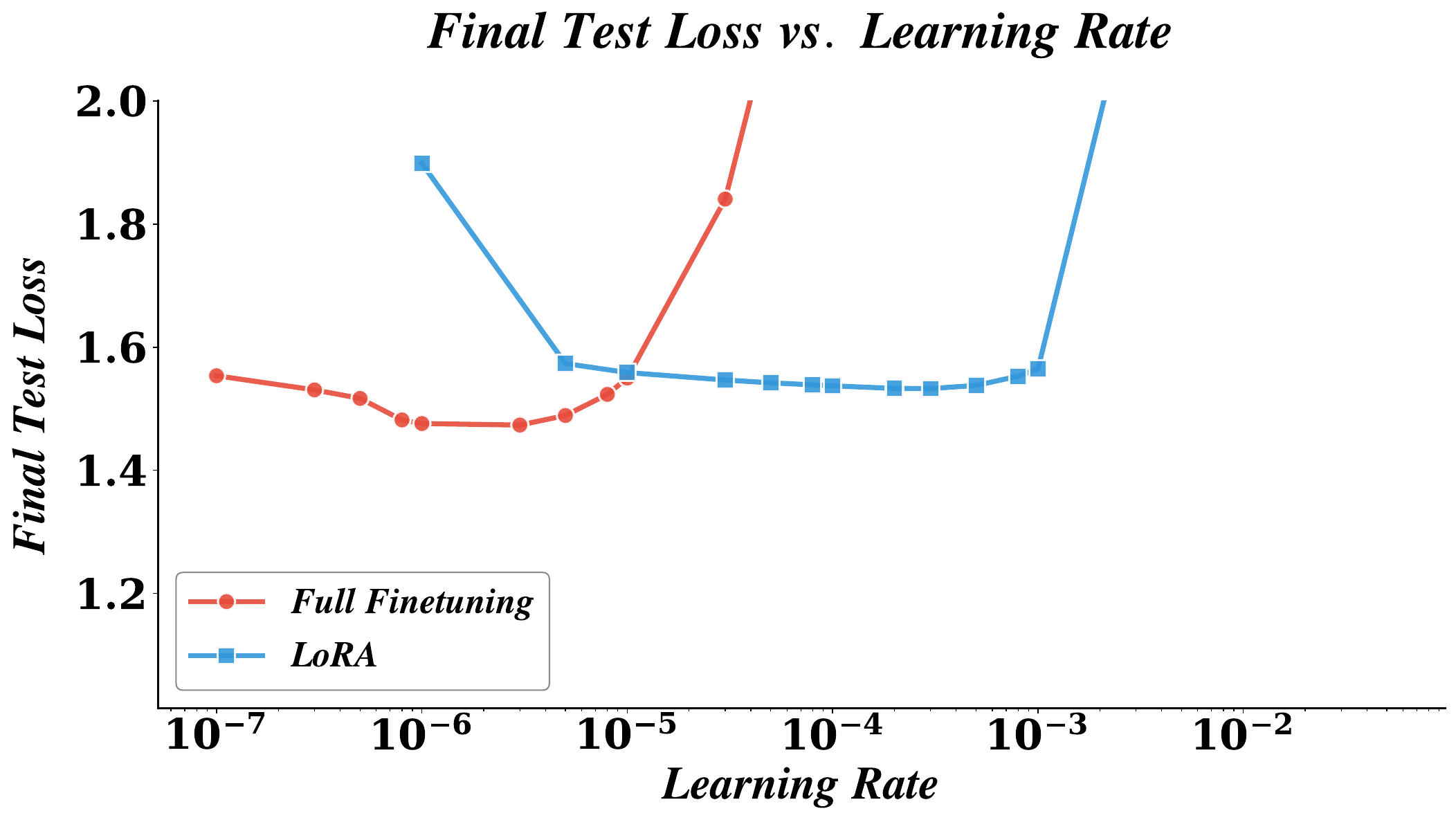}
    \caption{Final test loss as a function of learning rate for FFT and LoRA when the scaling parameter $\alpha$ is small (here $\alpha=16$). LoRA requires a larger learning rate than FFT to achieve the best test performance, consistent with previous observations~\cite{biderman2024lora,schulman2025lora}.}
\label{fig:final_test_loss_vs_lr}
\end{figure}

To study the relationship between rank settings in LoRA and the deepening of the valley, we examine how the LoRA rank \(r\) influences the artificial entanglement profiling of \(\Delta W_Q\) and \(\Delta W_V\). 
FIG.~\ref{fig:S_WQ_WV_rank_Step_1_1000} reports $S_{\Delta W_Q}$ and $S_{\Delta W_V}$ as a function of \(r\), evaluated both at the early time in step 1 and the late time in step 1000. FIG.~\ref{fig:S_WQ_WV_rank_Step_1_1000} shows that: at early time, \(S\) increases monotonically with $r$. This is consistent with viewing the LoRA rank \(r\) as an effective bond dimension $\chi$ of the MPS factorization: a larger $\chi$ enlarges the admissible subspace of tensor network states and raises the maximal entanglement entropy that can be represented across each bond. Consequently, as $r$ (and hence the effective \(\chi\) ) grows, the learned updates can exploit higher entanglement, and the measured \(S\) rises accordingly. In contrast, at late time \(S\) no longer grows with $r$. It instead quickly approaches a plateau and remains nearly unchanged beyond a certain rank. 
Heuristically, this suggests that in LoRA, optimization converges to a relatively low-entanglement solution whose intrinsic correlation is already captured at moderate bond dimension, so that additional entanglement capacity from larger $r$ remains largely unused and does not translate into further growth of entanglement entropy. In contrast, FFT does not exhibit such saturation, as it is not constrained by rank limitations and can represent higher entanglement. This contrast indicates that \textbf{LoRA is consistently unable to represent the full entanglement across different LoRA ranks}. 

We can further observe the empirical relation between how \(r\)  impacts the test loss (See FIG.~\ref{fig:test_loss_rank_time}) and the perspective from \(S\) (FIG.~\ref{fig:S_WQ_WV_rank_Step_1_1000}). In FIG.~\ref{fig:test_loss_rank_time} we visualize the test loss with respect to different \(r\) and training steps, and observe that increasing \(r\) does not necessarily indicate a large improvement in the test performance, along with the saturation of \(S\) (FIG.~\ref{fig:S_WQ_WV_rank_Step_1_1000}). Furthermore, investigating the best learning rate for FFT and LoRA (See FIG.~\ref{fig:final_test_loss_vs_lr}), we observe a similar phenomenon as~\cite{biderman2024lora,schulman2025lora}. From FIG.~\ref{fig:final_test_loss_vs_lr}, LoRA requires a larger learning rate to achieve the best test loss than FFT. These observations motivate us to further investigate how hyper-parameters, where we specifically investigate the scaling parameter $\alpha$, affects the entanglement structure and optimization behavior in LoRA. 

\begin{figure}[t!]
    \centering
\includegraphics[width=\linewidth]{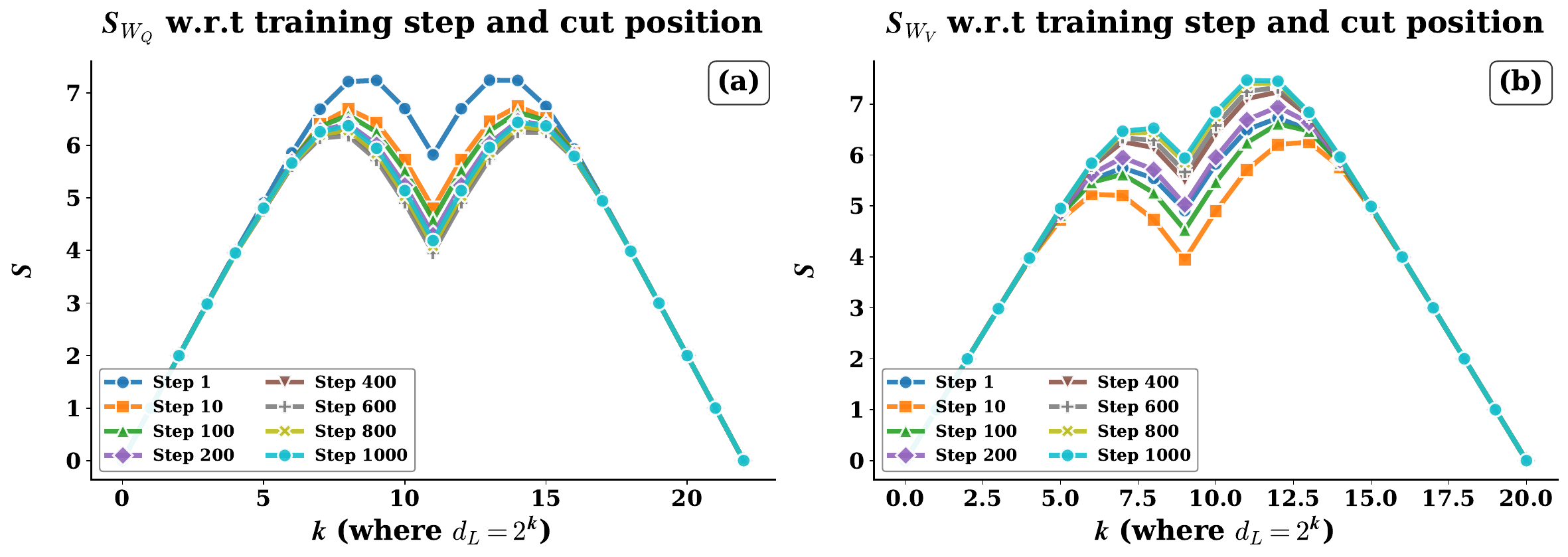}
    \caption{$S_{\Delta W_Q}$ and $S_{\Delta W_V}$ w.r.t the cut position $k$ (with $d_L = 2^k$) across training steps when increasing \(\alpha = 256\). Panels (a) and (b) show that at early time, the curves exhibit a pronounced ``entanglement valley'' structure centered near the middle cuts. As training progresses, the curves of \(\Delta W_Q\) are less deepened as the cases in smaller \(\alpha=16\), while the curves of \(\Delta W_V\) gradually lift and approach a saturated shape.}
\label{fig:large_alpha_WQ_WV}
\end{figure}

\begin{figure}[t!]
    \centering
\includegraphics[width=\linewidth]{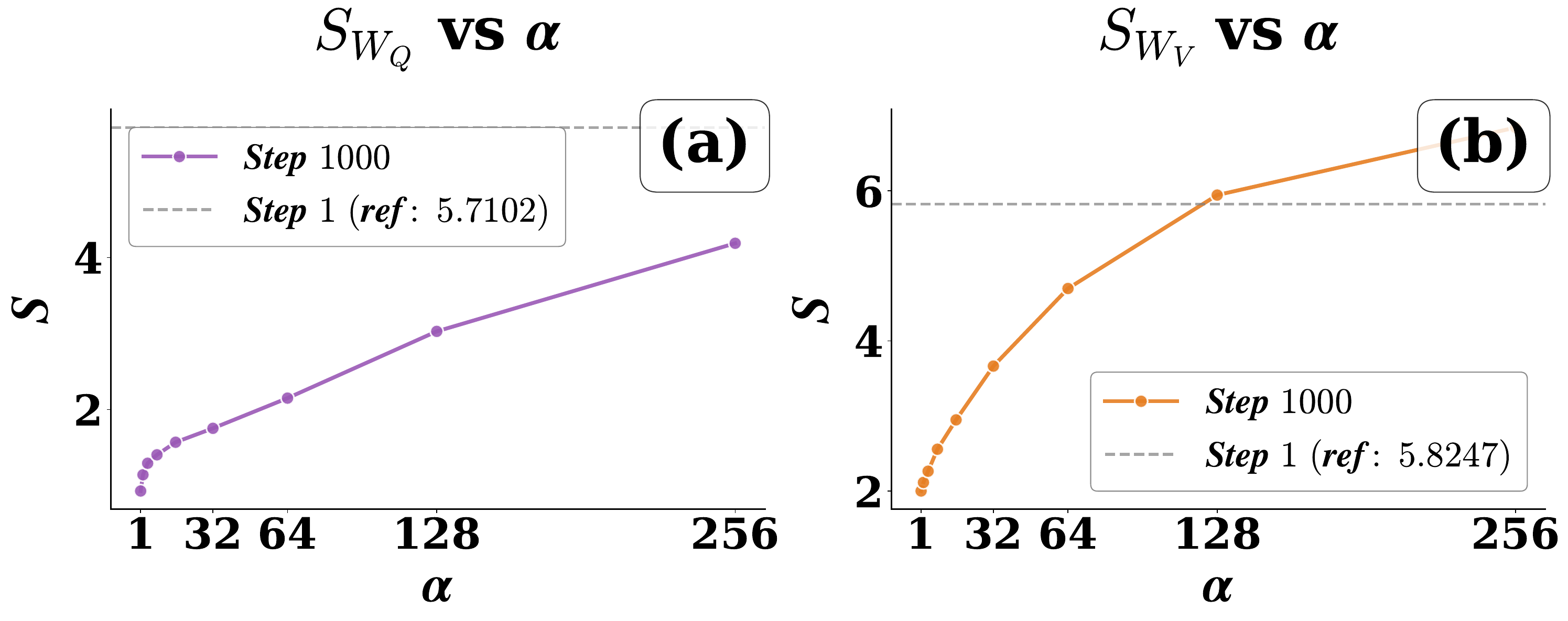}
    \caption{Entanglement entropy $S$ in the middle bi-partition for different $\alpha$. \(S\) in early time (Step = 1) serves as a baseline. By showing the converged solution at late time (Step = 1000), this demonstrates how \(\alpha \in \{1,2,4,8,16,32,64,128,256\}\) impacts the solution learned by LoRA.
}
\label{fig:entropy_evolution_by_alpha}
\end{figure}

\begin{figure}[t!]
    \centering
\includegraphics[width=\linewidth]{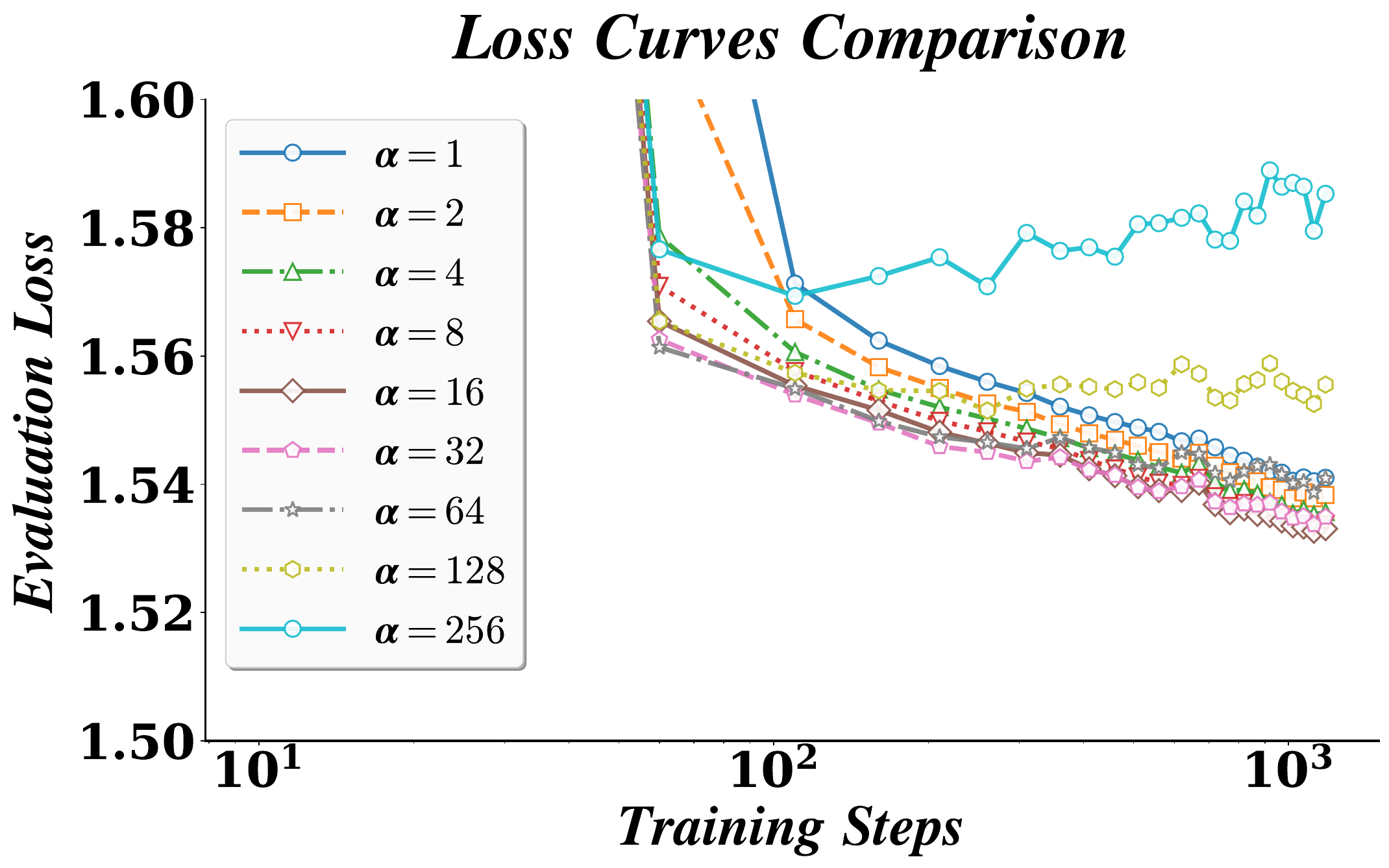}
    \caption{Test loss comparison across the scaling factor $\alpha$ in LoRA. $X$-axis is shown in log scale.}
\label{fig:loss_comparison_log_scale_all_alphas}
\end{figure}

\begin{figure}[t!]
    \centering
\includegraphics[width=\linewidth]{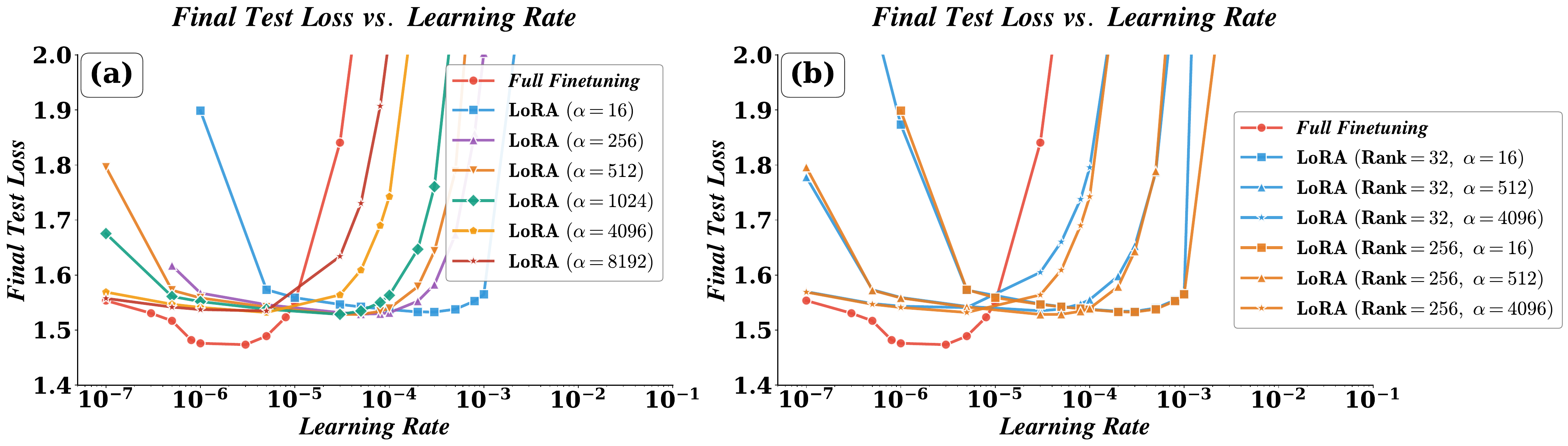}
    \caption{Final test loss with respect to learning rate under different LoRA configurations. (a) Comparison of full fine-tuning with LoRA at multiple $\alpha$ when rank \(r=256\) . Larger $\alpha$ values shift the optimal learning rate to be smaller, approaching the solutions from full-fine-tuning. (b) Joint variation of \(r\) and \(\alpha\) for LoRA. The settings \(r=32\) and \(r=256\) do not shift the optimal learning rate in different \(\alpha\).}
\label{fig:final_test_loss_vs_lr_with_varying_alpha}
\end{figure}

\textbf{Artificial Entanglement Profiling of \(S_{\Delta W_Q}\) and \(S_{\Delta W_V}\) when \(\alpha\) is large.} We now investigate the case when \(\alpha\) is increased. We first tune \(\alpha\) from 16 to 256. From FIG.~\ref{fig:large_alpha_WQ_WV} we see that in this setting LoRA will learn a solution that mitigates the deepening of the entanglement valley in \(\Delta W_Q\) and even lifts the valley in \(\Delta W_V\) during late time, suggesting a closer yet still distinct solution to the one learned by FFT. To better characterize this phenomenon in terms of both \(S\), learning rates and final test performance, we conduct below experiments: 

\underline{\textbf{(i)}} We visualize the entanglement entropy $S$ in the middle bi-partition position with respect to the training steps (Specifically evaluating at the early time in step 1 and the late time in step 1000 respectively) and \(\alpha\), see FIG.~\ref{fig:entropy_evolution_by_alpha};

\underline{\textbf{(ii)}} We visualize how \(\alpha\) changes the final test loss while keeping all other hyper-parameters fixed, see FIG.~\ref{fig:loss_comparison_log_scale_all_alphas}. 

\underline{\textbf{(iii)}} We sweep the learning rates by setting \(\alpha= \{16, 256, 512, 1024, 4096, 8192\}\), where the \(\alpha\) is set higher or equal to the one in FIG.~\ref{fig:final_test_loss_vs_lr}, see FIG.~\ref{fig:final_test_loss_vs_lr_with_varying_alpha} (a). 

We first draw some observations: As \(\alpha\) increases from 1, test performance is initially improved. Then, the test performance reaches the best when \(\alpha\) is around \(\alpha = 16\). However, the test performance is worse as \(\alpha\) increases. Meanwhile, from \(\alpha=1\) to \(\alpha=256\) we can observe an increasing fluctuation along the path. This indicates that the optimal learning rate in LoRA has been shifted, see FIG.~\ref{fig:final_test_loss_vs_lr_with_varying_alpha} (a): the learning rates corresponding to the best performance in LoRA and FFT are within less gap, suggesting more aligned fine-tuning solutions. And the optimal learning rates are not impacted by a relatively small rank, see FIG.~\ref{fig:final_test_loss_vs_lr_with_varying_alpha} (b). Combining these results we can see that the artificial entanglement signatures are actually sensitive to hyper-parameter settings, including \(\alpha\) (FIG.~\ref{fig:entropy_evolution_by_alpha}), time (FIG.~\ref{fig:lora_wq_wv}) and rank (FIG.~\ref{fig:S_WQ_WV_rank_Step_1_1000}). Besides, \textbf{entanglement signatures might serve as an indicator of the combined effect of multiple hyper-parameter choices} (See detailed discussion in Sec.~\ref{sec:discussion}). 

While the above analysis indicates that LoRA and FFT exhibit different artificial entanglement profiles, exhibiting different fine-tuning solutions (\eg, the updates of projection matrices and the optimal learning rate), a puzzle still remains: \textit{ Why are the resulting test losses still approximate even when the artificial entanglement profiles are different?} When only focusing on LoRA,  we can formulate this problem as: \textit{Why different settings of hyper-parameters that result in different artificial entanglement profiles in LoRA can still result in similar test loss (\eg, see FIG.~\ref{fig:loss_comparison_log_scale_all_alphas}, the final test loss is approximately the same when \(\alpha=1\) and \(\alpha = 64\), ignoring the fluctuations)?} 

To explore this problem, we extend our artificial entanglement analysis framework from only the updates of \(W_Q\) and \(W_V\) (\ie, \(\Delta W_Q\) and \(\Delta W_V\)) to attention matrix \(A(X_0)\) and attention outputs \(X\), specifically the \(XX^T\). This choice is natural because $XX^\top$ captures the pairwise correlations between token representations after attention mixing, analogous to how correlation is probed in quantum many-body systems via reduced density matrices (\ie, $XX^\top$ serves as the analogue of an output-state density operator).

\begin{figure}[t!]
    \centering
\includegraphics[width=\linewidth]{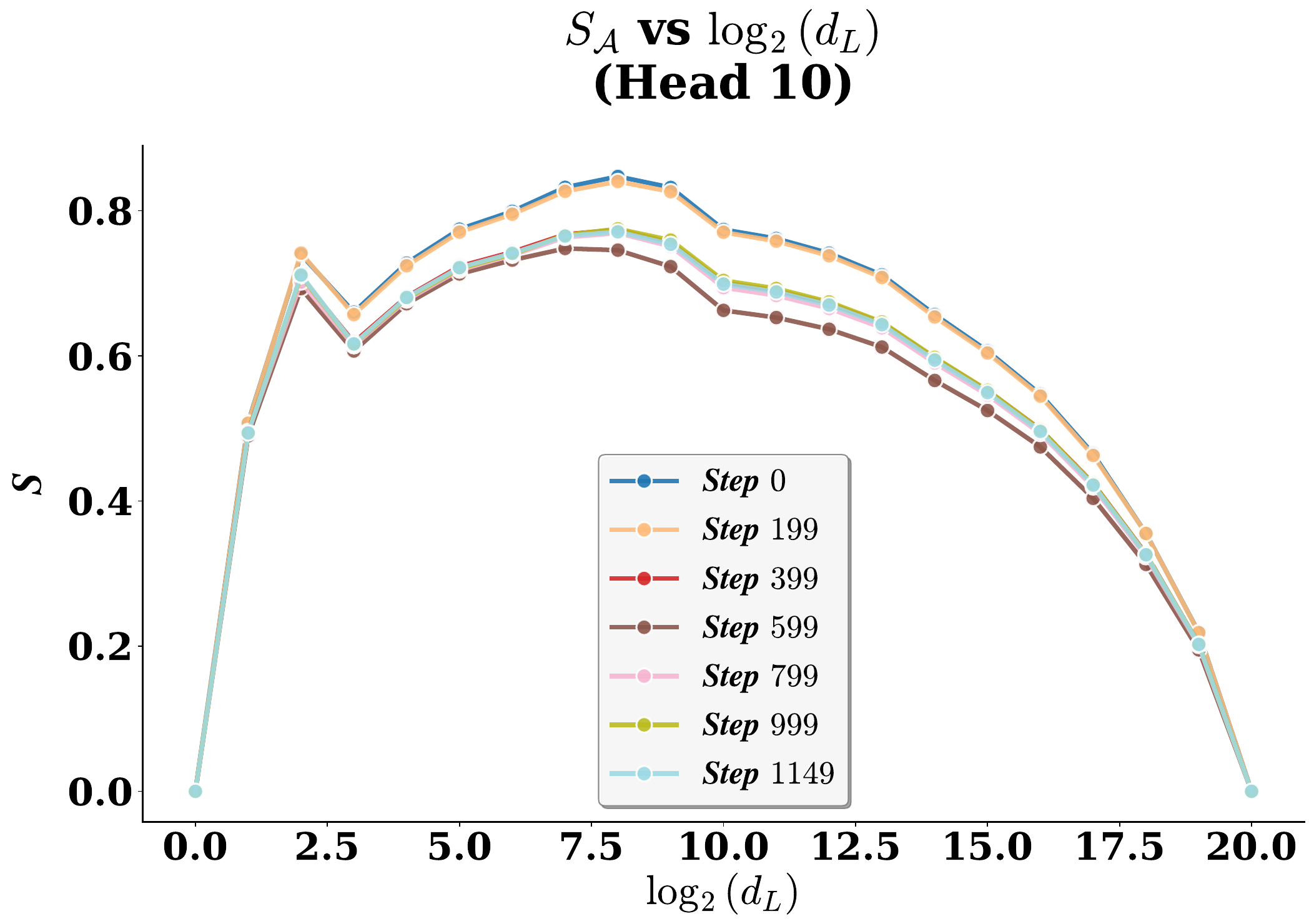}
    \caption{Artificial Entanglement Profiling of the attention matrix (Head 10) $S_A$ across training steps. 
$S_A$ remains uniformly low throughout training and exhibits a characteristic of an approximate area-law scaling, where the entropy grows only mildly that roughly following a shallow logarithmic rise. This indicates that the attention matrix operates in a strongly low-entanglement regime.}
\label{fig:attention_entropy_evolution_head10}
\end{figure}

\begin{figure}[t!]
    \centering
\includegraphics[width=\linewidth]{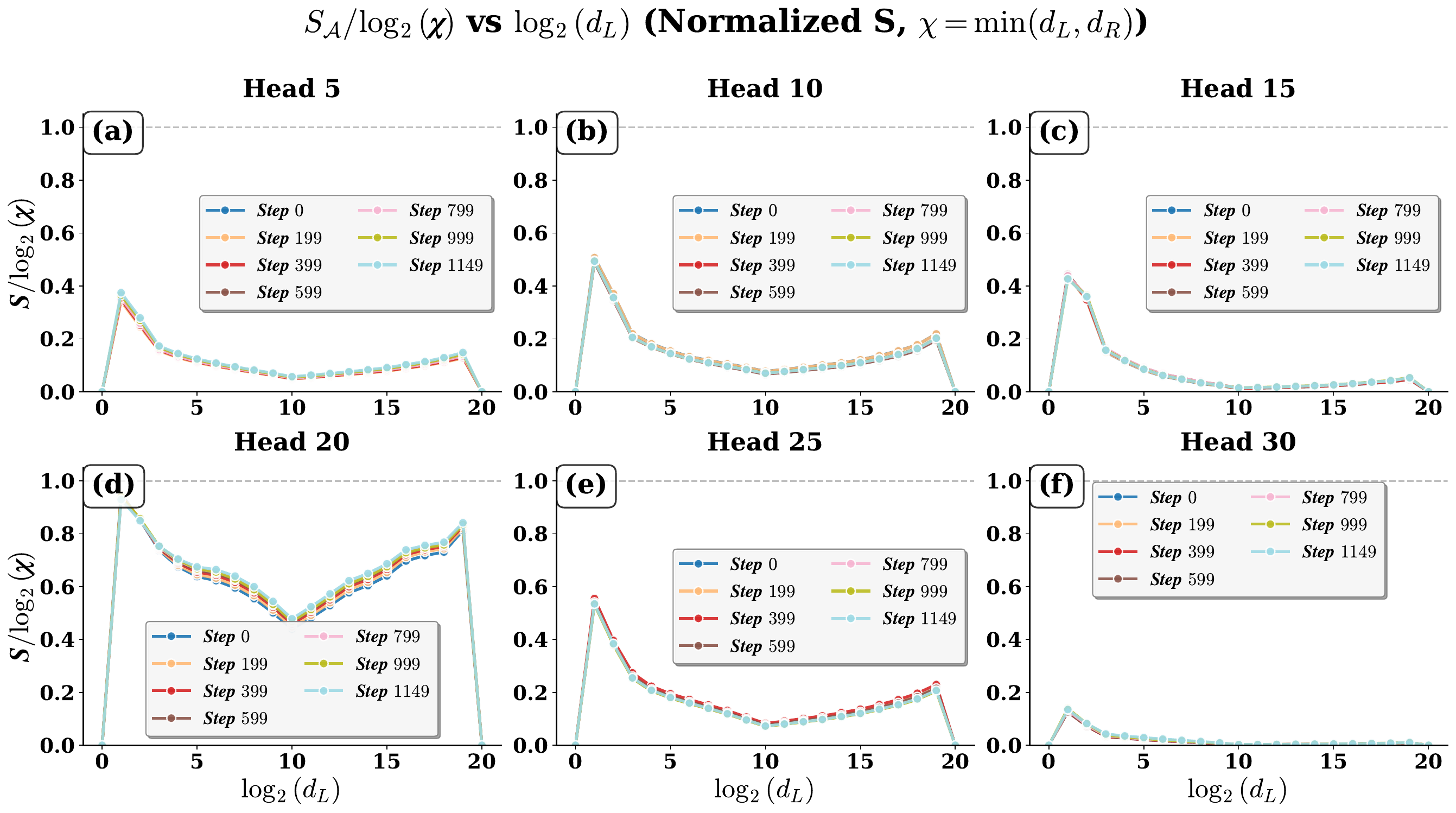}
    \caption{Normalized entanglement entropy $S_A/\log(\chi)$ of six representative attention heads across training steps, where $\chi=\min(d_L,d_R)$ denotes the bond dimension, and $\log(\chi)$ represents the theoretical maximum entanglement entropy. Most heads exhibit consistently mild scaling throughout training, far from the volume law, indicating that the effective correlations encoded by the attention mechanism remain far below the theoretical maximum.}
\label{fig:attention_entropy_normalized_evolution_all_heads}
\end{figure}

\begin{figure}[t!]
    \centering
\includegraphics[width=\linewidth]{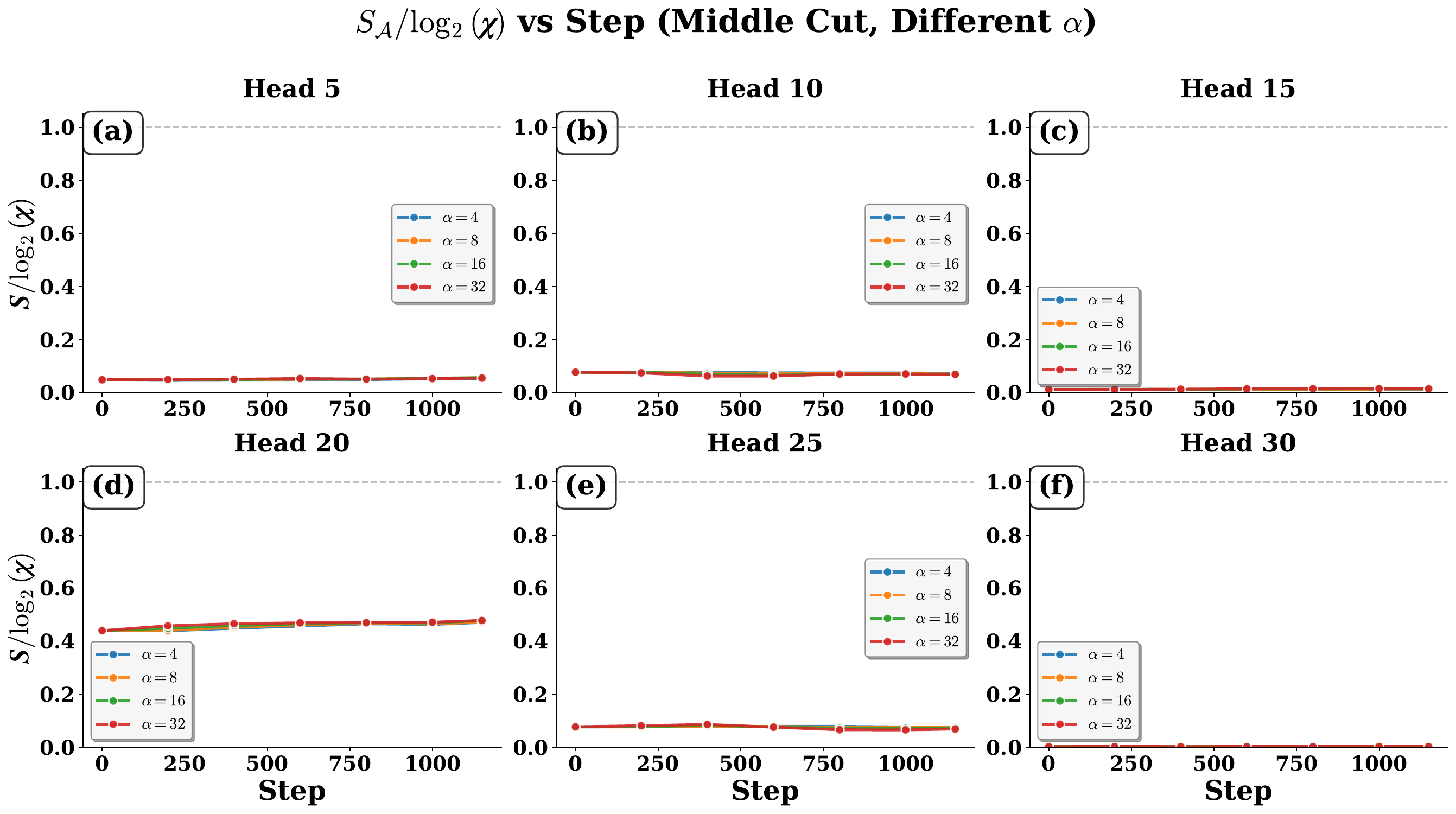}
    \caption{
Normalized entanglement entropy $S_A/\log(\chi)$ of six representative 
attention heads across training steps and LoRA scaling coefficients $\alpha$. 
\(S\) remains significantly below the theoretical maximum and shows negligible 
variation with either training time or the choice of $\alpha$, suggesting a no-hair like behavior.}
\label{fig:attention_entropy_normalized_by_alpha_all_heads}
\end{figure}

\begin{figure}[t!]
    \centering
\includegraphics[width=\linewidth]{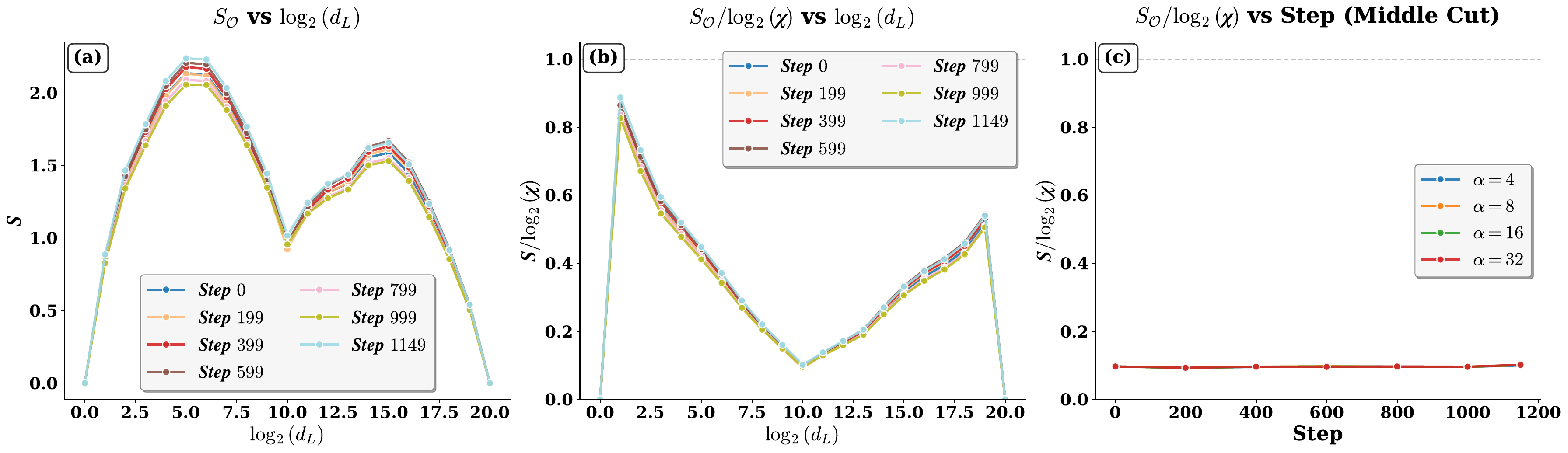}
\caption{
Entanglement entropy $S_{\mathcal{O}}$ of the output operator 
$\mathcal{O} = XX^{\top}$ across bi-partition positions and training steps.  
Panels (a) and (b) show that $S_{\mathcal{O}}$ remains significantly below the theoretical maximum. Panel (c) illustrates that this behavior is invariant across the variation of scaling coefficient $\alpha$ and training steps,
indicating a no-hair like behavior as well.}
\label{fig:attention_output_entropy_combined}
\end{figure}

\textbf{Artificial Entanglement Profiling of \(S_A\) and \(S_{XX^T}
\).} Generally, for an input $\mathcal{X}_0 \in \mathbb{R}^{B \times T \times d_{\mathrm{model}}}$, we have the attention tensor \(\mathcal{A}(\mathcal{X}_0) \in \mathbb{R}^{B \times H \times T \times T}\) where \(B\) is the batch size, \(T\) is the number of tokens, \(d_{\mathrm{model}}\) is the token embedding dimension and \(H\) is the number of heads. Here we select samples where the sequence length \(T\) can be directly factorized by 2 (\eg, \(T=1024\)). Then, we focus on each attention matrix in separate attention head. Therefore, for each head, we obtain a matrix \(A \in \mathbb{R}^{T \times T}\) that can be formalized to an MPS to profile the artificial entanglement. Here we show an entanglement profiling specifically in Head~10 (More heads are shown in normalized form in FIG.~\ref{fig:attention_entropy_normalized_evolution_all_heads}), see
FIG.~\ref{fig:attention_entropy_evolution_head10}. Across all training steps, the entanglement entropy $S_A$ remains uniformly low. Such a profile is consistent with an approximate \textbf{area law scaling}: \(S\) exhibits a shallow, nearly logarithmic rise as the bi-partitioning position moving (in this case from right to left across the MPS, followed by a saturation around the central partition). Though not a strict area law scaling, the profiling deviates much from volume law and can often be treated as area law with logarithmic correction~\cite{eisert2008area}. This behavior indicates that the correlations encoded by this attention head are highly localized and remains far from the high-entanglement (volume-law) regime. FIG.~\ref{fig:attention_entropy_normalized_evolution_all_heads} shows that the phenomenon is prevalent across multiple heads through normalizing the \(S\) via dividing it by the theoretical allowed entanglement \(\log(\chi)\), where \(\chi = \min(d_L, d_R)
\). Notably, \(\chi\) changes across different cuts, so as the theoretical allowed entanglement \(\log(\chi)\). Specifically, \(\chi\) reaches its maximum at the central partition (where \(d_L = d_R\)), making the theoretical allowed entanglement largest at the middle cuts. This variation in the normalization factor makes the dip in the middle of the normalized entropy curves more pronounced. Besides, a \textbf{``no hair'' property exhibits:} the entanglement structures in attention are largely insensitive with respect to training steps and \(\alpha\), see FIG.~\ref{fig:attention_entropy_normalized_by_alpha_all_heads}. In other words, even though different hyper-parameters or fine-tuning methods lead to different artificial entanglement profiling~(\eg, FIG.~\ref{fig:full_wq_wv}, FIG.~\ref{fig:lora_wq_wv}, FIG.~\ref{fig:entropy_evolution_by_alpha}), from the
resulting attention matrices those entanglement signature variations do not exhibit. Similar phenomena can be observed in the attention output operator \(\mathcal{O} = X X^T\), see FIG.~\ref{fig:attention_output_entropy_combined}. We suggest two reasons might prompt these phenomena: First, the entanglement between text tokens is significantly low; Second, masking operation results in many values to be zero after the \textit{softmax} operation due to the auto-regressive nature, further resulting in a relatively low entanglement structure. Regarding the second, we conduct an experiment to see how masking operation influences the results, see \textit{Appendix \ref{app:abla_mask}} for details.

\subsection{A Perspective Based on Random Matrix Theory}
\label{sec:random_matrix_theory}

We study the artificial entanglement of the attention matrix and the attention output under an asymptotic model based on random matrix theory. Under assumptions consistent with~\cite{saadamind}, we explain the area law scaling with a logarithmic correction observed in the artificial entanglement of the attention matrix. Furthermore, we identify the conditions under which the attention output exhibits low entropy $S$ as the number of tokens increases, and discuss the relationship between these findings and rank collapse in width~\cite{saadamind}. We sketch the theories here, and leave full preliminaries and details in \textit{Appendix~\ref{app:quantum_entanglement} and Appendix~\ref{app:theorem_and_proof}}. While the main focus is von Neumann entropy that is the standard measure in quantum information theory, our theoretical results also include the analysis of Rényi-2 entropy that depends only on the purity $\mathrm{Tr}(\rho^2)$ of the reduced state, since Rényi-2 entropy provides a more tractable quantity for theoretical analysis and offers simpler closed-form expressions. Additionally, Rényi-2 entropy provides lower bounds on von Neumann entropy, useful for theoretical characterization. For a detailed discussion of the relationship between von Neumann and Rényi entropies, see \textit{Appendix~\ref{app:von_neumann_renyi}}.

\textbf{The Attention Cardy Formula: Criticality in the Bulk.} Following the outlier–bulk law established for softmax attention at isotropic initialization~\cite{saadamind}, we decompose the attention matrix as
$
A = \tfrac{1}{T}\mathbf{1}\mathbf{1}^\top + A^\perp.
$
Here, the rank-$1$ component \(\tfrac{1}{T}\mathbf{1}\mathbf{1}^\top\) captures the mean-field behavior, while the residual $A^\perp$ governs the fluctuations, converging to a quartercircular law (a deterministic distribution of singular values shaped like a quarter-circle in the large-$T$ limit, see \textit{Appendix~\ref{app:theorem_and_proof}} for details). Below
we establish a logarithmic scaling for entanglement entropy, mathematically analogous to the celebrated Cardy formula in conformal field theory (CFT)~\cite{calabrese2004entanglement,calabrese2009entanglement}:

\begin{theorem}[The Attention Cardy Formula]
\label{thm:attn_entropy_main}
Under Assumption~\ref{ass:outlier_bulk}, let \(T\) be the sequence length (number of tokens).
In the limit $T\to\infty$, the von Neumann entropy scales as:
\begin{equation}
\label{eq:cardy_law}
    S(A) = \mathcal{C}_{\mathrm{attn}} \log T + \text{const} + o(1),
\end{equation}
where the prefactor $\mathcal{C}_{\mathrm{attn}}$ (termed as the \textbf{Effective Attention Charge}) is determined by the bulk spectrum spread parameter $\sigma>0$ of the limiting quartercircular law $Q_\sigma$ for the rescaled singular values of $\sqrt{T}\,A^\perp$:
\begin{equation}
    \mathcal{C}_{\mathrm{attn}} := \frac{\sigma^2}{1+\sigma^2}.
\end{equation}
See Assumption~\ref{ass:outlier_bulk} for details on $\sigma$.
\end{theorem}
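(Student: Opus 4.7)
The plan is to exploit the outlier--bulk decomposition $A = \tfrac{1}{T}\mathbf{1}\mathbf{1}^\top + A^\perp$ from Assumption~\ref{ass:outlier_bulk} and perform an asymptotic analysis of the Schmidt spectrum that defines $S(A) = -\sum_i \lambda_i^2 \log \lambda_i^2$, where $\lambda_i^2 = s_i^2/\|A\|_F^2$ are the normalized squared singular values of $A$. First, I would separate the spectrum into its two blocks. The mean-field piece contributes a single singular value equal to $1$, since $\|\mathbf{1}\|^2 = T$. The residual contributes $T-1$ bulk singular values $s_i = \tilde s_i/\sqrt{T}$, where $\{\tilde s_i\}$ are the singular values of $\sqrt{T}\,A^\perp$ whose empirical distribution converges to the quartercircular law $Q_\sigma$. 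Using orthogonality of the outlier and the bulk in this asymptotic regime, the squared Frobenius norm decomposes as $\|A\|_F^2 = 1 + \tfrac{1}{T}\sum_i \tilde s_i^2$, and the second moment $\int x^2\, dQ_\sigma(x) = \sigma^2$ gives $\|A\|_F^2 \to 1 + \sigma^2$.

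Second, I would split the entropy $S(A) = S_{\mathrm{out}} + S_{\mathrm{bulk}}$ along these blocks. The outlier satisfies $\lambda_0^2 = 1/(1+\sigma^2) + o(1)$, contributing an $O(1)$ term $\log(1+\sigma^2)/(1+\sigma^2)$. For the bulk, substituting $\lambda_i^2 = \tilde s_i^{\,2}/(T(1+\sigma^2))$ and expanding the logarithm yields
\begin{equation*}
S_{\mathrm{bulk}} = \tfrac{1}{1+\sigma^2}\!\left[\log T \cdot \tfrac{1}{T}\!\sum_i \tilde s_i^{\,2} \;-\; \tfrac{1}{T}\!\sum_i \tilde s_i^{\,2}\log \tilde s_i^{\,2} \;+\; \log(1+\sigma^2)\cdot \tfrac{1}{T}\!\sum_i \tilde s_i^{\,2}\right].
\end{equation*}
Applying the moment convergences $\tfrac{1}{T}\sum_i \tilde s_i^{\,2} \to \sigma^2$ and $\tfrac{1}{T}\sum_i \tilde s_i^{\,2}\log \tilde s_i^{\,2} \to \int x^2 \log x^2\, dQ_\sigma(x)$, the $\log T$ coefficient collects to $\sigma^2/(1+\sigma^2) = \mathcal{C}_{\mathrm{attn}}$, and the remaining pieces assemble into an $O(1)$ constant. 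Adding $S_{\mathrm{out}}$ yields the claimed expansion.

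The main obstacle is upgrading the weak convergence of the empirical spectral distribution to convergence of the relevant \emph{unbounded} test-function integrals, since weak convergence alone only handles bounded continuous test functions, whereas I need $f(x)=x^2$ (unbounded above) and $f(x) = x^2\log x^2$ (logarithmically singular at $0$). For the $x^2$ moment I would use a standard tail-truncation argument combined with an operator-norm bound on $\sqrt{T}\,A^\perp$, of the type typically accompanying results like Assumption~\ref{ass:outlier_bulk}. The more delicate step is the $x^2 \log x^2$ integral: the limit integral is finite because $Q_\sigma$ has bounded density and $x^2\log x^2$ vanishes at $0$, but on the empirical side I must rule out a pileup of near-zero singular values. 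I would do this by a cutoff argument that absorbs $\{\tilde s_i < \varepsilon\}$ into an error of order $\varepsilon^{3}\log(1/\varepsilon)$, using that their count is $O(\varepsilon T)$ since $Q_\sigma([0,\varepsilon]) = O(\varepsilon)$, and then letting $\varepsilon \to 0$ after $T \to \infty$. A secondary technical point is confirming that the cross terms dropped in the outlier--bulk split are subleading, which follows from Weyl-type rank-one interlacing for singular values.
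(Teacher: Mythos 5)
Your proposal follows essentially the same route as the paper's proof: the outlier--bulk split of the spectrum, the Frobenius decomposition $\|A\|_F^2 = 1 + \|A^\perp\|_F^2 \to 1+\sigma^2$, the separation of the entropy into an $O(1)$ outlier term and a bulk sum, and the expansion of $\log p_i$ to extract the $\log T$ coefficient $\sigma^2/(1+\sigma^2)$ are all exactly the steps taken there, and your appeal to rank-one interlacing to control the outlier/bulk index mismatch matches the paper's rank-inequality argument. The only difference is that your truncation arguments for the $x^2$ and $x^2\log x^2$ integrals are unnecessary here, since Assumption~\ref{ass:outlier_bulk} directly postulates convergence of those empirical moments ($m_2$ and $m_{2\log}$) rather than only weak convergence of the empirical measure.
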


Specifically, $\sigma$ controls the width of the bulk spectrum distribution: larger $\sigma$ corresponds to a more spread-out distribution of singular values in the bulk, leading to higher entanglement entropy. The parameter $\sigma$ is determined by the second moment $m_2$ of the quartercircular law via $m_2 = \sigma^2$, which in turn relates to the Frobenius norm of $A^\perp$ as $\|A^\perp\|_F^2 \to \sigma^2$ in the large-$T$ limit. For results on Rényi-2 entropy and its relationship to von Neumann entropy, see \textit{Appendix~\ref{app:von_neumann_renyi}}.

\textit{Sketch of proof.}
The decomposition $\|A\|_F^2 = 1 + \|A^\perp\|_F^2$ combined with the quartercircular limit implies $\|A^\perp\|_F^2 \to \sigma^2$. Since the operator norm of $A^\perp$ scales as $O(T^{-1/2})$, the bulk singular values satisfy $s_i(A^\perp) = O(T^{-1/2})$ for $i \ge 2$, and the entanglement spectrum weights are $p_i = s_i(A)^2/\|A\|_F^2 = O(1/T)$ for $i \ge 2$.
For the von Neumann entropy, the tail contribution $-\sum_{i\ge 2} p_i\log p_i$ is computed using the representation $s_i(A^\perp)^2 = (x_i^{(T)})^2/T$ where $x_i^{(T)}$ are the rescaled singular values converging to the quartercircular law. Expanding $\log p_i = \log(s_i(A^\perp)^2/\|A\|_F^2) = \log(x_i^{(T)})^2 - \log T - \log\|A\|_F^2$, the $\log T$ term emerges from the sum $\sum_i (x_i^{(T)})^2/T \to \sigma^2$, yielding the prefactor $\sigma^2/(1+\sigma^2)$ in the logarithmic scaling. The convergence follows from treating the normalized sum as a Riemann sum that converges to the integral defined by the limiting quartercircular distribution. This spectral accumulation is structurally identical to the density of states in gapless quantum systems. Full details are in \textit{Appendix~\ref{app:theorem_and_proof}}.

\begin{remark}[From CFT to Attention Cardy Formula]
We draw a parallel between our result and the Cardy formula ($S \sim \frac{c}{3}\log L$) for critical quantum systems. In physics, strictly gapped systems follows a strict ``area law" where \(S\) saturates to a constant ($S \sim O(1)$), implying a finite correlation length. In contrast, Eq.~\eqref{eq:cardy_law} shows that the attention mechanism follows an \textbf{area law with a logarithmic correction} ($S \sim \log T$).
This logarithmic divergence is the hallmark of a \textit{critical phase} \cite{eisert2008area}, allowing the model to maintain non-vanishing dependencies between distant tokens. Therefore
we refer to this critical scaling behavior as the ``Attention Cardy Formula'', highlighting its role in enabling long-context capabilities beyond the limitations of strict area-law saturation.
\end{remark}

\textbf{A bridge between stable rank and entanglement.} To study the \emph{output} $X = AV$, we first relate entanglement to the stable rank
of $\Sigma = XX^\top$.  The following lemma shows that when the stable rank~\cite{saadamind} is close to $1$, the entanglement must be small.

\begin{lemma}[Stable rank implies small entanglement]
\label{lem:sr_to_entropy_main}
Let $\Sigma = XX^\top$ with eigenvalues $\lambda_1 \ge \cdots \ge \lambda_T$ and stable rank $r_{\mathrm{stable}} = \mathrm{Tr}(\Sigma^2) / \lambda_1^2$. Let $\eta = r_{\mathrm{stable}}-1$ and $\delta_1 := \sum_{i\ge 2}\frac{\lambda_i}{\lambda_1}$ (the sum of ratios of non-dominant to dominant eigenvalues). Then
\begin{equation}
S(X) \;\le\; h_2(\delta) + \delta\log(T-1),
\end{equation}
where $\delta_1 \le \sqrt{(T-1)\eta}$, $\delta=\min\{1,\delta_1\}$, and $h_2(u):=-u\log u-(1-u)\log(1-u)$. For Rényi-2 entropy bounds, see \textit{Appendix~\ref{app:von_neumann_renyi}}.
\end{lemma}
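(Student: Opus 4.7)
The plan is to establish the two stated facts in order. Let $q_i := \lambda_i/\lambda_1$ for $i\ge 1$, so that $q_1=1$, $\delta_1=\sum_{i\ge 2}q_i$, and $\eta = \sum_{i\ge 2}q_i^2$. The first claim $\delta_1 \le \sqrt{(T-1)\eta}$ is immediate from Cauchy--Schwarz applied to the sequence $(q_i)_{i\ge 2}$:
\begin{equation}
\delta_1^2 \;=\; \Bigl(\sum_{i\ge 2} q_i\Bigr)^2 \;\le\; (T-1)\sum_{i\ge 2} q_i^2 \;=\; (T-1)\eta.
\end{equation}

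For the entropy bound, I normalize the Schmidt weights $p_i = \lambda_i/\mathrm{Tr}(\Sigma) = q_i/(1+\delta_1)$, so $p_1 = 1/(1+\delta_1)$ and the tail mass is $Q := \sum_{i\ge 2}p_i = \delta_1/(1+\delta_1)$. The natural tool is the Fannes--Audenaert inequality applied against the point-mass distribution $(1,0,\dots,0)$, which coincides with the Shannon grouping decomposition
\begin{equation}
S(X) \;=\; h_2(Q) \;+\; Q\cdot H(r), \qquad r_i := p_i/Q \ \text{for}\ i\ge 2,
\end{equation}
together with the trivial bound $H(r)\le \log(T-1)$ since $r$ is a distribution on $T-1$ outcomes. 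This yields the intermediate inequality $S(X) \le h_2(Q) + Q\log(T-1)$.

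It remains to upgrade this to the stated form involving $\delta = \min\{1,\delta_1\}$. I would first verify the monotone relation $Q \le \delta$ in both branches: when $\delta_1 \le 1$, $Q = \delta_1/(1+\delta_1) \le \delta_1/2 \le \delta_1$; when $\delta_1 > 1$, $Q < 1 = \delta$. The linear contribution is then controlled by $Q\log(T-1)\le \delta\log(T-1)$. For the binary-entropy piece, $Q \le \tfrac{1}{2}$ whenever $\delta_1 \le 1$, and $\delta \le \tfrac12$ whenever $\delta_1 \le \tfrac{1}{2}$, so monotonicity of $h_2$ on $[0,\tfrac12]$ immediately gives $h_2(Q)\le h_2(\delta)$ in the regime of main interest (near stable rank one).

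The main obstacle is the transitional regime $\delta_1 \in (\tfrac{1}{2},1)$, where $h_2$ is non-monotone and the naive monotonicity argument breaks. My plan for handling it is to absorb the discrepancy into the dominant linear term: since $h_2$ is globally bounded by $\log 2$ and the gap $\bigl[h_2(Q)-h_2(\delta)\bigr]_+$ is at most $\log 2$, it can be dominated by $(\delta - Q)\log(T-1)$ once $\log(T-1)$ is comparable to a constant, which is precisely the asymptotic regime in which the lemma is applied. An alternative, more uniform route is to bypass Fannes--Audenaert and bound $S(X)$ directly by expanding $-p_1\log p_1 - \sum_{i\ge 2}p_i\log p_i$ with $p_i \le \delta_1 p_1$ to produce the estimate in $\delta_1$-variables from scratch. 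Either way, the only nontrivial step is reconciling the non-monotonicity of $h_2$ with the packaging of the bound in terms of $\delta$; the rest reduces to Cauchy--Schwarz and the standard Shannon/Fannes--Audenaert inequality.
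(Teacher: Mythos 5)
Your Cauchy--Schwarz step is exactly the paper's Step~1, and your grouping identity $S = h_2(Q) + Q\,H(r)$ with $Q=\delta_1/(1+\delta_1)$ gives a correct intermediate bound $S \le h_2(Q)+Q\log(T-1)$ that is in fact slightly sharper than the stated one. The paper reaches the same endpoint by a different packaging: it only retains $p_1 \ge 1-\delta$ (equivalently, tail mass $\le \delta$) and then asserts that among all distributions on $T$ points with tail mass at most $\delta$ the entropy is maximized at tail mass exactly $\delta$ spread uniformly. The content of that assertion is precisely the resolution of the obstacle you flag: one should not compare $h_2(Q)$ with $h_2(\delta)$ and $Q\log(T-1)$ with $\delta\log(T-1)$ separately, but observe that the combined function $g(u):=h_2(u)+u\log(T-1)$ has derivative $\log\!\bigl(\tfrac{(1-u)(T-1)}{u}\bigr)$, hence is nondecreasing on $[0,\,1-\tfrac1T]$. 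Since $Q\le\delta$, the inequality $g(Q)\le g(\delta)$ then follows immediately for every $\delta\le 1-\tfrac1T$, with no case split on $\delta_1$ and no appeal to ``$\log(T-1)$ comparable to a constant.'' Your fallback of absorbing $[h_2(Q)-h_2(\delta)]_+$ into $(\delta-Q)\log(T-1)$ is the same mechanism stated less cleanly, and as written it does not close the proof for small $T$. (Also, a small slip: for $\delta_1\le 1$ one has $\delta_1/(1+\delta_1)\ge\delta_1/2$, not $\le$; the conclusion $Q\le\delta_1$ is of course still trivially true.)

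One caveat that affects both your argument and the paper's: the monotonicity of $g$ fails beyond $u=1-\tfrac1T$, and in that regime the stated bound is genuinely false rather than merely hard to prove --- for a flat spectrum $\lambda_1=\cdots=\lambda_T$ one has $\delta_1=T-1$, $\delta=1$, $S=\log T$, yet the right-hand side is $h_2(1)+\log(T-1)=\log(T-1)$. So the transitional regime you worried about cannot be fully repaired; the lemma implicitly requires $\delta\le 1-\tfrac1T$ (automatic in its intended application, where $\delta_1=O(T^{-1})$). With the $g$-monotonicity observation inserted and that restriction acknowledged, your proof is complete and essentially coincides with the paper's.
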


\textit{Sketch of proof.}
By Cauchy--Schwarz, $(\sum_{i\ge2}\lambda_i)^2 \le (T-1)\sum_{i\ge2}\lambda_i^2$. Dividing by $\lambda_1^2$ and using the definition of $\eta = r_{\mathrm{stable}}-1 = \sum_{i\ge2}(\lambda_i/\lambda_1)^2$ gives $\delta_1 \le \sqrt{(T-1)\eta}$.
Since $\mathrm{Tr}(\Sigma) = \lambda_1(1+\delta_1)$, the normalized density matrix $\rho = \Sigma / \mathrm{Tr}(\Sigma)$ has top eigenvalue $\lambda_{\max}(\rho) = 1/(1+\delta_1)$, implying $1-\lambda_{\max}(\rho) = \delta_1/(1+\delta_1) \le \delta$.
With $p_1 \ge 1-\delta$ and tail mass $\sum_{i\ge2}p_i \le \delta$, the von Neumann entropy is maximized when the tail mass is uniformly distributed across the remaining $T-1$ eigenvalues, yielding the bound $S(\rho) \le h_2(\delta) + \delta\log(T-1)$.
For Rényi-2 entropy, using $\mathrm{Tr}(\rho^2) \ge p_1^2 = 1/(1+\delta_1)^2$ gives $S_2(\rho) \le 2\log(1+\delta_1)$.
\textit{Appendix~\ref{app:theorem_and_proof}} provides the full arguments.

\textbf{Output entanglement collapse under stable rank collapse.} Recent theory~\cite{saadamind} shows that in width-limited regimes, the
stable rank of $XX^\top$ may collapse to $1+O(T^{-3})$.
Plugging this into Lemma~\ref{lem:sr_to_entropy_main} yields:

\begin{theorem}[Output entanglement collapse]
\label{thm:output_entropy_main}
If $r_{\mathrm{stable}}(XX^\top)-1 = O(T^{-3})$ with overwhelming probability\footnote{Overwhelming probability means $1-T^{-c}$ for every fixed $c>0$, for all sufficiently large $T$.}, then in the limit $T\to\infty$,
\begin{equation}
S(X) = O\!\left(\frac{\log T}{T}\right)\to 0.
\end{equation}
Thus the token--feature entanglement vanishes in the rank-collapsed regime. For Rényi-2 entropy scaling, see \textit{Appendix~\ref{app:von_neumann_renyi}}.
\end{theorem}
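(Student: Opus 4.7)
The plan is to invoke Lemma~\ref{lem:sr_to_entropy_main} as a black box and propagate the hypothesis $\eta := r_{\mathrm{stable}}(XX^\top)-1 = O(T^{-3})$ through the deterministic inequality
\[
S(X) \le h_2(\delta) + \delta \log(T-1)
\]
to obtain the stated asymptotic. Since the lemma already isolates the quantitative dependence of the entropy on the stable-rank excess, the argument reduces to a short asymptotic calculation, and the probabilistic qualifier is handled simply by restricting to the overwhelming-probability event on which the hypothesis holds.

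First I would feed $\eta = O(T^{-3})$ into the Cauchy--Schwarz bound $\delta_1 \le \sqrt{(T-1)\eta}$ supplied by the lemma, obtaining $\delta_1 \le \sqrt{(T-1)\cdot O(T^{-3})} = O(T^{-1})$. In particular $\delta_1 < 1$ for all sufficiently large $T$, so $\delta = \min\{1,\delta_1\} = \delta_1 = O(T^{-1})$ on the good event. I would then bound the two summands in the lemma separately: the linear contribution satisfies $\delta \log(T-1) = O(T^{-1}\log T)$, and using the small-$u$ expansion $h_2(u) = -u\log u - (1-u)\log(1-u) \sim u\log(1/u)$ for $u \to 0$, the binary-entropy contribution satisfies $h_2(\delta) = O(T^{-1}\log T)$ as well. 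Summing the two gives $S(X) = O(\log T / T) \to 0$, as claimed.

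There is no substantial technical obstacle; the calculation is essentially a single substitution. The only place requiring genuine care is checking that the exponent $-3$ in the stable-rank hypothesis is exactly what is needed to make the conclusion nontrivial: the factor $T-1$ inside the square root in Lemma~\ref{lem:sr_to_entropy_main} absorbs two powers of $T$, leaving a single spare power $T^{-1}$ that must still dominate the $\log T$ appearing in both terms. A weaker hypothesis such as $r_{\mathrm{stable}}-1 = O(T^{-1})$ would only yield $\delta_1 = O(1)$ and no collapse, so the cubic decay cannot be relaxed without losing the conclusion. The probabilistic aspect enters only through the bookkeeping: the implicit constant in $O(T^{-3})$, which depends on the parameters controlling rank collapse in the width-limited regime of~\cite{saadamind}, cascades multiplicatively into the final $O(\log T / T)$ constant, and the whole estimate is asserted on the overwhelming-probability event where the stable-rank hypothesis holds. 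For the Rényi-2 variant promised in the appendix, the same substitution applied to the sharper Rényi-2 bound $S_2(\rho) \le 2\log(1+\delta_1)$ from Lemma~\ref{lem:sr_to_entropy_main} would give a cleaner $O(1/T)$ rate without the logarithmic factor.
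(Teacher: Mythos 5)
Your proposal is correct and follows essentially the same route as the paper's own proof: both apply Lemma~\ref{lem:sr_to_entropy_main} as a black box, propagate $\eta = O(T^{-3})$ through $\delta_1 \le \sqrt{(T-1)\eta}$ to get $\delta = O(T^{-1})$, and bound $h_2(\delta)$ and $\delta\log(T-1)$ each by $O((\log T)/T)$, with the same $O(1/T)$ Rényi-2 refinement. One small caveat on your side remark: relaxing the hypothesis to any $\eta = o(T^{-1})$ would still give $\delta_1 = o(1)$ and hence $S(X)\to 0$, so the cubic decay is needed only for the specific rate $O((\log T)/T)$, not for the collapse conclusion itself.
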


\textit{Sketch of proof.}
Apply Lemma~\ref{lem:sr_to_entropy_main} with $\eta = r_{\mathrm{stable}}(\Sigma)-1 = O(T^{-3})$. 
The bound $\delta_1 \le \sqrt{(T-1)\eta}$ from the lemma gives $\delta_1 = O(T^{-1})$, and hence $\delta = \min\{1, \delta_1\} = O(T^{-1})$.
The lemma yields $S(X) \le h_2(\delta) + \delta\log(T-1)$. 
Since $h_2(\delta) = O(\delta\log(1/\delta))$ and $\delta = O(1/T)$, we have $h_2(\delta) = O((\log T)/T)$, while $\delta\log(T-1) = O((\log T)/T)$.
Therefore $S(X) = O((\log T)/T) \to 0$.
For Rényi-2 entropy, the lemma gives $S_2(X) \le 2\log(1+\delta_1) = O(\delta_1) = O(1/T) \to 0$, vanishing even faster.
This reflects that almost all spectral weight concentrates on the top eigenvalue when the stable rank approaches $1$.
Full details are given in \textit{Appendix~\ref{app:theorem_and_proof}}.

\begin{figure}[htbp]
    \centering
\includegraphics[width=\linewidth]{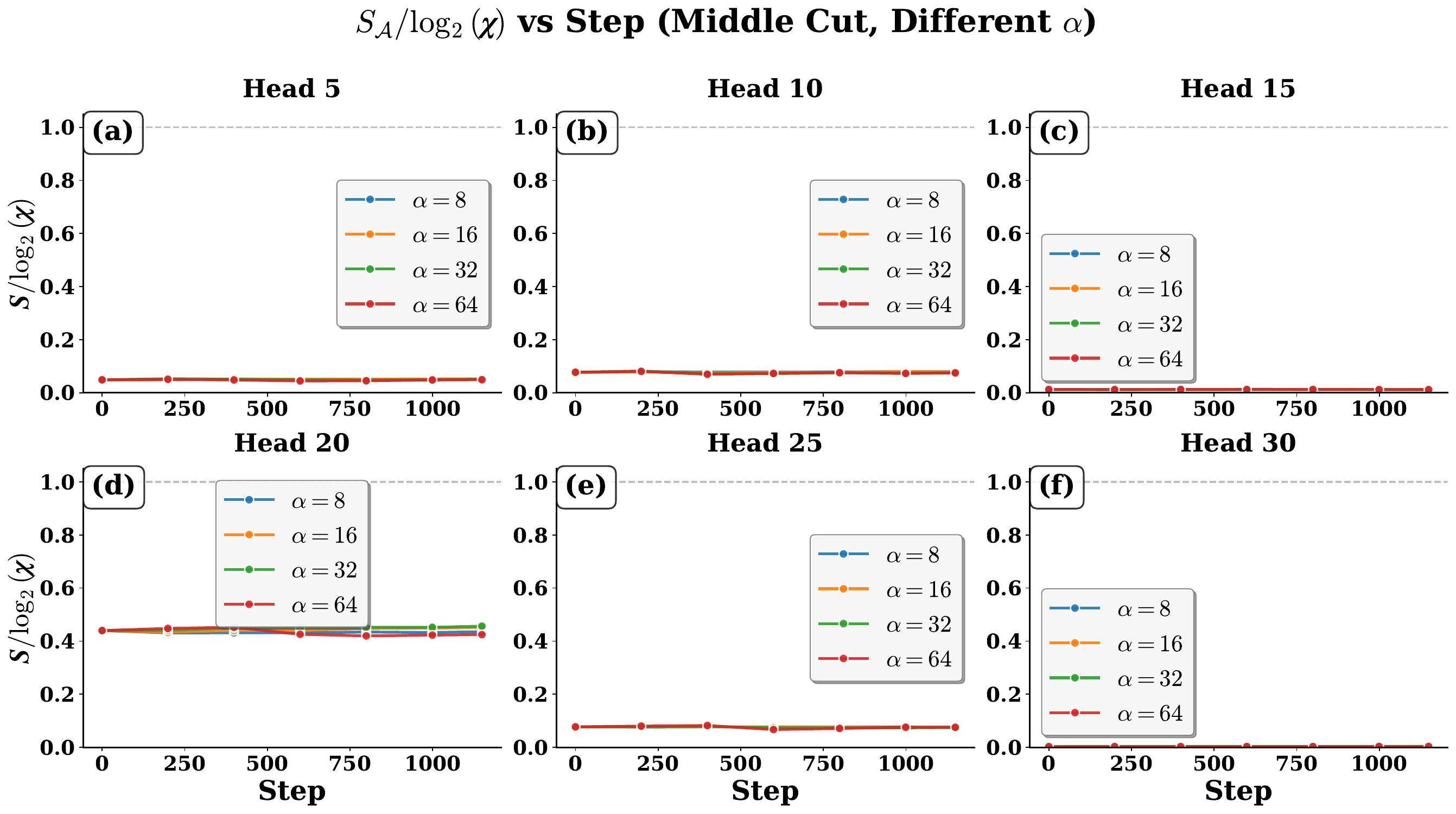}
\caption{
Entanglement entropy $S_A/\log(\chi)$ across training steps for several
attention heads under scaling coefficients $\alpha\in\{8,16,32,64\}$. Across all heads, the entanglement curves remain approximately invariant with respect to both $\alpha$ and the training step. This is similar to the behavior observed in LoRA (See FIG.~\ref{fig:attention_entropy_normalized_by_alpha_all_heads}).
}
\label{fig:MPS_attention_S_alpha_8_16_32_64}
\end{figure}

\begin{figure*}
    \centering
    \includegraphics[width=\textwidth]{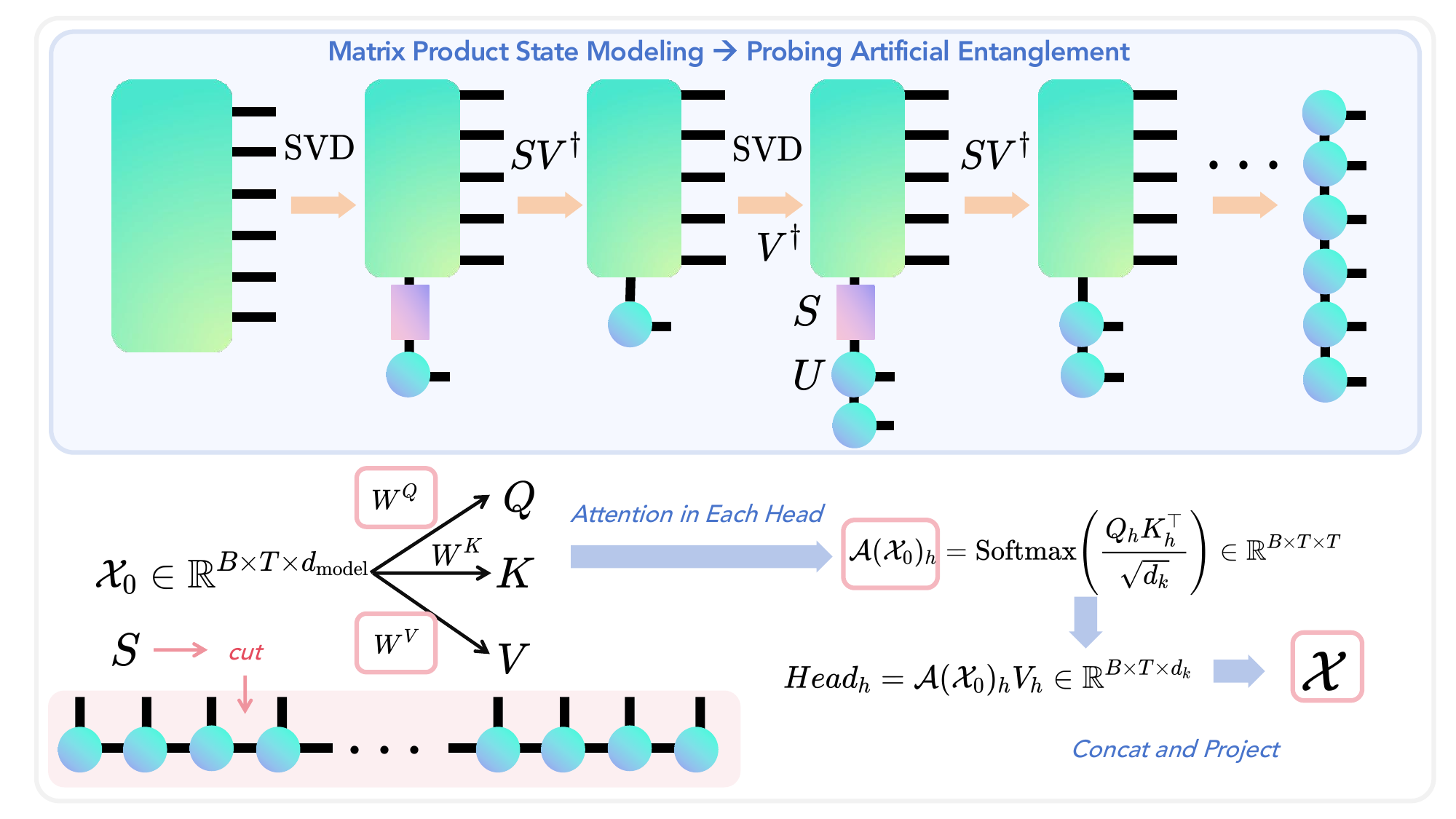}   
    \vspace{-10pt}
    \caption{Overall framework for artificial entanglement analysis. (i) Reshaping matrices including the updates of weight projection matrices such as $\Delta W_Q$ and $\Delta W_V$ from FFT or LoRA, and attention-related matrices into higher-order tensors by factorizing input and output dimensions into prime components; (ii) Performing a sequence of SVDs to decompose the tensor into a chain of Order-3 tensors  connected by virtual bonds, where virtual bond indices capture correlations between sites; (iii) Interpreting the MPS factorization as a mathematical formalism of many-body states, formally analogous to the MPS representation of the amplitude tensor of a quantum many-body state; (iv) Computing the von Neumann entanglement entropy (\(S\)) at each bond, yielding an artificial entanglement profile.}
    \label{fig:MPS_modeling}
\end{figure*}

\subsection{Artificial Entanglement Profiling in MPS Adaptation}
\label{sec:MPS_adaptation}

While the MPS modeling above is adopted for artificial entanglement profiling of LoRA and FFT (where we decompose their parameter updates into MPS representations to analyze entanglement structure), we can extend this analysis framework to a fine-tuning method that directly parameterizes weight updates using MPS structure, which is also parameter efficient and we term as \textit{MPS adaptation}. The key distinction is that MPS adaptation is a PEFT method (a way to parameterize trainable updates), whereas MPS modeling is an analysis tool (a way to decompose and analyze parameter structures). By applying the same entanglement profiling framework to MPS adaptation, we can investigate whether it exhibits similar entanglement signatures (such as the entanglement valley observed in LoRA) and compare the entanglement structures across different PEFT paradigms. For a detailed explanation of MPS adaptation, its relationship to MPS modeling, and why it can be parameter-efficient, see \textit{Appendix~\ref{app:MPS_adaptation_method}}. To investigate this, we implement a simple version of MPS adaptation: We formalize \(A \in \mathbb{R}^{r \times d_{\mathrm{in}}}\) into two Order-3 tensors whose contraction reconstructs the effective matrix \(A\), where we denote it as \(A_{\text{MPS}}\). Specifically, we factorize the input dimension as \(d_{\mathrm{in}} = d_1 d_2\) and parameterize
\begin{equation}
    A_{\text{MPS}} = \sum_{\alpha=1}^{\chi} \mathcal{A}^{[1]}_{r, \alpha, d_1} \mathcal{A}^{[2]}_{\alpha, 1, d_2},
\end{equation}
where each \(\mathcal{A}^{[i]}\) is an Order-3 tensor and the bond dimension \(\chi\) controls the entanglement capacity. Analogous to LoRA's $\Delta W = \frac{\alpha}{r} BA$ where $B \in \mathbb{R}^{d_{\mathrm{out}} \times r}$ and $A \in \mathbb{R}^{r \times d_{\mathrm{in}}}$, MPS adaptation replaces the full matrix $A$ with an MPS factorization $A_{\text{MPS}}$. The weight update is then:
\begin{equation}
    \Delta W = \frac{\alpha}{r} B A_{\text{MPS}} = \frac{\alpha}{r} B \left(\sum_{\alpha=1}^{\chi} \mathcal{A}^{[1]}_{r, \alpha, d_1} \mathcal{A}^{[2]}_{\alpha, 1, d_2}\right),
\end{equation}
where the contraction over the bond index $\alpha$ reconstructs $A_{\text{MPS}} \in \mathbb{R}^{r \times d_{\mathrm{in}}}$ from the tensor network factorization, and $B$ remains the same matrix as in LoRA. This MPS parameterization of $A$ can reduce the number of parameters compared to storing the full $r \times d_{\mathrm{in}}$ matrix under specific settings, see a concrete example in \textit{Appendix~\ref{app:MPS_adaptation_method}}.

\begin{figure}[t!]
    \centering
\includegraphics[width=\linewidth]{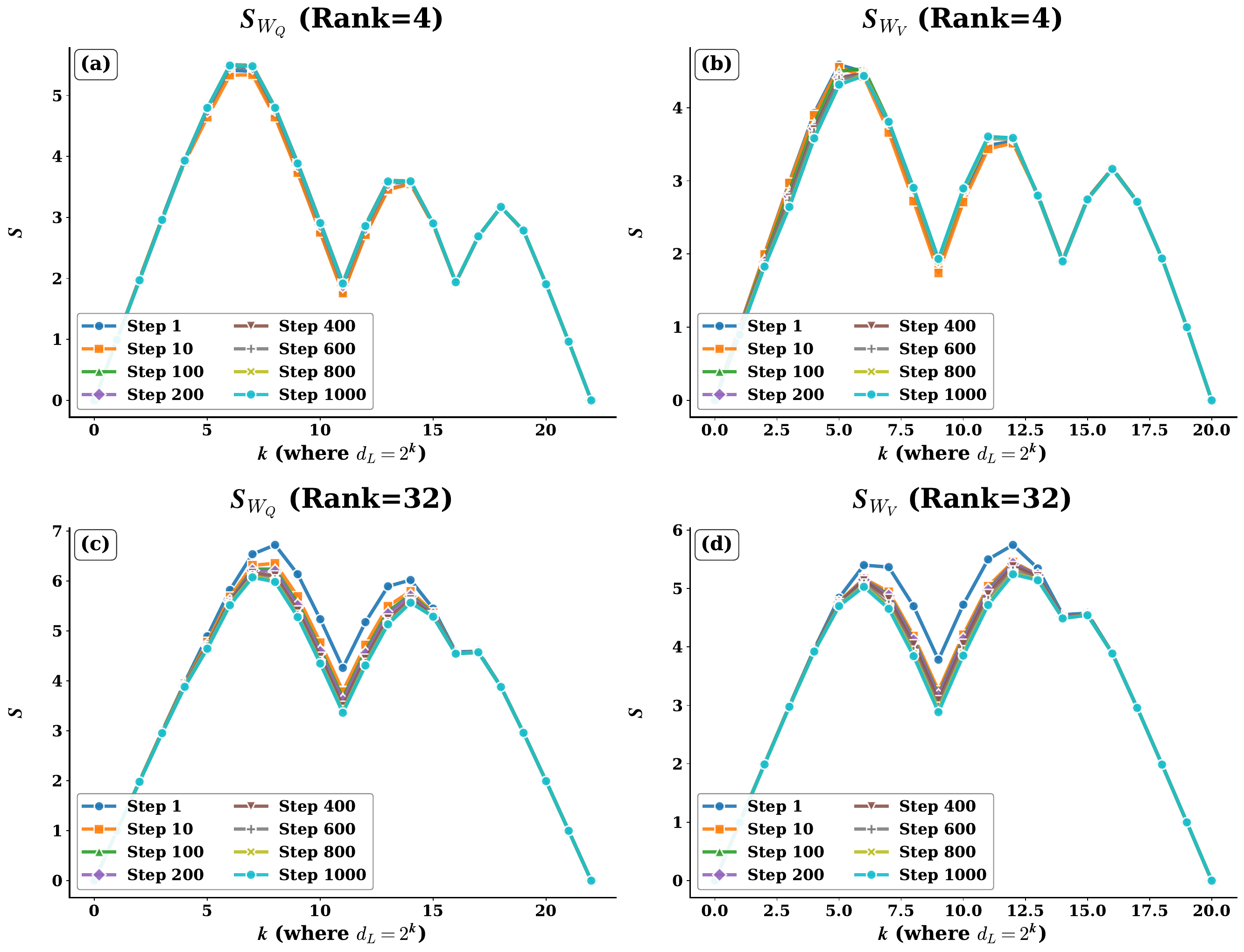}
\caption{
Artificial entanglement profiling of the MPS adaptation of $\Delta W$ for ranks 
$r \in \{4, 32\}$ with $\alpha = 16$ across training steps. 
Both rank settings exhibit a characteristic ``entanglement valley'' structure, 
while the rank-$4$ case displays a more pronounced \emph{bi-modal} (dual-valley) 
profile. In contrast, the rank-$32$ curves form a smoother single-valley landscape.
}
\label{fig:tensor_lora_a_rank_comparison}
\end{figure}

\begin{figure}[t!]
    \centering
\includegraphics[width=\linewidth]{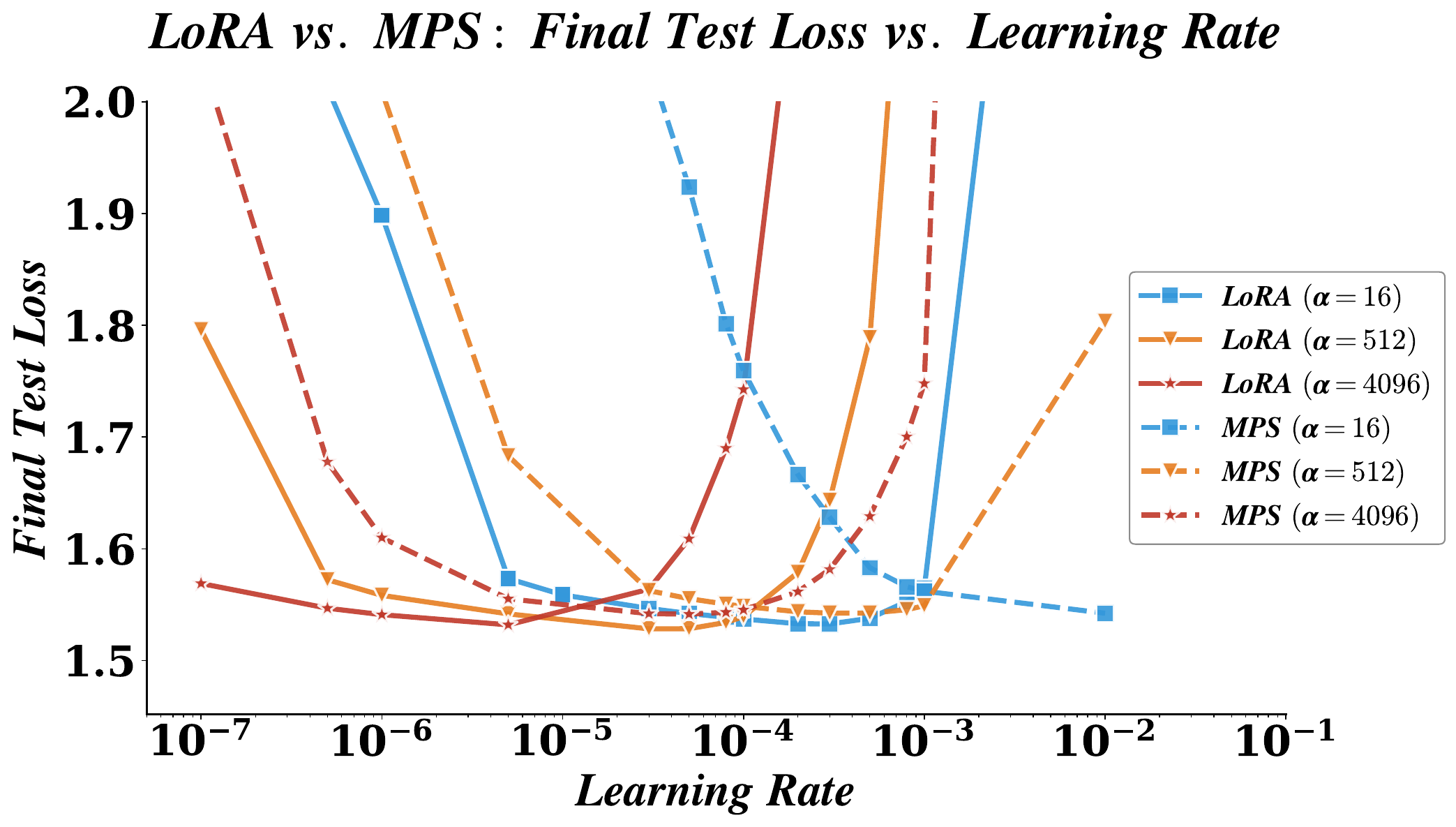}
\caption{Final test loss of LoRA and MPS adaptation across multiple learning rates and scaling coefficients $\alpha$. Compared to LoRA, the MPS adaptation exhibits an optimal learning-rate region that is consistently shifted toward larger learning rates.}
\label{fig:lora_vs_tensor_lora_a_comparison}
\end{figure}

\textbf{Artificial Entanglement Profiling in MPS adaptation.} We conduct below experiments:

\underline{\textbf{(i)}} We conduct the artificial entanglement profiling of \(\Delta W_Q\) and \(\Delta W_V\) under different \(r\) (or \(\chi\)),  see FIG.~\ref{fig:tensor_lora_a_rank_comparison}. To simplify the problem, we set \(r = \chi\), and therefore we do not set \(r\) or \(\chi\) exceeds \(\min\{d_{\mathrm{out}}, d_1, d_2\}\). Here we respectively set \(r = \{4, 32\}\), and learning rate to be \(2\times10^2\). Other settings follow \textbf{Default Experimental Setups} in Sec. \ref{sec:Entanglement Structure in FFT and LoRA}. 

\underline{\textbf{(ii)}} We conduct the artificial entanglement profiling of \(S_A\), see FIG.~\ref{fig:MPS_attention_S_alpha_8_16_32_64}.

\underline{\textbf{(iii)}} We vary \(\alpha\) and learning rates to see the shift of the optimal learning rates, see FIG.~\ref{fig:lora_vs_tensor_lora_a_comparison}.

Combining the results, in MPS adaptation we can observe rich entanglement profiling results in the updates of projection matrices ($\Delta W_Q$ and $\Delta W_V$) and the no-hair property. Besides, similar to LoRA, when increasing \(\alpha\), the optimal learning rates are shifted to be smaller as the increase of \(\alpha\), approaching the solutions of LoRA and FFT. Besides, though possessing different entanglement profiling, MPS adaptation does not induce an obvious test loss decrease in the optimal setting, approximating the test performance of LoRA.

\section{Methods}
\label{sec:method}

This section details the methodology for artificial entanglement analysis, especially we establish an analogy between the MPS factorization of matrices and the MPS representation of quantum many-body states: we here detail the interpretation of the MPS factorization of matrices as formally analogous to the amplitude tensor of a quantum many-body state $|\Psi\rangle$. We emphasize that the underlying systems are entirely classical: this is purely a mathematical analogy that provides a principled framework for quantifying entanglement-like correlations via Schmidt decomposition. See FIG.~\ref{fig:MPS_modeling} and preliminaries of tensor analysis in \textit{Appendix~\ref{sec:Preliminaries and Related Works}}. 

For a quantum state \(|\Psi\rangle\) partitioned into two subsystems A and B via Schmidt decomposition:
\begin{equation}
    |\Psi\rangle = \sum_{i} \lambda_{i}\,|i_A\rangle\,|i_B\rangle,
\end{equation}
the entanglement entropy \(S\) is calculated as:
\begin{equation}
    S_{A} = S_{B} = -\sum_{i} \lambda_{i}^2 \log \lambda_{i}^2
\end{equation}
For an MPS, the two subsystems are obtained by specifying a specific \textit{cut} in the MPS chain, where we investigate the entanglement between the left and right subsystems after the bi-partition. We apply this framework to relevant matrices, including the updates of projection matrices \(\Delta W_Q\) and \(\Delta W_V\), attention matrices \(A\), and attention output operators \(\mathcal{O}=XX^T\), by formalizing each matrix as an MPS. 

As an example, we focus on one of the projection matrices as \(\Delta W\). While $\Delta W \in \mathbb{C}^{d_{\mathrm{out}}\times d_{\mathrm{in}}}$ is originally an Order-2 tensor, we reshape it into a higher-order tensor by factorizing its input and output dimensions into their prime components. Specifically, we iteratively divide each dimension by its smallest prime factor until the value reduces to one:
\begin{equation}
\begin{aligned}
        d_{\mathrm{out}} = \prod_{k=1}^{n} f_k, \qquad
f_1 \le f_2 \le \cdots \le f_n ,\\
d_{\mathrm{in}} = \prod_{\ell=1}^{m} g_\ell, \qquad
g_1 \le g_2 \le \cdots \le g_m ,
\end{aligned}
\end{equation}
where $f_k$ and $g_\ell$ denote the prime factors sorted in non-decreasing order, and $n$ and $m$ denote the number of prime factors in $d_{\mathrm{out}}$ and $d_{\mathrm{in}}$, respectively. This decomposition yields the finest-grained tensorization of the input and output spaces and provides a natural way to reshape $\Delta W$ into an Order-$(n+m)$ tensor amenable to tensor-network factorizations, including the MPS.

For LoRA, we denote \(\mathcal{A}_i\) and \(\mathcal{B}_i\) to be each local Order-3 tensor in the MPS on the same rank space~$\mathcal{H}_r$, therefore \(\Delta W_{\text{LoRA}}\) can be viewed as the probability amplitude tensor of a quantum wave function represented below:
\begin{equation}
\begin{aligned}
    &|\Psi \rangle_{\text{LoRA}} = \Delta W_{\text{LoRA}} |o_1\cdots o_m\rangle_\mathrm{out}\otimes|i_1\cdots i_n\rangle_\mathrm{in}\\
    &= \sum_{\alpha_1, ..., \alpha_n}^\chi \mathcal{B}^{[1]  d_{o_1}}_{\alpha_0=1, \alpha_1} \mathcal{B}^{[2] d_{o_2}}_{\alpha_1, \alpha_2} ..., \mathcal{A}^{[n+1]d_{i_1}}_{\alpha_{n, n+1}} \mathcal{A}^{[n+m]d_{i_m}}_{\alpha_{n+m-1},\,\alpha_{n+m}=1}\, \cdot\\
    &|o_1\cdots o_m\rangle_\mathrm{out}\otimes|i_1\cdots i_n\rangle_\mathrm{in}.
\end{aligned}
\end{equation}
Although the above formulation is written in terms of the decomposed factors \(\mathcal{A}_i\) and \(\mathcal{B}_i\), in practice we directly decompose the equivalent \(\Delta W_{\text{LoRA}}\) itself into the MPS representation.
For FFT, the expression is the similar, we express 
\begin{equation}
\begin{aligned}
    &|\Psi \rangle_{\text{FFT}} = \Delta W_{\text{FFT}} |w_1\cdots w_{m+n}\rangle\\
    &= \sum_{\alpha_1, \cdots, \alpha_n}^\chi \mathcal{W}^{[1]  d_{o_1}}_{\alpha_0=1, \alpha_1} \cdots \mathcal{W}^{[n+1]d_{i_1}}_{\alpha_{n, n+1}} \cdots\mathcal{W}^{[n+m]d_{i_m}}_{\alpha_{n+m-1},\,\alpha_{n+m}=1}\, \cdot\\
    &|w_1\cdots w_{m+n}\rangle.
\end{aligned}
\end{equation}
See \textit{Appendix \ref{sec:Preliminaries and Related Works}} for more details about the formulation based on the factorization. 

Further, we calculate the von Neumann entanglement entropy by performing an SVD at each bond during the MPS decomposition. At bond $k$ (the $k$-th bond in the MPS chain), we obtain the singular values $\{\lambda_{\alpha_k}\}$, where $\alpha_k$ indexes the singular values (Schmidt indices) at bond $k$. The SVD is written in terms of each bipartite quantum state \(|\Psi\rangle \in \mathcal{H}_A \otimes \mathcal{H}_B\) component \(\Psi_{ij}\) below: 
\begin{equation}
    \Psi_{ij}
= \sum_{\alpha,\beta=1}^{\chi} U_{i\alpha}\,\Lambda_{\alpha\beta}\,(V^{\dagger})_{\beta j}
= \sum_{\alpha=1}^{\chi} U_{i\alpha}\,\lambda_{\alpha}\,(V^{\dagger})_{\alpha j},
\end{equation}
with at most \(\chi\) non-zero singular values. The bi-partition at bond $k$ corresponds to 
$\mathcal{H}_A = \mathcal{H}_{1:k}$ and $\mathcal{H}_B = \mathcal{H}_{k+1:(n+m)}$. Since the reduced density matrices $\rho_{1:k}$ and $\rho_{k+1:(n+m)}$ share the same non-zero spectrum, the bipartite entanglement entropy at bond $k$ is

\begin{equation}
    S_k = S(\rho_{1:k})
=
S(\rho_{k+1:(n+m)})
=
- \sum_{\alpha_k = 1}^{\chi}
\lambda_{\alpha_k}^2\,
\log\!\left(\lambda_{\alpha_k}^2\right),
\end{equation}
where $S_k$ quantifies the entanglement entropy at bond $k$. We can calculate $S_k$ for each bond $k$, and therefore we can obtain an $S_k$ curve (artificial entanglement profile) with respect to the bond position. 

\section{Discussion}
\label{sec:discussion}

Our results reveal that LoRA captures a richer structure than previously recognized. By quantifying \textit{artificial entanglement} through MPS modeling, we show that LoRA and FFT induce a fitted \textbf{volume-law} internal entanglement profile in the projection matrices \(\Delta W_Q\) and \(\Delta W_V\) with a distinctive ``entanglement valley". These findings highlight that the fitted volume-law  reflects high intrinsic correlation, thereby causing LoRA to develop internal artificial entanglement signatures that fundamentally differ from those produced by FFT.

Yet, in contrast to the volume-law scaling in projection matrices, attention matrices themselves exhibit an approximate \textbf{area-law} scaling. Despite these microscopic differences in the \textit{internal} structure (the volume-law entanglement profiles in projection matrices), the \textit{external} observable attention outputs remain remarkably invariant. Drawing an analogy to the No-Hair Theorem in black hole physics, we propose that the attention mechanism acts as an effective coarse-graining operator: it suppresses redundant correlations while preserving only low-correlation and task-relevant structure. This perspective offers an explanation for why LoRA performs comparably to FFT despite using far fewer parameters. 

The entanglement signatures we observe provide valuable diagnostic tools for PEFT methods. Specifically, the depth and evolution of the entanglement valley in projection matrices serve as indicators of hyperparameter choices: smaller scaling parameters $\alpha$ lead to deeper valleys that intensify during training, while larger $\alpha$ values mitigate the valley depth and bring the solution closer to FFT. The entanglement profile's sensitivity to rank $r$ and training dynamics further enables performance assessment---methods that preserve task-relevant structure should exhibit entanglement structures that align with task requirements. Moreover, the contrast between volume-law scaling in projection matrices and area-law scaling in attention matrices reveals that low-rank parameterizations can be effective even when internal structures exhibit high correlation, possibly related to the downstream attention mechanism providing effective coarse-graining. This suggests that entanglement analysis can help identify which components require careful rank selection versus those where low-rank approximations are naturally robust. These diagnostic capabilities suggest that entanglement analysis could guide hyperparameter tuning and help identify optimal adaptation strategies in PEFT.

Finally, our theoretical analysis based on random matrix theory and MPS Adaptation extension indicate that the no-hair behavior is not an artifact of LoRA but a more universal property. This opens several directions for future work, including probing how entanglement evolves during pretraining, whether entanglement-based regularization can guide more efficient adaptation, and whether similar coarse-graining phenomena arise in other architectures or multi-task settings. Together, our findings suggest that quantum-inspired analysis may offer a new foundation for understanding the mechanistic interpretability of large language models.

\section*{Acknowledgment}

We thank Jesse Thaler for helpful discussions. MC, ZW, and JL are supported in part by the University of Pittsburgh, School of Computing and Information, Department of Computer Science, Pitt Cyber, Pitt Momentum fund, PQI Community Collaboration Awards, John C. Mascaro Faculty Scholar in Sustainability, Thinking Machines Lab and Cisco Research. This research used resources of the Oak Ridge Leadership Computing Facility, which is a DOE Office of Science User Facility supported under Contract DE-AC05-00OR22725.

\bibliography{example_paper}
\onecolumngrid
\newpage
\let\addcontentsline\oldaddcontentsline
\tableofcontents

\section{Notations}
\label{sec:notations}

Table~\ref{tab:notations} provides a list of notations.

\begin{table*}[hpt!]
\centering
\caption{Notations}
\begin{tabular}{l|p{12cm}}
\hline
\textbf{Symbol} & \textbf{Description} \\
\hline
\(W_0\) & Pre-trained weight projection matrix. \\
\(\Delta W\) & Weight update matrix. \\
\(\Delta W_{\mathrm{LoRA}}\) & LoRA weight update: \(\Delta W_{\mathrm{LoRA}} = \frac{\alpha}{r} BA\). \\
\(\Delta W_{\mathrm{Full}}\) & Full fine-tuning weight update. \\
\(W_Q, W_K, W_V\) & Query, key, value projection matrices. \\
\(W^O\) & Output projection matrix. \\
\(A, B\) & Low-rank matrices in LoRA decomposition (\(A \in \mathbb{R}^{r \times d_{\mathrm{in}}}\), \(B \in \mathbb{R}^{d_{\mathrm{out}} \times r}\)). \\
\(A\) & Attention matrix (context-dependent). \\
\(\mathcal{O} = XX^\top\) & Attention output operator. \\
\hline
\(d_{\mathrm{in}}, d_{\mathrm{out}}\) & Input and output dimensions of weight projection matrices. \\
\(d_{\mathrm{model}}\) & Token embedding dimension (model dimension). \\
\(d_k\) & Dimension of each attention head. \\
\(d_L^{(k)}, d_R^{(k)}\) & Dimensions of left and right subsystems at cut \(k\). \\
\(d_{\min}, d_{\max}\) & Minimum and maximum dimensions (\(d_{\min} \le d_{\max}\)). \\
\(d_k\) & Local dimension at site \(k\) (context-dependent). \\
\(T\) & Sequence length (number of tokens). \\
\(H\) & Number of attention heads. \\
\(B\) & Batch size. \\
\(M, N\) & Order of tensors (number of indices). \\
\(n, m\) & Number of sites in MPS factorization. \\
\(f_k, g_\ell\) & Prime factors in factorization of \(d_{\mathrm{out}}\) and \(d_{\mathrm{in}}\). \\
\hline
\(r\) & LoRA rank (\(r \ll \min(d_{\mathrm{in}}, d_{\mathrm{out}})\)). \\
\(\alpha\) & LoRA scaling hyperparameter. \\
\hline
\(\chi, \chi_k\) & Bond dimension (at bond \(k\)). \\
\(\chi_{\max}\) & Maximum bond dimension (truncation threshold). \\
\(\chi\) & Maximum bond dimension: \(\chi := \max_k \chi_k\). \\
\(\mathcal{T}^{[k]}\) & \(k\)-th tensor in MPS decomposition. \\
\(\mathcal{A}^{[k]}, \mathcal{B}^{[k]}\) & MPS tensors for LoRA matrices \(A\) and \(B\). \\
\(\alpha_k\) & Virtual bond index (Schmidt index). \\
\(i_k\) & Physical index at site \(k\). \\
\(C^{(k)}\) & Matrix at cut \(k\) in MPS construction. \\
\(U^{(k)}, S^{(k)}, V^{(k)}\) & SVD factors at cut \(k\). \\
\hline
\(S, S_k\) & Von Neumann entanglement entropy (at cut \(k\)). \\
\(S_2\) & Rényi-2 entropy. \\
\(S_\alpha\) & Rényi-\(\alpha\) entropy. \\
\(\rho_L^{(k)}\) & Reduced density matrix of left subsystem at cut \(k\). \\
\(\lambda_{\alpha_k}, s_i^{(k)}\) & Singular values (Schmidt coefficients). \\
\(p_i^{(k)}\) & Eigenvalues of reduced density matrix. \\
\hline
\(\sigma\) & Bulk spectrum spread parameter. \\
\(\mathcal{C}_{\mathrm{attn}}\) & Effective Attention Charge: \(\mathcal{C}_{\mathrm{attn}} = \frac{\sigma^2}{1+\sigma^2}\). \\
\(r_{\mathrm{stable}}\) & Stable rank: \(r_{\mathrm{stable}} = \frac{\mathrm{Tr}(\Sigma^2)}{\lambda_1^2}\). \\
\(\eta\) & Stable rank excess: \(\eta = r_{\mathrm{stable}} - 1\). \\
\(\lambda_i\) & Eigenvalues (context-dependent). \\
\(x_i = d_{\min} \cdot \lambda_i\) & Normalized eigenvalues for MP distribution. \\
\(\rho(x)\) & Marchenko-Pastur density function. \\
\(c\) & Aspect ratio: \(c = d_{\min}/d_{\max}\). \\
\hline
\(|\Psi\rangle\) & Many-body quantum state. \\
\(\ket{i_L}, \ket{i_R}\) & Basis states for left and right subsystems. \\
\(\mathcal{H}_L, \mathcal{H}_R\) & Left and right Hilbert spaces. \\
\(\mathcal{H}_A, \mathcal{H}_B\) & Subsystem Hilbert spaces. \\
\hline
\(\mathcal{X}_0\) & Input tensor: \(\mathcal{X}_0 \in \mathbb{R}^{B \times T \times d_{\mathrm{model}}}\). \\
\(\mathcal{A}(\mathcal{X}_0)\) & Attention tensor: \(\mathcal{A}(\mathcal{X}_0) \in \mathbb{R}^{B \times H \times T \times T}\). \\
\(Q_h, K_h, V_h\) & Query, key, value matrices for head \(h\). \\
\(A^\perp\) & Centered attention matrix: \(A^\perp = A - \frac{1}{T}\mathbf{1}\mathbf{1}^\top\). \\
\(X\) & Attention layer output: \(X = AV\). \\
\hline
\(\mathbb{R}, \mathbb{C}\) & Real and complex number fields. \\
\(\lVert\cdot\rVert_F\) & Frobenius norm. \\
\(\lVert\cdot\rVert_{\mathrm{op}}\) & Operator norm (largest singular value). \\
\(\mathrm{Tr}(\cdot)\) & Trace operator. \\
\(\mathrm{vec}(\cdot)\) & Vectorization operator. \\
\(o(1)\) & Asymptotically vanishing term. \\
\hline
\end{tabular}
\label{tab:notations}
\end{table*}

\section{Preliminaries and Related Works}
\label{sec:Preliminaries and Related Works}

\textbf{Tensor, Tensor Network, and Matrix Product State (MPS).}
Tensors generalize matrices to multi-index arrays and provide a unified
language for representing high-dimensional structures. An Order-$N$ tensor 
$\mathcal{A}_{i_1\dots i_N}$ has $N$ indices, where each 
$i_k \in \{0,1,\dots,d_k-1\}$ ranges over a local dimension $d_k$. Graphically as shown in FIG.~\ref{fig:general_intro_tensor}   (a), one can visualize a tensor as a node with $N$ legs, each leg corresponding to an index of the tensor. A fundamental operation between tensors is \emph{tensor contraction}, which corresponds to summing over a pair of shared indices. In FIG.~\ref{fig:general_intro_tensor}  (b), contraction is depicted by connecting the legs 
associated with the contracted dimensions. More elaborate contraction 
patterns naturally give rise to \emph{tensor networks}, which provide 
structured factorization of large tensors into networks of smaller 
building blocks. Among various tensor network architectures, the \emph{Matrix Product State}
(MPS) is the most widely used 
one-dimensional tensor network. An MPS expresses a high-order tensor as a 
chain of Order-3 tensors connected by virtual bonds. In FIG.~\ref{fig:general_intro_tensor}  (c), each site emits one physical leg (the physical index) and two virtual legs linking to its neighbors, forming a 
one-dimensional chain as depicted in the figure. Consider a general $M$-body pure quantum state on a local Hilbert space of
dimension $d$:
\begin{equation}
    |\Psi\rangle 
    = \sum_{i_1,\dots,i_M=0}^{d-1}
      c_{i_1,\dots,i_M}\,|i_1,\dots,i_M\rangle,
\end{equation}
where $c_{i_1\dots i_M}$ is an amplitude tensor of Order-$M$ whose storage
cost grows exponentially with $M$. The MPS representation approximates this
tensor by factorizing it into a sequence of Order-3 tensors:
\begin{equation}
    c_{i_1,\dots,i_M}
    = \sum_{\alpha_1=1}^{\chi_1} \cdots \sum_{\alpha_{M-1}=1}^{\chi_{M-1}}
      \mathcal{T}^{[1]\, i_1}_{1,\alpha_1}\,
      \mathcal{T}^{[2]\, i_2}_{\alpha_1,\alpha_2}
      \cdots
      \mathcal{T}^{[M]\, i_M}_{\alpha_{M-1},1},
\end{equation}
where each $\mathcal{T}^{[k]}$ is an Order-3 tensor with one physical index $i_k$ 
and two virtual indices $\alpha_{k-1},\alpha_k$. The virtual indices are 
contracted, yielding the chain-like network structure. The parameter 
$\chi_k$ denotes the bond dimension at bond $k$, which may vary along the chain. The quantity $\chi := \max_k \chi_k$ is often referred to as the (maximum) \emph{bond dimension}, which determines the expressive power
of the MPS and controls the entanglement it can capture: larger $\chi$
allows more accurate approximations of $c$, while small $\chi$
corresponds to a compressed representation. In quantum physics, the effectiveness of the MPS formalism is deeply tied 
to the entanglement structure of the underlying state. For example, ground
states of gapped one-dimensional systems satisfy an entanglement 
area law, which implies that they can be faithfully approximated by an MPS
with modest bond dimension. This makes MPS not only a powerful conceptual 
framework but also a practical computational ansatz for representing 
high-dimensional quantum states with polynomial resources.

\begin{figure*}
    \centering
\includegraphics[width=0.6\textwidth]{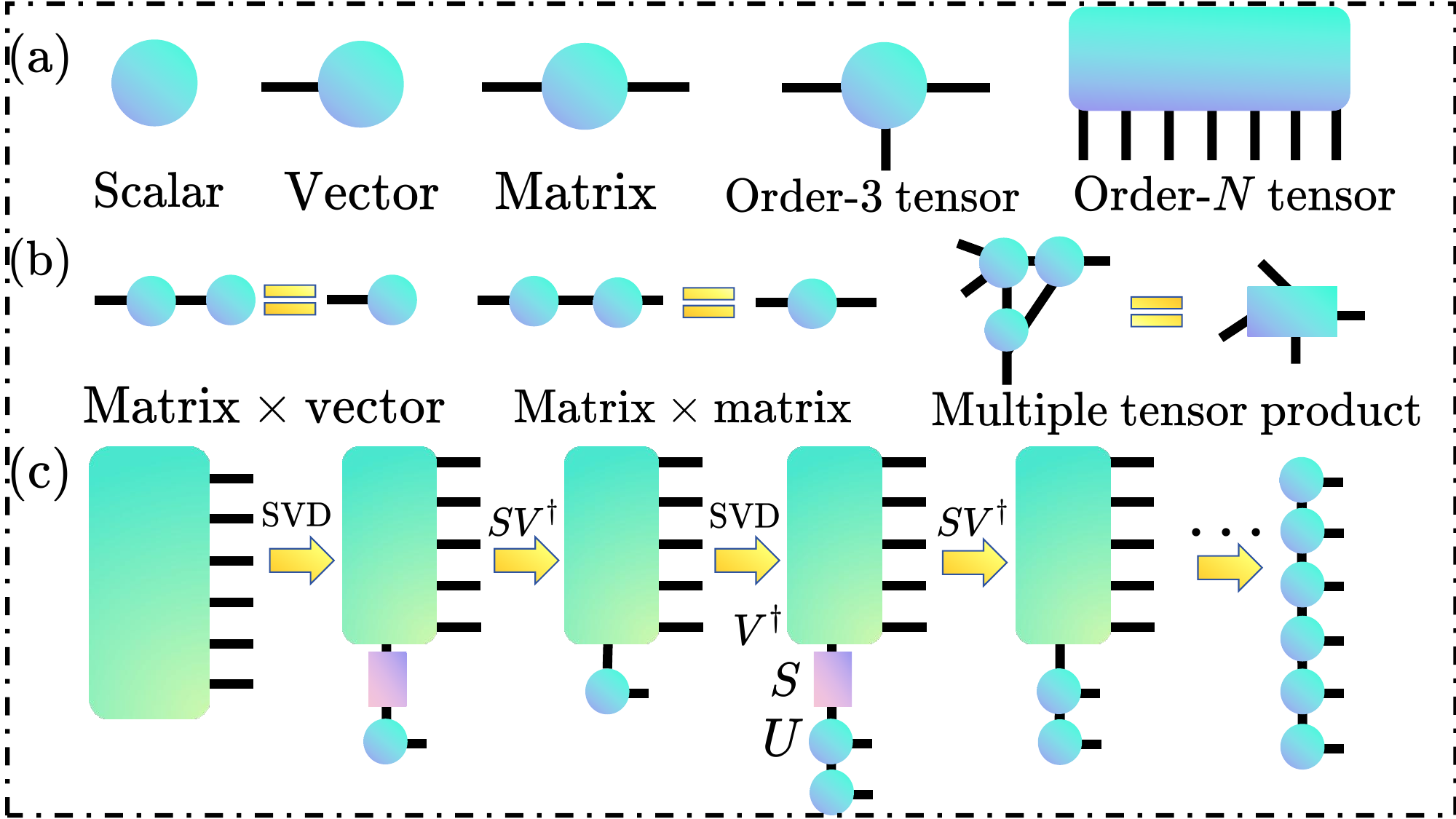}
    \vspace{-10pt}
    \caption{Overview of Tensor Diagrammatic Notation. (a) Visual representations of a scalar, vector, matrix, and an arbitrary order-$N$ tensor. (b) Tensor contractions illustrating matrix multiplication and multiple tensor products. (c) The process of modeling an order-$N$ tensor as a Matrix Product State.}
\label{fig:general_intro_tensor}
\end{figure*}

\textbf{Forming an MPS from a high-order tensor.}
As depicted in FIG.~\ref{fig:general_intro_tensor} (c), an MPS representation can be systematically constructed from a high-order tensor
by a sequence of reshaping and singular value decompositions (SVDs). Consider an Order-$M$ tensor $c_{i_1 i_2 \dots i_M}$, which may be viewed as the probability amplitude tensor of an $M$-body quantum state. The construction proceeds iteratively from left to right. In the first step, we group the first index
$i_1$ into a ``left'' subsystem and the remaining indices
$(i_2,\dots,i_M)$ into a ``right'' subsystem, reshaping the tensor into a
matrix
\begin{equation}
    C^{(1)}_{\, i_1,\; (i_2 \dots i_M)}
    \;\in\; \mathbb{C}^{d \times d^{M-1}}.
\end{equation}
Performing a singular value decomposition,
\begin{equation}
    C^{(1)} = U^{(1)} S^{(1)} V^{(1)\dagger},
\end{equation}
we absorb $U^{(1)}$ into the first MPS tensor
$\mathcal{T}^{[1]}$, while the product $S^{(1)} V^{(1)\dagger}$ is passed to the
next step. Explicitly,
\begin{equation}
    \mathcal{T}^{[1]\, i_1}_{\alpha_0=1,\alpha_1}
    := U^{(1)}_{i_1,\alpha_1},
\end{equation}
where $\alpha_1$ labels the singular vectors and defines the first virtual
bond.

In the second step, the remaining object
$S^{(1)} V^{(1)\dagger}$ is reshaped into a matrix by grouping
$(\alpha_1,i_2)$ as the left index and $(i_3,\dots,i_M)$ as the right index.
Applying another SVD yields
\begin{equation}
    C^{(2)}_{\, (\alpha_1 i_2),\; (i_3\dots i_M)}
    = U^{(2)} S^{(2)} V^{(2)\dagger},
\end{equation}
from which we define the second MPS tensor
\begin{equation}
    \mathcal{T}^{[2]\, i_2}_{\alpha_1 \alpha_2}
    := U^{(2)}_{(\alpha_1 i_2),\alpha_2}.
\end{equation}
This procedure is repeated sequentially, each time isolating one physical
index $i_k$ and generating a new virtual bond $\alpha_k$, until the final
site is reached.

After $M-1$ such decompositions, the original tensor is factorized as
\begin{equation}
    c_{i_1,\dots,i_M}
    = \sum_{\alpha_1,\dots,\alpha_{M-1}}
      \mathcal{T}^{[1]\, i_1}_{1,\alpha_1}
      \mathcal{T}^{[2]\, i_2}_{\alpha_1,\alpha_2}
      \cdots
      \mathcal{T}^{[M]\, i_M}_{\alpha_{M-1},1},
\end{equation}
which is precisely the MPS form. Each tensor
$\mathcal{T}^{[k]}$ carries one physical index $i_k$ and two virtual indices
$\alpha_{k-1},\alpha_k$, forming a one-dimensional chain as illustrated in
the figure. In practice, the singular values in each $S^{(k)}$ may be truncated,
retaining only the largest $\chi_k$ values. This truncation controls the bond
dimension at each bond and yields an efficient compressed representation. Crucially, the
maximum bond dimension required to accurately represent the state is
directly related to its entanglement across bi-partitions. 
 
\textbf{Self Attention in Large Language Models.} 
Generally, for an input $\mathcal{X}_0 \in \mathbb{R}^{B \times T \times d_{\mathrm{model}}}$, the model employs Multi-Head Attention to capture diverse contextual dependencies. The mechanism utilizes projection matrices $W^Q, W^K, W^V \in \mathbb{R}^{d_{\mathrm{model}} \times (H d_k)}$ to generate query, key, and value representations, where $d_k$ is the dimension of each head. These projected features are split into $H$ parallel heads. Let $Q_h, K_h, V_h$ denote the segments corresponding to head $h$. In modern LLMs such as LLaMA, the query and key vectors are typically augmented with \emph{Rotary Position Embedding (RoPE)}~\cite{su2024roformer}, which encodes positional information by applying rotational transformations to the query and key vectors. This allows the model to capture relative positional relationships effectively. After applying RoPE, the attention weights are computed via scaled dot-product:
\begin{equation}
    \mathcal{A}(\mathcal{X}_0)_h = \text{Softmax}\left( \frac{Q_h K_h^\top}{\sqrt{d_k}} \right) \in \mathbb{R}^{B \times T \times T}
\end{equation}
The attention output for each head is then derived as $Head_h = \mathcal{A}(\mathcal{X}_0)_h V_h \in \mathbb{R}^{B \times T \times d_k}$. Finally, all heads are concatenated and projected by an output matrix $W^O \in \mathbb{R}^{(H d_k) \times d_{\mathrm{model}}}$ to form the final layer output.

\textbf{Related Studies.} Some empirical evidence provided by Thinking Machines Lab~\cite{schulman2025lora} offers intuitive visualizations of the performance in LoRA and FFT. We echo some findings in our study and advance these observations. An important feature of ours can be concluded as identifying a  ``no-hair'' behavior that justifies the effectiveness of LoRA. \cite{shuttleworth2024lora} discusses different spectral properties of the resulting final weights obtained by LoRA and FFT, compared with pretrained weights. We focus on \underline{(i)} the direct fine-tuning solutions \(\Delta W\), the attention matrices and outputs; \underline{(ii)} the entanglement structure; \underline{(iii)} test performance w.r.t hyper-parameters, while they focus on generalization and forgetting. \cite{saadamind} shows a phenomenon termed as \textit{rank collapse in width}, both in attention matrices and attention outputs. We theoretically link our findings to this perspective (See Section~\ref{sec:random_matrix_theory}). Notably, TT-LoRA~\cite{anjum2024tensor} utilizes an equivalent Tensor Train decomposition. While it focuses on storage efficiency, we utilize the tensor network formalism to probe entanglement signatures. We do not focus on other variants of LoRA but the original LoRA itself. 

\section{Mathematical Foundations: Quantum Entanglement and Entropy Measures}
\label{app:quantum_entanglement}

This section establishes the mathematical foundations for computing and analyzing entanglement entropy in the context of MPS-decomposed weight projection matrices. We first explain how to compute entanglement entropy at each cut in the MPS construction, connecting the SVD procedure to the quantum information formalism of bipartite pure states and reduced density matrices. We then discuss important subtleties in understanding entanglement: why a spectral gap alone does not guarantee small entanglement, and how stable rank provides a quantitative measure of mass concentration that better characterizes the relationship between spectral structure and entanglement entropy. Finally, we review the definitions of von Neumann and Rényi entropies and their relationships for completeness.

\subsection{Computing Entanglement Entropy at Each MPS Cut}

Recall from the MPS construction procedure (\textit{Appendix~\ref{sec:Preliminaries and Related Works}}) that we systematically decompose a high-order tensor into an MPS by performing a sequence of SVDs. At each step $k$, we reshape the tensor to form a matrix $C^{(k)}$ that partitions the system into a ``left'' subsystem (indices $i_1,\dots,i_k$ or virtual bonds $\alpha_1,\dots,\alpha_{k-1}$ combined with $i_k$) and a ``right'' subsystem (remaining indices $i_{k+1},\dots,i_M$). This bi-partition defines a \emph{cut} in the MPS, and the SVD at this cut,
\begin{equation}
C^{(k)} = U^{(k)} S^{(k)} V^{(k)\dagger},
\end{equation}
directly reveals the entanglement structure across this bi-partition. In this subsection, we explain how to compute the entanglement entropy at each such cut.

\textbf{Bipartite Pure States from MPS Cuts.}
At cut $k$, the matrix $C^{(k)}$ has dimensions $d_L^{(k)} \times d_R^{(k)}$, where $d_L^{(k)}$ is the dimension of the left subsystem and $d_R^{(k)}$ is the dimension of the right subsystem. We view this matrix as representing a bipartite pure state between the left and right subsystems. To make this connection explicit, we fix orthonormal bases $\{\ket{i_L}\}_{i_L=1}^{d_L^{(k)}}$ for the left Hilbert space $\mathcal{H}_L \cong \mathbb{C}^{d_L^{(k)}}$ and $\{\ket{i_R}\}_{i_R=1}^{d_R^{(k)}}$ for the right Hilbert space $\mathcal{H}_R \cong \mathbb{C}^{d_R^{(k)}}$. The vectorization of $C^{(k)}$ embeds it into the tensor product space:
\begin{equation}
\mathrm{vec}(C^{(k)})
:= \sum_{i_L=1}^{d_L^{(k)}} \sum_{i_R=1}^{d_R^{(k)}} C^{(k)}_{i_L,i_R}\, \ket{i_L}\otimes\ket{i_R}.
\end{equation}
Normalizing yields the pure state
\begin{equation}
\ket{\psi^{(k)}}
:= \frac{\mathrm{vec}(C^{(k)})}{\|C^{(k)}\|_F}
\in \mathcal{H}_L \otimes \mathcal{H}_R,
\qquad
\|C^{(k)}\|_F^2 = \sum_{i_L,i_R} |C^{(k)}_{i_L,i_R}|^2.
\end{equation}

\textbf{Reduced Density Matrix and Entanglement Spectrum.}
The reduced density matrix of the left subsystem is obtained by tracing out the right subsystem:
\begin{equation}
\rho_L^{(k)} := \mathrm{Tr}_R\!\left(\ket{\psi^{(k)}}\bra{\psi^{(k)}}\right).
\end{equation}
A direct calculation shows that
\begin{equation}
\rho_L^{(k)} = \frac{C^{(k)} (C^{(k)})^\dagger}{\mathrm{Tr}(C^{(k)} (C^{(k)})^\dagger)} = \frac{C^{(k)} (C^{(k)})^\dagger}{\|C^{(k)}\|_F^2}.
\end{equation}
The entanglement spectrum is obtained directly from the SVD of $C^{(k)}$. Let $C^{(k)} = U^{(k)} S^{(k)} V^{(k)\dagger}$ with singular values $s_1^{(k)} \ge s_2^{(k)} \ge \cdots \ge s_r^{(k)} > 0$, where $r$ is the rank of $C^{(k)}$. Since
\begin{equation}
C^{(k)} (C^{(k)})^\dagger = U^{(k)} \,\mathrm{diag}((s_1^{(k)})^2,\dots,(s_r^{(k)})^2,0,\dots,0)\, (U^{(k)})^\dagger,
\end{equation}
the eigenvalues of the reduced density matrix $\rho_L^{(k)}$ are
\begin{equation}
p_i^{(k)} = \frac{(s_i^{(k)})^2}{\|C^{(k)}\|_F^2} = \frac{(s_i^{(k)})^2}{\sum_{j=1}^r (s_j^{(k)})^2},
\quad i=1,\dots,r,
\qquad
p_{r+1}^{(k)}=\cdots=p_{d_L^{(k)}}^{(k)}=0.
\end{equation}
These eigenvalues $\{p_i^{(k)}\}$ form the \emph{entanglement spectrum} at cut $k$, and they directly determine the entanglement entropy.

\textbf{Von Neumann Entanglement Entropy.}
Given the eigenvalues $\{p_i^{(k)}\}$ of the reduced density matrix $\rho_L^{(k)}$, the von Neumann entanglement entropy at cut $k$ is defined as
\begin{equation}
S^{(k)} := -\sum_{i=1}^{d_L^{(k)}} p_i^{(k)} \log p_i^{(k)} = -\sum_{i=1}^r p_i^{(k)} \log p_i^{(k)},
\end{equation}
where we adopt the convention $0 \log 0 = 0$. This entropy quantifies the amount of entanglement across the bi-partition at cut $k$: $S^{(k)} = 0$ indicates a product state (no entanglement), while larger values indicate stronger entanglement. The maximum possible entropy is $S_{\max}^{(k)} = \log(\min(d_L^{(k)}, d_R^{(k)}))$, which occurs when all non-zero eigenvalues are equal (maximally entangled state). In our analysis of weight projection matrices $\Delta W$, we reshape the flattened matrix into a tensor and construct an MPS representation. By computing $S^{(k)}$ at each cut $k$, we obtain an \emph{entanglement profile} that reveals how entanglement varies across different bi-partitions of the weight projection matrix. This profile is central to our characterization of the entanglement structure in LoRA versus full fine-tuning.

For definitions and discussion of Rényi-$\alpha$ entropies (including Rényi-2) and their relationship to von Neumann entropy, see \textit{Appendix~\ref{app:von_neumann_renyi}}.

\subsection{Discussion: Spectral Gap, Stable Rank, and Entanglement Entropy}
\label{subsec:stable_rank}
A ``spectral gap''~\cite{saadamind} indicates that one eigenvalue is separated from the rest,
but entanglement entropy is small only when the largest eigenvalue is
close to 1, \ie, when the total tail mass $\sum_{i\ge 2} p_i$ is tiny.
A large gap does not prevent $\Theta(\log T)$ entanglement:
this occurs whenever a constant fraction of the probability mass is spread over
$\Theta(T)$ small eigenvalues. To quantify this mass concentration more precisely, we review the definition of the \emph{stable rank}~\cite{saadamind} in the following.

For a PSD matrix $\Sigma\succeq 0$ with eigenvalues $\lambda_1\ge\cdots\ge\lambda_T\ge 0$,
the stable rank is defined as
\begin{equation}
\mathrm{sr}(\Sigma) := \frac{\mathrm{Tr}(\Sigma^2)}{\lambda_1^2}
= 1 + \sum_{i\ge 2}\Big(\frac{\lambda_i}{\lambda_1}\Big)^2.
\end{equation}
The stable rank provides a quantitative measure of mass concentration: $\mathrm{sr}(\Sigma)\approx 1$ means the spectrum is sharply concentrated in the top eigenvalue,
which strongly suggests small entanglement for $\rho=\Sigma/\mathrm{Tr}\Sigma$.
In contrast, a large stable rank indicates that significant probability mass is distributed across many eigenvalues, which can lead to substantial entanglement even in the presence of a spectral gap. We formalize the connection between stable rank and entanglement entropy in Lemma~\ref{lem:sr_to_entropy}.

\subsection{Von Neumann and Rényi Entropies: Definitions and Relationships}
\label{app:von_neumann_renyi}

Throughout this work, we primarily use the von Neumann entropy to quantify artificial entanglement. However, for completeness, we also consider the Rényi-$\alpha$ family of entropies, which provides complementary perspectives on entanglement structure. This subsection reviews the definitions and discusses the relationships between these entropy measures.

\textbf{Definitions.}
Given a reduced density matrix $\rho$ with eigenvalues $\{p_i\}_{i=1}^T$ satisfying $\sum_i p_i = 1$, the von Neumann entropy is defined as
\begin{equation}
S(\rho) := -\sum_{i=1}^T p_i \log p_i,
\end{equation}
while the Rényi-$\alpha$ entropy (for $\alpha > 0$, $\alpha \neq 1$) is defined as
\begin{equation}
S_\alpha(\rho) := \frac{1}{1-\alpha}\log\!\left(\sum_{i=1}^T p_i^\alpha\right).
\end{equation}
In particular, the Rényi-2 entropy takes the form
\begin{equation}
S_2(\rho) = -\log\sum_{i=1}^T p_i^2 = -\log\,\mathrm{Tr}(\rho^2).
\end{equation}

\textbf{Relationships and Properties.} We discuss Rényi-2 entropy in addition to von Neumann entropy because it depends only on the purity $\mathrm{Tr}(\rho^2)$ of the reduced state, making it particularly convenient for theoretical analysis and providing lower bounds on von Neumann entropy. Several of our main theorems provide explicit bounds or scaling for Rényi-2 entropy (see the discussion below). However, the von Neumann entropy captures more fine-grained information about the entanglement spectrum, as it weights each eigenvalue by its logarithm, making it more sensitive to the distribution of spectral weight across modes. The von Neumann entropy corresponds to the limit $\alpha \to 1$ of the Rényi-$\alpha$ entropy: $S(\rho) = \lim_{\alpha \to 1} S_\alpha(\rho)$. For any $\alpha > 0$, the Rényi entropies satisfy the monotonicity property: $S_\alpha(\rho) \ge S_\beta(\rho)$ for $\alpha < \beta$. In particular, we have $S(\rho) \ge S_2(\rho)$ for all density matrices $\rho$. Additionally, in our analysis of artificial entanglement in LLMs, we primarily use the von Neumann entropy because it is the standard measure of entanglement in quantum information theory, providing a direct connection to the quantum information perspective we adopt, it captures the full entanglement spectrum, making it more sensitive to subtle differences in correlation structures (\eg, the entanglement valley), and it satisfies the subadditivity property, which is crucial for understanding how entanglement scales with system size (volume law vs area law).

\textbf{Rényi-2 Entropy Results from Main Text Theorems.}
For completeness, we collect here the Rényi-2 entropy results that accompany the von Neumann entropy results in the main text. For the attention matrix (Theorem~\ref{thm:attn_entropy_main}), the Rényi-2 entropy satisfies $S_2(A) = 2\log(1+\sigma^2) + o(1)$, where $\sigma$ is the bulk spectrum spread parameter, providing a simpler characterization than the full von Neumann entropy scaling. For the stable rank bound (Lemma~\ref{lem:sr_to_entropy_main}), we have $S_2(X) \le 2\log(1+\delta_1)$ where $\delta_1 \le \sqrt{(T-1)\eta}$ and $\eta = r_{\mathrm{stable}}-1$. For the output entanglement collapse (Theorem~\ref{thm:output_entropy_main}), the Rényi-2 entropy satisfies $S_2(X) = O(1/T) \to 0$, vanishing faster than the von Neumann entropy $S(X) = O((\log T)/T) \to 0$. See details in \textit{Appendix~\ref{app:theorem_and_proof}}. 

\section{Theorems and Proofs}
\label{app:theorem_and_proof}

\subsection{Setup: Attention Matrix and Output}

We follow the assumptions from \cite{saadamind}. Let $X_0\in\mathbb{R}^{T\times d}$ have orthonormal rows, \ie,
\begin{equation}
X_0X_0^\top = I_T.
\end{equation}
Define
\begin{equation}
Q := X_0W_Q\in\mathbb{R}^{T\times d_{qk}},\qquad
K := X_0W_K\in\mathbb{R}^{T\times d_{qk}},\qquad
V := X_0W_V\in\mathbb{R}^{T\times d_v},
\end{equation}
softmax attention
\begin{equation}
A := A(X_0) = \mathrm{softmax}\!\left(\frac{QK^\top}{\sqrt{d_{qk}}}\right)\in\mathbb{R}^{T\times T},
\end{equation}
and the single layer output
\begin{equation}
X := AV = A(X_0)\,X_0W_V \ \in\ \mathbb{R}^{T\times d_v}.
\end{equation}

\subsection{Modeling Assumptions}
We prove the theorems under a asymptotic model for softmax
attention at isotropic initialization, as studied in~\cite{saadamind}.

\begin{assumption}[Outlier + quartercircular bulk for attention~\cite{saadamind}]
\label{ass:outlier_bulk}
We note the fact that $A$ is row-stochastic (rows sum to $1$) due to the softmax function, and define
\begin{equation}
A^\perp := A - \frac{1}{T}\mathbf{1}\mathbf{1}^\top,
\end{equation}
where $\mathbf{1} \in \mathbb{R}^T$ denotes the all-ones vector. Assume the following hold as $T\to\infty$:

\textbf{Outlier.}
The top singular value satisfies $s_1(A)\to 1$ and the remainder obeys
\begin{equation}
\|A^\perp\|_{\mathrm{op}} = O(T^{-1/2}).
\end{equation}
Here, $\|\cdot\|_{\mathrm{op}}$ denotes the operator norm, which equals the largest singular value of the matrix. The condition $s_1(A) \to 1$ arises directly from the row-stochastic nature of the softmax attention, where the all-ones vector $\mathbf{1}$ is an eigenvector with eigenvalue $1$. The scaling $\|A^\perp\|_{\mathrm{op}} = O(T^{-1/2})$ reflects the behavior of large random matrices: since the squared Frobenius norm of $A^\perp$, defined as $\|A^\perp\|_F^2 := \sum_{i,j} (A^\perp_{ij})^2$, remains bounded (approx. $O(1)$, see \textbf{Step 2} in \textit{Proof} for \textbf{Theorem} \ref{thm:attn_entropy_full}) and is spread roughly evenly across $T$ dimensions, the strength of the noise along any specific direction scales inversely with the square root of the dimension.

\textbf{Bulk law.}
Let $x_1^{(T)},\dots,x_T^{(T)}$ be the singular values of $\sqrt{T}\,A^\perp$,\footnote{Note that the singular values of $A^\perp$ naturally decay as $s_i(A^\perp) \sim O(T^{-1/2})$, therefore by defining the rescaled values $x_i^{(T)} := \sqrt{T} s_i(A^\perp)$, we bring them back to the macroscopic scale $O(1)$ that allows their empirical distribution to converge to a non-trivial, fixed shape (the quartercircular law) rather than collapsing to a Dirac delta at zero.} their empirical measure $\nu_T:=\frac{1}{T}\sum_{i=1}^T \delta_{x_i^{(T)}}$
converges weakly to the quartercircular law $Q_\sigma$ with parameter $\sigma>0$. Intuitively, this means that as the dimension $T$ grows, the histogram of the singular values becomes a smooth, deterministic curve shaped like a quarter-circle. Moreover, the moments needed below are uniformly integrable:
\begin{equation}
\begin{aligned}
\frac{1}{T}\sum_{i=1}^T (x_i^{(T)})^2 &\to m_2, \\
\frac{1}{T}\sum_{i=1}^T (x_i^{(T)})^4 &\to m_4, \\
\frac{1}{T}\sum_{i=1}^T (x_i^{(T)})^2\log (x_i^{(T)})^2 &\to m_{2\log},
\end{aligned}
\end{equation}
where for the quartercircular law one has $m_2=\sigma^2$ and $m_4<\infty$.
\end{assumption}

\subsection{Entanglement of the Attention Matrix}
\begin{theorem}[Entanglement of the attention matrix: generally $\Theta(\log T)$]
\label{thm:attn_entropy_full}
Under Assumption~\ref{ass:outlier_bulk}~\cite{saadamind}, let
\begin{equation}
\rho_A := \rho_{\mathrm{row}}(A) = \frac{AA^\top}{\mathrm{Tr}(AA^\top)}.
\end{equation}
Then, as $T\to\infty$, the von Neumann and Rényi-2 entanglement entropies satisfy
\begin{equation}
\begin{aligned}
S(A) &= \frac{\sigma^2}{1+\sigma^2}\,\log T \;+\; C_A \;+\; o(1), \\
S_2(A) &= 2\log(1+\sigma^2)\;+\; o(1),
\end{aligned}
\label{eq:SA_S2A_full}
\end{equation}
where the constant term is
\begin{equation}
C_A
=
\log(1+\sigma^2)
+\frac{\sigma^2}{1+\sigma^2}\log(1+\sigma^2)
-\frac{m_{2\log}}{1+\sigma^2}.
\end{equation}
In particular, $S(A)$ typically grows like $\log T$ even when $A$ has a macroscopic singular-value outlier
(equivalently, a macroscopic eigenvalue gap in $AA^\top$). For a discussion of the relationship between von Neumann and Rényi entropies, see \textit{Appendix~\ref{app:von_neumann_renyi}}.
\end{theorem}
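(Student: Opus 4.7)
The plan is to reduce everything to the eigenvalues of $\rho_A$, which are precisely $p_i = s_i(A)^2/\|A\|_F^2$, and then split them into (i) the outlier $p_1$ carrying macroscopic mass $1/(1+\sigma^2)$, and (ii) the bulk $\{p_i\}_{i\ge 2}$ each of order $T^{-1}$ and distributed according to the quartercircular law. With this split, both entropies become explicit sums that can be evaluated against the limiting moments $m_2$ and $m_{2\log}$ provided by Assumption~\ref{ass:outlier_bulk}.

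To set up the Frobenius normalization, I would use that row-stochasticity $A\mathbf{1}=\mathbf{1}$ forces $A^\perp\mathbf{1}=0$, so $\langle \tfrac{1}{T}\mathbf{1}\mathbf{1}^\top, A^\perp\rangle_F = \tfrac{1}{T}\mathbf{1}^\top A^{\perp\top}\mathbf{1}=0$ and
\[
\|A\|_F^2 \;=\; \Big\|\tfrac{1}{T}\mathbf{1}\mathbf{1}^\top\Big\|_F^2 + \|A^\perp\|_F^2 \;=\; 1 + \tfrac{1}{T}\sum_{i=1}^{T}(x_i^{(T)})^2 \;\longrightarrow\; 1+\sigma^2.
\]
Next I would locate the bulk singular values of $A$ themselves. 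The Weyl inequality for rank-one perturbations gives the interlacing $s_{i+1}(A^\perp)\le s_i(A)\le s_{i-1}(A^\perp)$, so the rescaled bulk values $\{\sqrt{T}\,s_i(A)\}_{i\ge 2}$ sit inside a one-index shift of $\nu_T$ and therefore share the weak limit $Q_\sigma$. This is what lets me substitute $s_i(A)^2\approx (x_i^{(T)})^2/T$ in the tail sums.

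The Rényi-2 computation is then immediate: $\mathrm{Tr}(\rho_A^2)=\sum_i s_i(A)^4/\|A\|_F^4$, where $s_1(A)^4\to 1$ and the tail obeys $\sum_{i\ge 2}s_i(A)^4 \le \|A^\perp\|_{\mathrm{op}}^2\,\|A^\perp\|_F^2 = O(T^{-1})$, yielding $\mathrm{Tr}(\rho_A^2)\to (1+\sigma^2)^{-2}$ and $S_2(A)\to 2\log(1+\sigma^2)$. For the von Neumann entropy, I would split
\[
S(A) \;=\; -p_1\log p_1 \;-\; \sum_{i\ge 2} p_i\log p_i,
\]
substitute $p_i=(x_i^{(T)})^2/(T\|A\|_F^2)$ into the tail, and expand $\log p_i = \log(x_i^{(T)})^2 - \log T - \log\|A\|_F^2$. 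The three resulting Riemann sums converge via $m_2=\sigma^2$, $m_{2\log}$, and the already established $\|A\|_F^2\to 1+\sigma^2$; collecting the $\log T$ coefficient isolates the advertised prefactor $\sigma^2/(1+\sigma^2)$, and the remaining terms combine into the finite constant $C_A$.

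The main obstacle is making the tail sum involving $x^2\log x^2$ rigorous: this function is unbounded near the upper spectral edge and mildly singular at $x=0$, so plain weak convergence $\nu_T\Rightarrow Q_\sigma$ is not sufficient. I would have to lean on the uniform-integrability clause of Assumption~\ref{ass:outlier_bulk}, which is precisely what yields $\tfrac{1}{T}\sum_i(x_i^{(T)})^2\log(x_i^{(T)})^2\to m_{2\log}$, and separately argue that the one-index reshuffling between the bulk singular values of $A$ and of $A^\perp$ (the residue of Weyl interlacing) transfers this moment through with only an $o(1)$ error, using that the dropped index carries mass at most $\|A^\perp\|_{\mathrm{op}}^2=O(T^{-1})$. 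Everything else, including that $A$ is nonsymmetric so $\mathbf{1}/\sqrt{T}$ is only an approximate left singular vector, affects only subleading $o(1)$ corrections absorbed into the stated asymptotics.
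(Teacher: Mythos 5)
Your proposal is correct and follows essentially the same route as the paper's proof: the orthogonal Frobenius decomposition $\|A\|_F^2 = 1+\|A^\perp\|_F^2 \to 1+\sigma^2$, the outlier/bulk split of the entanglement spectrum, the purity computation for Rényi-2, and the expansion $\log p_i = \log(x_i^{(T)})^2 - \log T - \log\|A\|_F^2$ evaluated against the moments $m_2$ and $m_{2\log}$. Your use of Weyl/Thompson singular-value interlacing to transfer the bulk limit from $A^\perp$ to $A$ is the same device the paper invokes (via the rank inequality for empirical spectral distributions and Thompson's index-shift inequality), and your explicit appeal to the uniform-integrability clause for the $x^2\log x^2$ functional matches the paper's treatment.
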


\begin{proof}
\textbf{Step 1: Orthogonality and Frobenius decomposition.}
Because $A$ is row-stochastic, $\sum_{j}A_{ij}=1$ for each $i$, hence $\sum_j A^\perp_{ij} = 0$ (since the subtracted term $\frac{1}{T}\mathbf{1}\mathbf{1}^\top$ also has unit row sums) and therefore
\begin{equation}
\begin{aligned}
\left\langle A^\perp, \tfrac{1}{T}\mathbf{1}\mathbf{1}^\top\right\rangle_F
&= \sum_{i,j} A^\perp_{ij}\cdot \frac{1}{T}
= \frac{1}{T}\sum_i\sum_j A^\perp_{ij}
= 0.
\end{aligned}
\end{equation}
Thus,
\begin{equation}
\begin{aligned}
\|A\|_F^2
&= \Big\|\frac{1}{T}\mathbf{1}\mathbf{1}^\top\Big\|_F^2 + \|A^\perp\|_F^2
= 1 + \|A^\perp\|_F^2,
\end{aligned}
\label{eq:Frob_decomp}
\end{equation}
since $\|\frac{1}{T}\mathbf{1}\mathbf{1}^\top\|_F^2=T^2\cdot(1/T^2)=1$.

\textbf{Step 2: Limit of $\|A^\perp\|_F^2$.}
Let $x_i^{(T)}$ be the singular values of $\sqrt{T}A^\perp$.
Then
\begin{equation}
\begin{aligned}
\|A^\perp\|_F^2
&= \sum_{i=1}^T s_i(A^\perp)^2
= \sum_{i=1}^T \frac{(x_i^{(T)})^2}{T}
= \frac{1}{T}\sum_{i=1}^T (x_i^{(T)})^2
\;\to\; m_2=\sigma^2,
\end{aligned}
\end{equation}
where we used Assumption~\ref{ass:outlier_bulk}(2)~\cite{saadamind}.
Combining with \eqref{eq:Frob_decomp} gives
\begin{equation}
\|A\|_F^2 \to 1+\sigma^2.
\label{eq:A_frob_limit}
\end{equation}

\textbf{Step 3: Entanglement spectrum from singular values.}
By the matrix--state correspondence reviewed in the preliminaries,
the eigenvalues of $\rho_A$ are
\begin{equation}
p_i^{(T)}=\frac{s_i(A)^2}{\|A\|_F^2},\quad i=1,\dots,T.
\end{equation}
Assumption~\ref{ass:outlier_bulk}(1) states $s_1(A)\to 1$ and
$\|A^\perp\|_{\mathrm{op}}=O(T^{-1/2})$, and hence for $i\ge 2$ we have the rank-one approximation bound
\begin{equation}
\begin{aligned}
s_i(A)
&\le \left\|A-\tfrac{1}{T}\mathbf{1}\mathbf{1}^\top\right\|_{\mathrm{op}}
= \|A^\perp\|_{\mathrm{op}}
= O(T^{-1/2}),
\end{aligned}
\end{equation}
so $p_i^{(T)}=O(1/T)$ for $i\ge 2$.
Moreover, \eqref{eq:A_frob_limit} implies
\begin{equation}
p_1^{(T)}=\frac{s_1(A)^2}{\|A\|_F^2}\to \frac{1}{1+\sigma^2}.
\label{eq:p1_limit}
\end{equation}

\textbf{Step 4: Rényi-2 entropy.}
We have
\begin{equation}
\begin{aligned}
\mathrm{Tr}(\rho_A^2)
&= \sum_{i=1}^T (p_i^{(T)})^2
= (p_1^{(T)})^2 + \sum_{i\ge 2}(p_i^{(T)})^2.
\end{aligned}
\end{equation}
For the tail term, using $p_i^{(T)}=s_i(A)^2/\|A\|_F^2$ and the bound $s_i(A)=O(T^{-1/2})$ for $i\ge 2$,
\begin{equation}
\begin{aligned}
\sum_{i\ge 2}(p_i^{(T)})^2
&= \frac{1}{\|A\|_F^4}\sum_{i\ge 2} s_i(A)^4
\le \frac{T-1}{\|A\|_F^4}\,\|A^\perp\|_{\mathrm{op}}^4
= O\!\left(\frac{1}{T}\right),
\end{aligned}
\end{equation}
where we used $\|A^\perp\|_{\mathrm{op}}=O(T^{-1/2})$ from Assumption~\ref{ass:outlier_bulk}
and $\|A\|_F^2\to 1+\sigma^2$ from \eqref{eq:A_frob_limit}.
Therefore,
\begin{equation}
\begin{aligned}
\mathrm{Tr}(\rho_A^2)
&= (p_1^{(T)})^2 + O(1/T)
\to \frac{1}{(1+\sigma^2)^2},
\end{aligned}
\end{equation}
and hence
\begin{equation}
S_2(A)=-\log \mathrm{Tr}(\rho_A^2) \to 2\log(1+\sigma^2),
\end{equation}
which matches \eqref{eq:SA_S2A_full}. For definitions and properties of Rényi-2 entropy, see \textit{Appendix~\ref{app:von_neumann_renyi}}.

\textbf{Step 5: von Neumann entropy.}
Write
\begin{equation}
S(A)= -p_1^{(T)}\log p_1^{(T)} \;-\; \sum_{i\ge 2} p_i^{(T)}\log p_i^{(T)}.
\end{equation}
The first term converges using \eqref{eq:p1_limit}:
\begin{equation}
-p_1^{(T)}\log p_1^{(T)} \to \frac{1}{1+\sigma^2}\log(1+\sigma^2).
\label{eq:head_term}
\end{equation}
For the tail, we use the representation $s_i(A^\perp) = x_i^{(T)}/\sqrt{T}$ and work directly at the level of normalized sums. 
Under Assumption~\ref{ass:outlier_bulk} (uniform integrability of the relevant moments), replacing $\{s_i(A)\}_{i\ge 2}$ by $\{s_i(A^\perp)\}_{i=1}^T$ in the forthcoming Riemann–sum expressions incurs only an $o(1)$ error as $T\to\infty$. 
This is justified by the theory of finite-rank perturbations. 
Since $A$ is a rank-1 perturbation of $A^\perp$, the rank inequalityfor singular value distributions implies that their empirical cumulative distribution functions differ by at most $1/T$ in Kolmogorov distance (defined as \(\|F-G\|_\infty
:= \sup_{x\in\mathbb{R}} |F(x)-G(x)|\) for two distribution $F$ and $G$):
\begin{equation}
    \bigl\|F_A - F_{A^\perp}\bigr\|_\infty 
\;\le\; \frac{\operatorname{rank}(A-A^\perp)}{T}
\;=\; \frac{1}{T},\footnote{Although the classical rank inequality (see Theorem~A.43 in \cite{bai2010spectral}) is
stated for eigenvalues of Hermitian matrices, the corresponding result for
singular values of arbitrary matrices follows immediately from Hermitian
dilation. Recall that the singular values of $A$ are the eigenvalues of the
Hermitian matrix \(
H(A)
=
\begin{pmatrix}
0 & A \\
A^\top & 0
\end{pmatrix}. \)
If $A = A^\perp + uv^\top$ with $\operatorname{rank}(uv^\top)=1$, then \(
\operatorname{rank}\!\bigl(H(A)-H(A^\perp)\bigr)=2.\) Applying the rank inequality to $H(A)$ and $H(A^\perp)$ yields

\begin{equation}
    \|F_A - F_{A^\perp}\|_\infty
=
\|F_{H(A)} - F_{H(A^\perp)}\|_\infty
\le \frac{2}{2T}
= \frac{1}{T}.
\end{equation}

Equivalently, a rank-1 perturbation of a matrix can change at most one singular
value index, so the empirical singular-value distribution can shift vertically
by at most $1/T$.}
\end{equation}
where the empirical singular value distribution (or empirical spectral distribution (ESD)) of $A$ is defined as
\(
F_A(x)
:=\frac{1}{T}\#\{\, i : s_i(A)\le x \}\) where \(\# E\)
denotes the cardinality of the set \(E\), and similarly for $F_{A^\perp}$. 
Here $s_1(A)\ge \cdots \ge s_T(A)$ denote the singular values. Consequently, only $O(1)$ singular values can deviate by a non-negligible amount, which is negligible compared to the ambient dimension $T$\footnote{
Formally, the ``index-shift inequality'' for singular
values proved by \cite{thompson1976behavior} (See Theorem 1, 3 in \cite{thompson1976behavior}) guarantee stability of individual singular values, while the rank inequality ensures that 
$\|F_A - F_{A^\perp}\|_\infty \le \mathrm{rank}(A-A^\perp)/T = 1/T$. 
Therefore, the normalized counting measures of $A$ and $A^\perp$ are asymptotically identical.}.
Because our sums are normalized by $1/T$ (implicitly through the Frobenius-norm–based weights\footnote{The $1/T$ normalization arises because the Frobenius norm induces weights of the 
form $p_i = s_i(A)^2 / \|A\|_F^2$, and bulk singular values satisfy 
$s_i(A^\perp) = O(T^{-1/2})$, so that $p_i = O(1/T)$. Consequently, any spectral sum of the form $\sum_i p_i\,\Psi(s_i)$ is effectively an average over $T$ terms. Since a rank-1 perturbation can alter only $O(1)$ singular values, the total contribution of these mismatched
indices is at most $O(1)\cdot (1/T) = O(1/T)$, which vanishes as $T\to\infty$.
}), the contribution of these $O(1)$ mismatched indices vanishes at rate $O(1/T)$. The remaining sum over the bulk can thus be treated as a Riemann sum that converges to the integral defined by the limiting spectral distribution of $\sqrt{T}A^\perp$. Concretely,
\begin{equation}
\begin{aligned}
\sum_{i\ge 2} p_i^{(T)}\log p_i^{(T)}
&=
\frac{1}{\|A\|_F^2}\sum_{i\ge 2} s_i(A)^2 \log\Big(\frac{s_i(A)^2}{\|A\|_F^2}\Big) \\
&=
\frac{1}{\|A\|_F^2}\sum_{i=1}^{T} s_i(A^\perp)^2 \log\Big(\frac{s_i(A^\perp)^2}{\|A\|_F^2}\Big)
+ o(1).
\end{aligned}
\end{equation}
Now substitute $s_i(A^\perp)^2 = (x_i^{(T)})^2/T$:
\begin{equation}
\begin{aligned}
\sum_{i\ge 2} p_i^{(T)}\log p_i^{(T)}
&=
\frac{1}{\|A\|_F^2}\sum_{i=1}^{T} \frac{(x_i^{(T)})^2}{T}\,
\log\Big(\frac{(x_i^{(T)})^2}{T\|A\|_F^2}\Big) + o(1) \\
&=
\frac{1}{\|A\|_F^2}\sum_{i=1}^{T} \frac{(x_i^{(T)})^2}{T}\,
\Big(\log (x_i^{(T)})^2 - \log T - \log\|A\|_F^2\Big) + o(1).
\end{aligned}
\end{equation}
Therefore,
\begin{equation}
\begin{aligned}
-\sum_{i\ge 2} p_i^{(T)}\log p_i^{(T)}
&=
\frac{\log T}{\|A\|_F^2}\cdot \frac{1}{T}\sum_{i=1}^T (x_i^{(T)})^2
+\frac{\log\|A\|_F^2}{\|A\|_F^2}\cdot \frac{1}{T}\sum_{i=1}^T (x_i^{(T)})^2 \\
&\qquad
-\frac{1}{\|A\|_F^2}\cdot \frac{1}{T}\sum_{i=1}^T (x_i^{(T)})^2\log (x_i^{(T)})^2
+ o(1).
\end{aligned}
\label{eq:tail_entropy_expansion}
\end{equation}
Using Assumption~\ref{ass:outlier_bulk},
$\frac{1}{T}\sum_i (x_i^{(T)})^2\to m_2=\sigma^2$ and
$\frac{1}{T}\sum_i (x_i^{(T)})^2\log(x_i^{(T)})^2\to m_{2\log}$,
and using $\|A\|_F^2\to 1+\sigma^2$, we obtain
\begin{equation}
\begin{aligned}
-\sum_{i\ge 2} p_i^{(T)}\log p_i^{(T)}
&=
\frac{\sigma^2}{1+\sigma^2}\log T
+\frac{\sigma^2}{1+\sigma^2}\log(1+\sigma^2)
-\frac{m_{2\log}}{1+\sigma^2}
+o(1).
\end{aligned}
\end{equation}
Combining with \eqref{eq:head_term} yields \eqref{eq:SA_S2A_full} with the stated constant term.
\end{proof}

\textbf{The Attention Cardy Formula: Critical Scaling and Physical Interpretation.}
Here we provides the full proof of the Attention Cardy Formula stated in Theorem~\ref{thm:attn_entropy_main} (see Section~\ref{sec:random_matrix_theory}). The complete derivation shows that $S(A) = \mathcal{C}_{\mathrm{attn}} \log T + C_A + o(1)$, where the constant term $C_A$ is explicitly given in \eqref{eq:SA_S2A_full}. For the physical interpretation, connection to conformal field theory, and discussion of criticality and long-range correlations, see the main text (Section~\ref{sec:random_matrix_theory} and Remark following Theorem~\ref{thm:attn_entropy_main}).

\subsection{A General Bridge Between Stable Rank and Entanglement Bounds}

To have \textbf{Theorem \ref{thm:output_entropy_full}}, we firstly have \textbf{Lemma \ref{lem:sr_to_entropy}} shown below:

\begin{lemma}[Stable rank control implies small entanglement]
\label{lem:sr_to_entropy}
Let $M\in\mathbb{R}^{T\times d}$, $\Sigma:=MM^\top\succeq 0$ with eigenvalues
$\lambda_1\ge\cdots\ge\lambda_T\ge 0$.
Let $\rho:=\Sigma/\mathrm{Tr}(\Sigma)$ and define the stable rank
\begin{equation}
r_{\mathrm{stable}}(\Sigma)
:=\frac{\mathrm{Tr}(\Sigma^2)}{\lambda_1^2}
= 1+\sum_{i\ge 2}\Big(\frac{\lambda_i}{\lambda_1}\Big)^2.
\end{equation}
Let
\begin{equation}
\eta := r_{\mathrm{stable}}(\Sigma)-1
= \sum_{i\ge 2}\Big(\frac{\lambda_i}{\lambda_1}\Big)^2.
\end{equation}
Then the tail mass satisfies
\begin{equation}
\delta_1 := \sum_{i\ge 2}\frac{\lambda_i}{\lambda_1}
\ \le\ \sqrt{(T-1)\eta}.
\label{eq:delta1_bound}
\end{equation}
Consequently, letting $\delta:=\min\{1,\delta_1\}$,
\begin{equation}
\begin{aligned}
1-\lambda_{\max}(\rho) &\le \frac{\delta_1}{1+\delta_1}\le \delta, \\
S(M) = S(\rho) &\le h_2(\delta) + \delta\log(T-1), \\
S_2(M)=S_2(\rho) &\le 2\log(1+\delta_1),
\end{aligned}
\label{eq:bridge_bounds}
\end{equation}
where $h_2(u):=-u\log u-(1-u)\log(1-u)$.
\end{lemma}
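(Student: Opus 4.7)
My plan is to reduce every claim in \eqref{eq:bridge_bounds} to one of two elementary inputs: (i) Cauchy--Schwarz, which converts the squared-ratio quantity $\eta$ into a bound on the first-moment tail mass $\delta_1$; and (ii) the peak-constrained maximum-entropy principle, which turns a lower bound on $\lambda_{\max}(\rho)$ into an upper bound on $S(\rho)$. The Rényi-2 bound will come essentially for free from (i), because $\mathrm{Tr}(\rho^2)$ is controlled directly by $p_1^2$.

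First I would establish \eqref{eq:delta1_bound}. Applying Cauchy--Schwarz to the $T-1$ ratios $\lambda_i/\lambda_1$ for $i\ge 2$ gives
\begin{equation}
\delta_1 = \sum_{i\ge 2}\frac{\lambda_i}{\lambda_1} \;\le\; \sqrt{T-1}\,\Bigl(\sum_{i\ge 2}(\lambda_i/\lambda_1)^2\Bigr)^{1/2} = \sqrt{(T-1)\eta}.
\end{equation}
Next, since $\mathrm{Tr}(\Sigma) = \lambda_1(1+\delta_1)$, the eigenvalues of $\rho = \Sigma/\mathrm{Tr}(\Sigma)$ are $p_i = \lambda_i/[\lambda_1(1+\delta_1)]$, so $\lambda_{\max}(\rho) = p_1 = 1/(1+\delta_1)$ and the total tail mass obeys
\begin{equation}
1-\lambda_{\max}(\rho) = \frac{\delta_1}{1+\delta_1} \;\le\; \min\{1,\delta_1\} = \delta,
\end{equation}
which is the first line of \eqref{eq:bridge_bounds}. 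Simultaneously, $\mathrm{Tr}(\rho^2) = \sum_i p_i^2 \ge p_1^2 = (1+\delta_1)^{-2}$, so $S_2(\rho) = -\log\mathrm{Tr}(\rho^2) \le 2\log(1+\delta_1)$, which is the Rényi-2 bound.

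For the von Neumann bound I would invoke the standard fact that, subject to the constraint $p_1 \ge 1-\delta$ and $\sum_{i\ge 2}p_i = 1-p_1$, the Shannon entropy of a distribution on $T$ atoms is maximized by spreading the remaining mass uniformly across the $T-1$ tail modes, yielding
\begin{equation}
S(\rho) \;\le\; -p_1\log p_1 - (1-p_1)\log\!\frac{1-p_1}{T-1} = h_2(1-p_1) + (1-p_1)\log(T-1).
\end{equation}
The main subtlety, and the only step that is not purely mechanical, is the monotonicity argument used to replace $1-p_1$ by its upper bound $\delta$: the map $u\mapsto h_2(u) + u\log(T-1)$ is concave and non-decreasing only on $[0,(T-1)/T]$, not on all of $[0,1]$. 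Fortunately the regime relevant for the applications in this paper (e.g.\ Theorem~\ref{thm:output_entropy_main}, where $\eta = O(T^{-3})$ forces $\delta = O(T^{-1})$) keeps $\delta$ well inside this interval, so monotonicity applies and the stated bound $S(\rho) \le h_2(\delta) + \delta\log(T-1)$ follows, completing the lemma.
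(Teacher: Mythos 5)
Your proof follows essentially the same route as the paper's: Cauchy--Schwarz for $\delta_1 \le \sqrt{(T-1)\eta}$, the identity $\lambda_{\max}(\rho) = 1/(1+\delta_1)$ via $\mathrm{Tr}(\Sigma)=\lambda_1(1+\delta_1)$, the peak-constrained maximum-entropy bound for the von Neumann entropy, and $\mathrm{Tr}(\rho^2)\ge p_1^2$ for the R\'enyi-2 bound. Your monotonicity caveat is a genuine refinement that the paper's own proof glosses over: the assertion that the entropy of a distribution on $T$ atoms with tail mass at most $\delta$ is maximized by $\bigl(1-\delta,\ \delta/(T-1),\dots,\delta/(T-1)\bigr)$ holds only for $\delta \le (T-1)/T$, and at $\delta=1$ the true maximizer is the uniform distribution with entropy $\log T > \log(T-1)$, so the stated bound technically fails in that degenerate corner (e.g.\ when all $\lambda_i$ are equal). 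As you observe, this does not affect any application in the paper, where $\delta = O(T^{-1})$ keeps you well inside the monotone regime.
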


\begin{proof}
\textbf{Step 1: Tail $\ell_1$ control from stable rank.}
By Cauchy--Schwarz\footnote{Applying the Cauchy-Schwarz inequality to the vector of tail eigenvalue ratios $u=(\lambda_2/\lambda_1, \dots, \lambda_T/\lambda_1)$ and the all-ones vector $\mathbf{1} \in \mathbb{R}^{T-1}$ yields $(\sum_{i\ge 2} \lambda_i/\lambda_1)^2 = \langle u, \mathbf{1} \rangle^2 \le \|u\|_2^2 \|\mathbf{1}\|_2^2$. Since $\|u\|_2^2 = \eta$ and $\|\mathbf{1}\|_2^2 = T-1$, we have $\delta_1^2 \le \eta(T-1)$.},
\begin{equation}
\Big(\sum_{i\ge 2}\lambda_i\Big)^2 \le (T-1)\sum_{i\ge 2}\lambda_i^2.
\end{equation}
Dividing by $\lambda_1^2$ gives
\begin{equation}
\begin{aligned}
\delta_1^2
&= \Big(\sum_{i\ge 2}\frac{\lambda_i}{\lambda_1}\Big)^2
\le (T-1)\sum_{i\ge 2}\Big(\frac{\lambda_i}{\lambda_1}\Big)^2
= (T-1)\eta,
\end{aligned}
\end{equation}
which proves \eqref{eq:delta1_bound}.

\textbf{Step 2: Bound the top eigenvalue of $\rho$.}
Since $\mathrm{Tr}(\Sigma)=\lambda_1+\sum_{i\ge2}\lambda_i=\lambda_1(1+\delta_1)$,
\begin{equation}
\lambda_{\max}(\rho)=\frac{\lambda_1}{\mathrm{Tr}(\Sigma)}=\frac{1}{1+\delta_1}.
\end{equation}
Therefore,
\begin{equation}
1-\lambda_{\max}(\rho)=1-\frac{1}{1+\delta_1}=\frac{\delta_1}{1+\delta_1}\le \min\{1,\delta_1\}=\delta,
\end{equation}
establishing the first inequality in \eqref{eq:bridge_bounds}.

\textbf{Step 3: Von Neumann entropy upper bound.}
Let $p_1\ge p_2\ge \cdots\ge p_T$ be eigenvalues of $\rho$.
From Step 2, $p_1\ge 1-\delta$ and thus $\sum_{i\ge2}p_i\le \delta$ due to normalization condition. Among all distributions on $T$ points with tail mass at most $\delta$, the Von Neumann
entropy is maximized by $q_1=1-\delta$ and $q_2=\cdots=q_T=\delta/(T-1)$.
Hence $S(\rho)\le H(q)$, and
\begin{equation}
\begin{aligned}
H(q)
&= -(1-\delta)\log(1-\delta) -(T-1)\frac{\delta}{T-1}\log\Big(\frac{\delta}{T-1}\Big) \\
&= h_2(\delta) + \delta\log(T-1),
\end{aligned}
\end{equation}
which gives the second inequality in \eqref{eq:bridge_bounds}.

\textbf{Step 4: Rényi-2 bound.}
Since $\mathrm{Tr}(\rho^2)=\sum_i p_i^2 \ge p_1^2$ and $p_1=\lambda_{\max}(\rho)=1/(1+\delta_1)$,
\begin{equation}
S_2(\rho)=-\log \mathrm{Tr}(\rho^2)\le -\log p_1^2 = 2\log(1+\delta_1),
\end{equation}
which is the third inequality in \eqref{eq:bridge_bounds}. For definitions and properties of Rényi-2 entropy, see \textit{Appendix~\ref{app:von_neumann_renyi}}.
\end{proof}

\subsection{Output Entanglement Collapse from Stable Rank Collapse}

\begin{theorem}[Output entanglement collapse (token--feature), conditional]
\label{thm:output_entropy_full}
Let $X=AV\in\mathbb{R}^{T\times d_v}$ with $V=X_0W_V$, and define
\begin{equation}
\Sigma:=XX^\top.
\end{equation}
Assume that with overwhelming probability%
\footnote{Overwhelming probability means $1-T^{-c}$ for every fixed $c>0$, for all sufficiently large $T$.}
\begin{equation}
r_{\mathrm{stable}}(\Sigma)-1 = O(T^{-3}),
\label{eq:sr_rate_assumption}
\end{equation}
as established in rank-collapsed regimes in~\cite{saadamind}.
Then, with overwhelming probability,
\begin{equation}
\begin{aligned}
S(X)&=O\!\left(\frac{\log T}{T}\right)\to 0,\\
S_2(X)&=O\!\left(\frac{1}{T}\right)\to 0.
\end{aligned}
\end{equation}
For a discussion of Rényi-2 entropy and its relationship to von Neumann entropy, see \textit{Appendix~\ref{app:von_neumann_renyi}}.
\end{theorem}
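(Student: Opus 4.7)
The plan is to reduce Theorem~\ref{thm:output_entropy_full} to a direct application of Lemma~\ref{lem:sr_to_entropy}, since the lemma already converts any quantitative control on the stable-rank excess $\eta := r_{\mathrm{stable}}(\Sigma) - 1$ into explicit upper bounds on both $S(X)$ and $S_2(X)$. The structural work of relating token--feature entanglement to spectral concentration is already done; what remains is purely asymptotic bookkeeping driven by the hypothesis $\eta = O(T^{-3})$ together with tracking the ``overwhelming probability'' quantifier.

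First I would invoke the tail-ratio bound $\delta_1 \le \sqrt{(T-1)\eta}$ established in the lemma. Substituting $\eta = O(T^{-3})$ yields $\delta_1 \le \sqrt{(T-1)\cdot O(T^{-3})} = O(T^{-1})$, so $\delta := \min\{1,\delta_1\} = O(T^{-1})$ for all sufficiently large $T$. This single scaling estimate drives both entropy bounds uniformly on the event where \eqref{eq:sr_rate_assumption} holds.

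Next I would substitute these scalings into the two bounds supplied by Lemma~\ref{lem:sr_to_entropy}. For the R\'enyi-2 entropy, the bound $S_2(X)\le 2\log(1+\delta_1)$ combined with $\log(1+x)\le x$ for $x\ge 0$ gives $S_2(X) \le 2\delta_1 = O(T^{-1})$, which is the second conclusion. For the von Neumann entropy, the bound $S(X)\le h_2(\delta) + \delta\log(T-1)$ requires expanding $h_2(\delta) = -\delta\log\delta - (1-\delta)\log(1-\delta) = O(\delta\log(1/\delta))$ for small $\delta$; with $\delta = O(T^{-1})$ this forces $h_2(\delta) = O((\log T)/T)$, while the term $\delta\log(T-1)$ is also $O((\log T)/T)$. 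Summing the two yields $S(X) = O((\log T)/T) \to 0$, establishing the first conclusion.

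The main obstacle is not analytical but probabilistic: since Lemma~\ref{lem:sr_to_entropy} is a \emph{deterministic} inequality conditional on the realized value of $\eta$, one must check that the chain of implications is valid uniformly on the event where \eqref{eq:sr_rate_assumption} holds. This is straightforward because the implicit constants in our $O(\cdot)$ estimates depend only on the constant appearing in \eqref{eq:sr_rate_assumption}, so the conclusion inherits the same ``overwhelming probability'' quantifier from the hypothesis without any union-bound subtlety. A minor secondary point is that the normalization identifying $S(X)$ with $S(\rho)$ for $\rho = \Sigma/\mathrm{Tr}(\Sigma)$ is already packaged inside the lemma, so no additional matrix-to-state argument is needed here.
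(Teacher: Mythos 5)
Your proposal is correct and follows essentially the same route as the paper's proof: both apply Lemma~\ref{lem:sr_to_entropy} with $\eta = O(T^{-3})$, deduce $\delta_1 = O(T^{-1})$ from the Cauchy--Schwarz tail bound, and then read off $S(X) = O((\log T)/T)$ from $h_2(\delta) + \delta\log(T-1)$ and $S_2(X) = O(1/T)$ from $2\log(1+\delta_1)$. Your added remarks on the inequality $\log(1+x)\le x$ and on propagating the overwhelming-probability quantifier are sound but do not change the argument.
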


\begin{proof}
Apply Lemma~\ref{lem:sr_to_entropy} with $\eta=r_{\mathrm{stable}}(\Sigma)-1$.
Under \eqref{eq:sr_rate_assumption} we have $\delta_1\le \sqrt{(T-1)\eta}=O(T^{-1})$ and hence
$\delta=\min\{1,\delta_1\}=O(T^{-1})$.
Lemma~\ref{lem:sr_to_entropy} yields
\begin{equation}
S(X)\le h_2(\delta)+\delta\log(T-1).
\end{equation}
Using $h_2(\delta)=O(\delta\log(1/\delta))=O((\log T)/T)$ for $\delta=O(1/T)$ gives
$S(X)=O((\log T)/T)\to 0$.
Similarly, by Lemma~\ref{lem:sr_to_entropy},
\begin{equation}
S_2(X)\le 2\log(1+\delta_1)=O(\delta_1)=O(1/T)\to 0.
\end{equation}
For definitions and properties of Rényi-2 entropy, see \textit{Appendix~\ref{app:von_neumann_renyi}}.
\end{proof}

\begin{remark}
\label{rem:init_vs_collapse}
Theorem~\ref{thm:output_entropy_full} is \emph{conditional}. Under isotropic random initialization of $W_V$
and $X_0X_0^\top=I_T$, one often has $VV^\top\approx I_T$ and hence $\Sigma\approx AA^\top$~\cite{saadamind},
so the output entanglement scaling can mirror that of $A$ in
Theorem~\ref{thm:attn_entropy_full} rather than vanishing.
The stable rank collapse condition $r_{\mathrm{stable}}(\Sigma)-1\ll 1$ should therefore be interpreted as a
property that can emerge in a trained or otherwise constrained regime and can be validated empirically.
\end{remark}

\begin{figure*}[htbp]
    \centering
\includegraphics[width=0.8\linewidth]{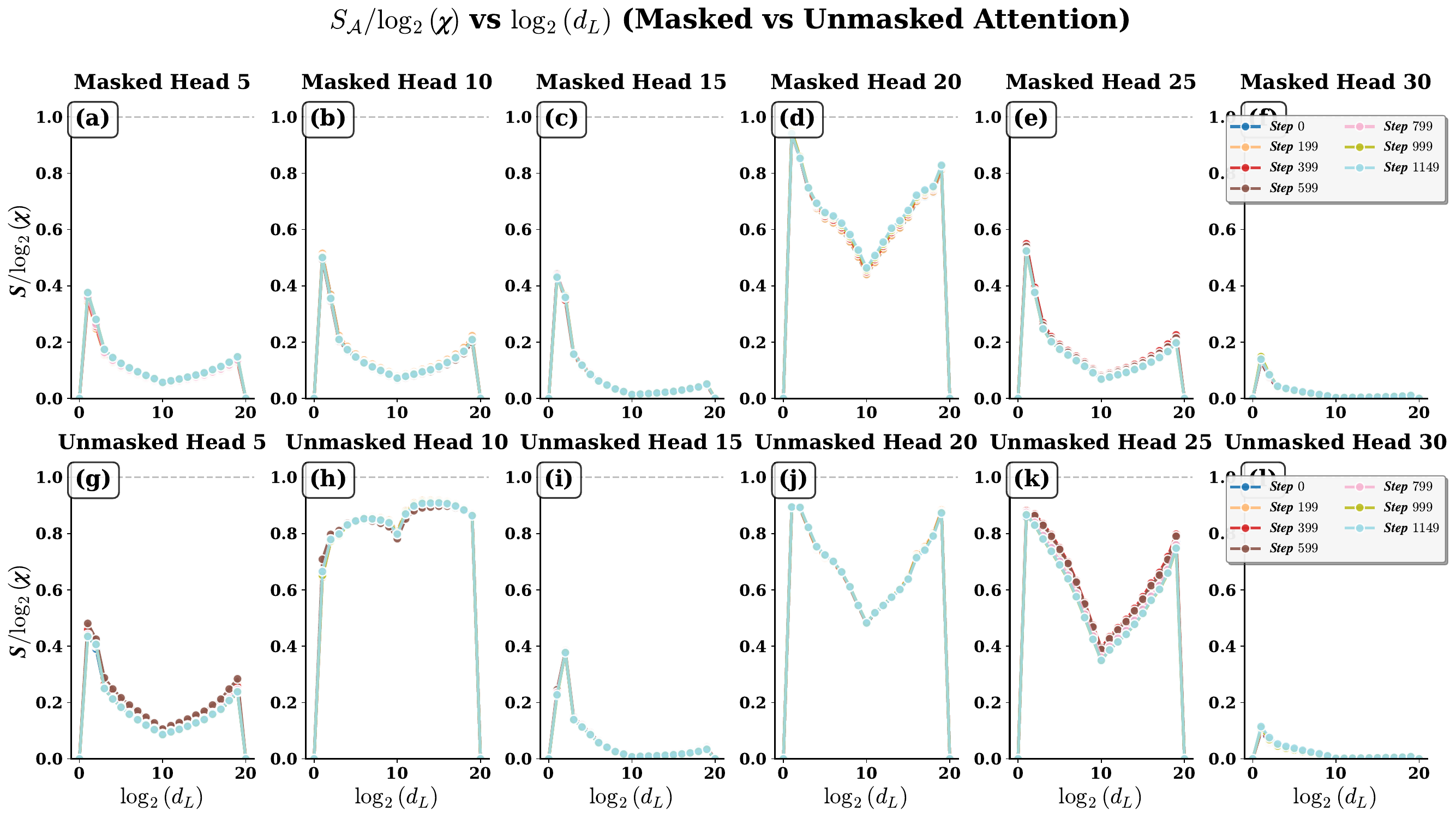}
\caption{
Comparison of masked and unmasked attention entanglement entropy across multiple heads. For each selected head, we plot the normalized entanglement entropy $S_A/\log(\chi)$ as with respect to the bi-partition during training. Removing the causal mask leads to markedly different behaviors across heads: some heads (Head 10, 25) exhibit a substantial increase in entanglement entropy when future positions become accessible, indicating a strong reliance on the causal constraint, while others remain almost unchanged, revealing intrinsically low-entanglement attention patterns that are insensitive to masking.}
\label{fig:masked_unmasked_attention_entropy_normalized_comparison}
\end{figure*}

\section{MPS Adaptation: Method and Relationship to MPS Modeling}
\label{app:MPS_adaptation_method}

This section clarifies the distinction between \textit{MPS modeling} (an analysis tool) and \textit{MPS adaptation} (a PEFT method), explains their relationship, and demonstrates why MPS adaptation can be parameter-efficient.

\textbf{MPS Modeling vs. MPS Adaptation.} It is crucial to distinguish between two different uses of MPS in this work. \textbf{MPS Modeling (Analysis Tool):} We use MPS decomposition as an \textit{analysis framework} to study the entanglement structure of parameter updates from LoRA and FFT. Specifically, we decompose the learned weight update matrices $\Delta W$ into MPS representations post-hoc to compute entanglement entropy profiles. This is purely for analysis purposes: the training process itself does not use MPS structure. \textbf{MPS Adaptation (PEFT Method):} We term and use MPS adaptation that directly parameterizes weight updates using MPS structure during training. Unlike LoRA which uses low-rank matrices $BA$, MPS adaptation uses tensor network structure as the fundamental parameterization, making the MPS structure intrinsic to the fine-tuning method rather than a tool for analysis.

\textbf{Relationship and Connection.} The connection between MPS modeling and MPS adaptation is that both utilize the same mathematical structure (MPS), but for different purposes: MPS modeling uses it to \textit{analyze} existing methods, while MPS adaptation uses it to \textit{define} a fine-tuning method. This allows us to apply the same entanglement analysis framework to MPS adaptation, enabling direct comparison of entanglement structures across different fine-tuning paradigms.

\textbf{Parameter Efficiency of MPS Adaptation.} To illustrate why MPS adaptation can be parameter-efficient, consider a concrete example. For a weight update matrix $\Delta W \in \mathbb{R}^{d_{\mathrm{out}} \times d_{\mathrm{in}}}$ with $d_{\mathrm{out}} = 4096$ and $d_{\mathrm{in}} = 4096$, full fine-tuning requires $4096 \times 4096 = 16{,}777{,}216$ parameters. 

LoRA with rank $r=256$ parameterizes $\Delta W = \frac{\alpha}{r} BA$ where $B \in \mathbb{R}^{4096 \times 256}$ and $A \in \mathbb{R}^{256 \times 4096}$, requiring $4096 \times 256 + 256 \times 4096 = 2{,}097{,}152$ parameters.

MPS adaptation factorizes the input dimension as $d_{\mathrm{in}} = d_1 \times d_2$ (e.g., $4096 = 64 \times 64$) and parameterizes $A \in \mathbb{R}^{r \times d_{\mathrm{in}}}$ as an MPS with bond dimension $\chi$, denoted as $A_{\text{MPS}}$:
\begin{equation}
A_{\text{MPS}} = \sum_{\alpha=1}^{\chi} \mathcal{A}^{[1]}_{r, \alpha, d_1} \mathcal{A}^{[2]}_{\alpha, 1, d_2},
\end{equation}
where $\mathcal{A}^{[1]} \in \mathbb{R}^{r \times \chi \times d_1}$ and $\mathcal{A}^{[2]} \in \mathbb{R}^{\chi \times 1 \times d_2}$ are Order-3 tensors. The weight update is then $\Delta W = \frac{\alpha}{r} B A_{\text{MPS}}$ where $B \in \mathbb{R}^{d_{\mathrm{out}} \times r}$ is the same matrix as in LoRA. With $r=256$ and $\chi=32$, this requires $256 \times 32 \times 64 + 32 \times 1 \times 64 = 524{,}288 + 2{,}048 = 526{,}336$ parameters for $A_{\text{MPS}}$, plus $4096 \times 256 = 1{,}048{,}576$ parameters for $B$, totaling approximately $1{,}574{,}912$ parameters---significantly fewer than LoRA while maintaining similar expressivity and test performance (see FIG.~\ref{fig:lora_vs_tensor_lora_a_comparison}) through the tensor network structure. The parameter efficiency arises from the hierarchical factorization: instead of storing a full $r \times d_{\mathrm{in}}$ matrix, we factorize it into smaller tensors whose total parameter count scales as $O(r \chi d_1 + \chi d_2)$ rather than $O(r d_{\mathrm{in}})$, where typically $\chi \ll r$ and $d_1, d_2 \ll d_{\mathrm{in}}$.

\section{More Experimental Results}
\label{app:more_experimental_results}

\subsection{Ablation Study on Causal Masking in Autoregressive Attention}
\label{app:abla_mask}

In autoregressive language models, the causal mask restricts each token to attend only to previous tokens, which may influence the entanglement structure we observe in attention matrices. To understand how the masking operation affects our entanglement analysis, we investigate whether the low entanglement entropy in attention matrices is intrinsic to the learned attention patterns or is an artifact of the causal constraint. To illustrate the contribution of the causal mask to the model's attention 
structure, we compute \textit{unmasked attention matrices} during evaluation. 
We attach forward hooks (callback functions that intercept intermediate activations during the forward pass) to each attention layer to capture the 
pre-RoPE query and key projections,

\begin{equation}
    Q_{\mathrm{raw}} = X W_Q, \qquad 
K_{\mathrm{raw}} = X W_K,
\end{equation}
together with the corresponding position ids. The projections are 
reshaped into \([B, H, T, d]\) format with grouped-query attention handled 
accordingly, after which we reconstruct the rotary position embedding by 
computing the appropriate cosine and sine rotations, and applying them to each head. With the RoPE-rotated queries and keys, we recompute attention logits
\begin{equation}
    S = \frac{QK^\top}{\sqrt{d}},
\end{equation}
using the same numerical conventions as the model but without applying the causal mask that normally eliminates future positions. A softmax 
over these logits yields the unmasked attention matrix
\begin{equation}
    A_{\mathrm{unmasked}} = \mathrm{softmax}(S),
\end{equation}

As shown in FIG.~\ref{fig:masked_unmasked_attention_entropy_normalized_comparison}, removing the 
causal mask reveals a clear head-dependent entanglement structure. For some 
heads (\eg, Head~10 and Head~25), the entanglement entropy increases 
substantially once future positions become accessible, indicating that the 
causal boundary had been suppressing a higher effective Schmidt rank. In 
contrast, other heads show nearly identical masked and unmasked entanglement, 
implying that their correlation patterns are intrinsically low-entangled and 
generate only weak entanglement even when the accessible Hilbert space is 
expanded. This contrast highlights a heterogeneity in the ``entanglement 
capacity'' of attention heads.

\begin{figure*}[htbp]
    \centering
\includegraphics[width=\linewidth]{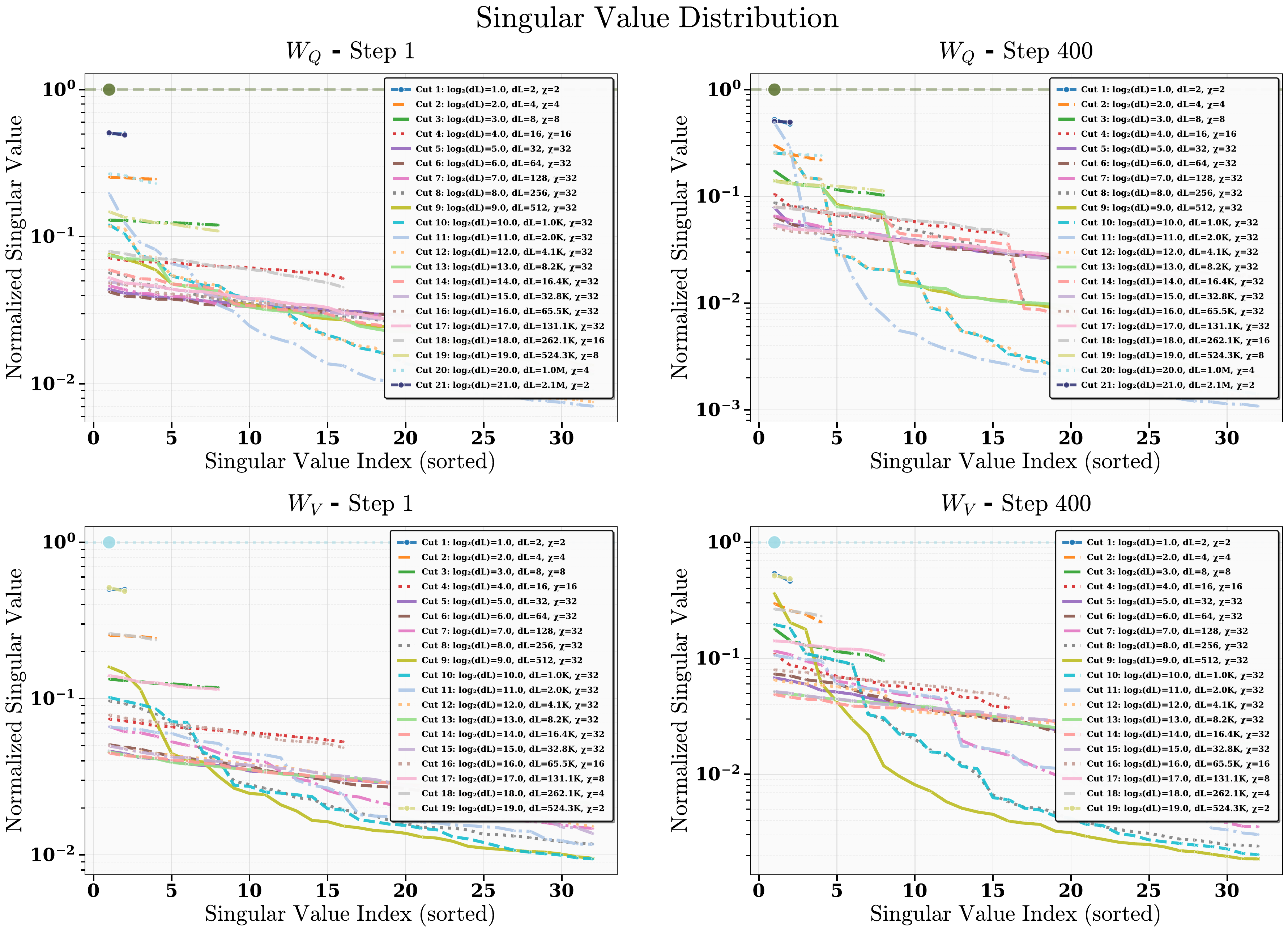}
\caption{Normalized singular value spectra $\lambda_i^{(\ell)}$ at different cut positions $\ell$ for $W_Q$ and $W_V$ matrices at the first and last training steps. Each curve corresponds to a different cut position, with the singular values sorted in descending order and normalized by their $\ell_2$ norm. The pronounced suppression of intermediate and small singular values near the mid-cut (row--column bi-partition) provides a direct spectral explanation for the entanglement valley observed in the entanglement entropy profile.}
\label{fig:singular_dist_combined_q_v_first_last}
\end{figure*}

\subsection{The Entanglement Valley}
\label{app:SVD_dist_cut}

In the main text, we characterize the internal correlation structure of the updates of weight projection matrices (specifically, the updates of query and value projection matrices $\Delta W_Q$ and $\Delta W_V$) by computing the entanglement entropy as a function of the bi-partition (cut) position.
By modeling these update matrices as MPS and sweeping the cut from left to right, one
obtains an artificial entanglement profile $S(\ell)$. Empirically, this profile exhibits a pronounced depression near the middle cut
(\eg, FIG.~\ref{fig:lora_wq_wv}), which we refer to as the
\emph{entanglement valley}. For this valley, we consider the LoRA update matrix
\begin{equation}
\Delta W = \frac{\alpha}{r} B A \in \mathbb{R}^{d_{\mathrm{out}}\times d_{\mathrm{in}}},
\qquad
B\in\mathbb{R}^{d_{\mathrm{out}}\times r},\;
A\in\mathbb{R}^{r\times d_{\mathrm{in}}},
\end{equation}
at initialization, where the entries of $A$ and $B$ are i.i.d.\ with zero mean and
finite variance, and $r \ll \min(d_{\mathrm{out}}, d_{\mathrm{in}})$. As discussed in our methods, we flatten $\Delta W$ into a vector using row-major order,
\begin{equation}
\mathrm{vec}(\Delta W) \in \mathbb{R}^{d_{\mathrm{out}} d_{\mathrm{in}}},
\end{equation}
and factorize the total dimension into powers of two,
\begin{equation}
d_{\mathrm{out}} = 2^{m},\qquad d_{\mathrm{in}} = 2^{n},
\end{equation}
(up to an inessential residual factor).
We then reshape $\mathrm{vec}(\Delta W)$ into an order-$(m+n)$ tensor with local
dimension $2$ at each site and perform a left-to-right MPS sweep. The bond $\ell$ corresponds to a bi-partition between the first $\ell$ binary
indices and the remaining $m+n-\ell$ indices. In particular, the bond \(\ell = m\) corresponds exactly to the \emph{row--column bi-partition}
$\mathbb{R}^{d_{\mathrm{out}}}\otimes\mathbb{R}^{d_{\mathrm{in}}}$, where the first $m$ binary indices encode the row dimension (output space) and the remaining $n$ indices encode the column dimension (input space), effectively separating the matrix into its row and column subsystems.

\begin{lemma}[Entanglement bound at the row--column cut]
\label{lem:lora_row_col_bound}
At the row--column cut $\ell=m$, the Schmidt rank of
$\mathrm{vec}(\Delta W)$ is at most $r$.
Consequently, the von Neumann and Rényi entanglement entropies satisfy
\begin{equation}
S_{\ell=m}(\Delta W) \;\le\; \log r,
\qquad
S_{\alpha,\ell=m}(\Delta W) \;\le\; \log r,
\quad \forall\,\alpha>0.
\end{equation}
For a discussion of Rényi entropies and their relationship to von Neumann entropy, see \textit{Appendix~\ref{app:von_neumann_renyi}}.
\end{lemma}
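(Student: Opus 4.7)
The plan is to identify the row--column MPS cut with the standard matrix/vector reshape, reduce the Schmidt decomposition at this cut to the SVD of $\Delta W$ viewed as a matrix, and then read off the entropy bounds from the rank inequality $\mathrm{rank}(\Delta W)\le r$.

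First I would carefully verify the index accounting. With row-major flattening, the vector entry $\mathrm{vec}(\Delta W)_{i\cdot d_{\mathrm{in}}+j} = (\Delta W)_{ij}$, and after reshaping into an order-$(m+n)$ tensor of local dimension $2$, the first $m$ binary indices are precisely the bits of $i\in\{0,\dots,2^m-1\}$ and the last $n$ binary indices are the bits of $j\in\{0,\dots,2^n-1\}$. Therefore, choosing product bases $\{|i\rangle_{\mathrm{row}}\}$ on $\mathcal{H}_L := (\mathbb{C}^2)^{\otimes m}\cong\mathbb{C}^{d_{\mathrm{out}}}$ and $\{|j\rangle_{\mathrm{col}}\}$ on $\mathcal{H}_R := (\mathbb{C}^2)^{\otimes n}\cong\mathbb{C}^{d_{\mathrm{in}}}$, the normalized MPS state associated with the cut $\ell=m$ is exactly
\begin{equation}
|\Delta W\rangle \;=\; \frac{1}{\lVert \Delta W\rVert_F}\sum_{i,j}(\Delta W)_{ij}\,|i\rangle_{\mathrm{row}}\otimes|j\rangle_{\mathrm{col}}.
\end{equation}

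Next I would invoke the matrix--state correspondence recalled in Appendix~\ref{app:quantum_entanglement}: the Schmidt decomposition of $|\Delta W\rangle$ across $\mathcal{H}_L\otimes\mathcal{H}_R$ coincides with the SVD of $\Delta W\in\mathbb{R}^{d_{\mathrm{out}}\times d_{\mathrm{in}}}$, and the Schmidt coefficients are $\lambda_k = s_k(\Delta W)/\lVert\Delta W\rVert_F$. Since $\Delta W = (\alpha/r)\,BA$ is a product of matrices with inner dimension $r$, the subadditivity of rank gives $\mathrm{rank}(\Delta W)\le \min\{\mathrm{rank}(B),\mathrm{rank}(A)\}\le r$, so at most $r$ singular values are nonzero. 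Hence the Schmidt rank at $\ell=m$ is bounded by $r$, establishing the first claim.

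Finally, I would translate this rank bound into the entropy bounds. Let $\{p_k=\lambda_k^2\}_{k=1}^{R}$ be the nontrivial entanglement spectrum with $R\le r$ and $\sum_k p_k = 1$. By concavity of $-x\log x$ (equivalently, by the fact that the uniform distribution on $R$ points maximizes Shannon entropy on a support of size $R$),
\begin{equation}
S_{\ell=m}(\Delta W) \;=\; -\sum_{k=1}^{R} p_k\log p_k \;\le\; \log R \;\le\; \log r.
\end{equation}
The same ceiling applies to every Rényi-$\alpha$ entropy because, for any $\alpha>0$ and any probability vector supported on at most $R$ atoms, $S_\alpha\le \log R$, with equality again at the uniform distribution; this follows either from monotonicity of $S_\alpha$ in $\alpha$ together with $S_0=\log R$, or from a direct application of Jensen's inequality to $x\mapsto x^\alpha$.

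The only part that requires any care is the first step, the index-bookkeeping that ties the row-major reshape to the specific bipartition $\ell=m$; once that identification is made the rest is essentially the textbook fact that Schmidt rank equals matrix rank under the matrix--state correspondence, together with the standard maximum-entropy bound. I would therefore expect no genuine obstacle, but would state the reshape conventions explicitly to avoid any ambiguity about which indices land on which side of the cut.
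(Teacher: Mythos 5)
Your proposal is correct and follows essentially the same route as the paper: both arguments reduce the claim to the fact that the row--column cut exposes the rank-$r$ factorization $\Delta W = \tfrac{\alpha}{r}BA$ (the paper via $\mathrm{vec}(b_a a_a^\top)=b_a\otimes a_a$ exhibiting the state as a sum of $r$ product vectors, you via $\mathrm{rank}(BA)\le r$ and the Schmidt-rank--equals--matrix-rank correspondence), and then both conclude with the standard maximum-entropy bound $S_\alpha\le\log r$ on a spectrum supported on at most $r$ atoms. Your explicit index-bookkeeping for the row-major reshape is a welcome addition that the paper relegates to the surrounding setup.
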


\begin{proof}
Write the rank-$r$ decomposition explicitly:
\begin{equation}
\Delta W
= \frac{\alpha}{r}\sum_{a=1}^{r} b_a a_a^\top,
\end{equation}
where $b_a\in\mathbb{R}^{d_{\mathrm{out}}}$ is the $a$-th column of $B$ and
$a_a^\top\in\mathbb{R}^{d_{\mathrm{in}}}$ is the $a$-th row of $A$.
Using the identity $\mathrm{vec}(uv^\top)=u\otimes v$, we obtain
\begin{equation}
\mathrm{vec}(\Delta W)
= \frac{\alpha}{r}\sum_{a=1}^{r} b_a \otimes a_a
\;\in\;
\mathbb{R}^{d_{\mathrm{out}}}\otimes\mathbb{R}^{d_{\mathrm{in}}}.
\end{equation}
Thus $\mathrm{vec}(\Delta W)$ lies in the span of at most $r$ product states.
Hence its Schmidt rank across the row--column bi-partition is at most $r$. A pure state with Schmidt rank at most $r$ has at most $r$ nonzero Schmidt
coefficients. The entanglement entropy is therefore maximized by the uniform
distribution $1/r$, yielding the bound $S\le \log r$ for von Neumann entropy and,
more generally, for all Rényi entropies.
\end{proof}

\begin{proposition}[Entanglement valley of $\Delta W$ at initialization]
\label{thm:lora_valley}
Assume $d_{\mathrm{out}}=2^{m}$ and $d_{\mathrm{in}}=2^{n}$ with $m,n\gg 1$.
Let $S_\ell(\Delta W)$ denote the entanglement entropy at bond $\ell$ in the MPS
sweep defined above. Then:

\textbf{Hard bottleneck.} At the row--column cut $\ell=m$,
\begin{equation}
S_{\ell=m}(\Delta W) \;\le\; \log r.
\end{equation}

\textbf{Typical growth away from the bottleneck.}
For cuts $\ell$ that are well inside the row region or the column region,
the entanglement entropy typically follows a volume-law scaling. More precisely, for $ \log r \ll \ell \ll m $ (row interior), where
$\ell$ is large enough that the local Hilbert-space dimension
$2^\ell$ is not the limiting factor, yet sufficiently far from $m$
so that the row--column low-rank structure is not fully exposed,
and for $ m \ll \ell \ll m+n-\log r $ (column interior), one expects with high probability over the random initialization of $A$ and $B$ that
\begin{equation}
S_{\ell}(\Delta W) \approx \min(\ell,\, m+n-\ell)\log 2 - O(1),
\end{equation}
and in particular $S_{\ell}(\Delta W) > S_{\ell=m}(\Delta W)$ since
$S_{\ell=m}(\Delta W)\le \log r$.
Consequently, the entanglement profile $\ell\mapsto S_\ell(\Delta W)$ exhibits
a suppression near $\ell=m$, forming an entanglement valley.
If $m\approx n$, this suppression appears near the middle of the sweep.
\end{proposition}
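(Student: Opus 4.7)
Part (i) is exactly Lemma~\ref{lem:lora_row_col_bound} specialized to the bond $\ell=m$, so no additional work is needed there. The substantive content lies in part (ii), for which the strategy is to reshape $\mathrm{vec}(\Delta W)$ at bond $\ell$ into a $2^\ell \times 2^{m+n-\ell}$ matrix $M_\ell$, express it explicitly in terms of the LoRA factors $A$ and $B$, and invoke Wishart-type concentration to show that its entanglement spectrum is essentially flat when $\ell$ is sufficiently far from $m$. Part (iii) is then immediate by combining the two bounds: the entropy $\log r$ at the bottleneck is strictly smaller than the near-maximal entropy away from it.

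For the row-interior case $\log r \ll \ell \ll m$ (the column-interior case follows by applying the same argument to $\Delta W^\top = \frac{\alpha}{r} A^\top B^\top$), I would partition the rows of $\Delta W$ into $2^\ell$ contiguous blocks of $2^{m-\ell}$ rows each, writing the $g$-th block as $\Delta W^{(g)} = \frac{\alpha}{r} B^{(g)} A$ where $B^{(g)} \in \mathbb{R}^{2^{m-\ell}\times r}$ is the corresponding row block of $B$. A direct computation then gives
\begin{equation}
(M_\ell M_\ell^\top)_{gg'} = \frac{\alpha^2}{r^2}\,\mathrm{tr}\!\bigl(B^{(g')}\,(A A^\top)\,(B^{(g)})^\top\bigr).
\end{equation}
In the typical LoRA regime $d_{\mathrm{in}}\gg r$, standard Wishart concentration yields $AA^\top = d_{\mathrm{in}} I_r + E$ with $\|E\|_{\mathrm{op}} = O(\sqrt{r\,d_{\mathrm{in}}})$, so that up to lower-order corrections $M_\ell M_\ell^\top$ is proportional to $\widetilde B\,\widetilde B^\top$, where $\widetilde B \in \mathbb{R}^{2^\ell \times (r\cdot 2^{m-\ell})}$ is a Gaussian matrix formed by rearranging entries of $B$. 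Whenever $\widetilde B$ is sufficiently wide, Marchenko--Pastur concentrates its spectrum around its mean, and translating to the normalized Schmidt weights of $\rho_\ell$ yields $S_\ell = \ell\log 2 - O(1)$ with high probability.

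\textbf{Main obstacle.} The difficulty is not the concentration step itself, which is classical, but the careful tracking of the admissible range of $\ell$: strict volume-law scaling holds only for $\ell \lesssim (m+\log_2 r)/2$, whereas in the intermediate range $(m+\log_2 r)/2 \lesssim \ell < m$ the spectral rank bound degrades to $S_\ell \le \log r + (m-\ell)\log 2$, which interpolates smoothly down to $\log r$ at $\ell = m$. To justify the single statement $S_\ell \approx \min(\ell, m+n-\ell)\log 2 - O(1)$ throughout $\log r \ll \ell \ll m$, one must verify that even in this intermediate regime the Schmidt spectrum retains enough spread that $S_\ell$ strictly exceeds $\log r$, which can be done with a standard smallest-singular-value lower bound on $\widetilde B$. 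With this in hand, part (iii) is immediate: combining $S_{\ell=m}\le \log r$ from part (i) with $S_\ell > \log r$ outside a $\Theta(\log r)$-neighborhood of $m$ yields a bona fide valley in the entanglement profile, centered near $\ell\approx m$ and, in the balanced case $m\approx n$, near the geometric middle of the sweep.
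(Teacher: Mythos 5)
Your proposal is correct (at the same level of rigor as the paper's own argument, which is explicitly heuristic for the ``typical growth'' part), but it takes a genuinely different route. The paper handles part (ii) by viewing each column $b_a$ of $B$ as an isotropic random vector on $m$ qubits, invoking Page-type concentration to get $S_\ell(b_a)\approx\min(\ell,m-\ell)\log 2-O(1)$ for each rank-one component separately, and then asserting heuristically that the sum $\frac{\alpha}{r}\sum_a b_a a_a^\top$ inherits typical volume-law statistics in the interior regime, with the approach to the bottleneck described only qualitatively as an ``interpolation.'' You instead compute the Gram matrix $M_\ell M_\ell^\top$ explicitly in terms of the row blocks $B^{(g)}$, use Wishart concentration of $AA^\top\approx d_{\mathrm{in}}I_r$ to reduce it to a rearranged Gaussian $\widetilde B\widetilde B^\top$, and apply Marchenko--Pastur. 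What your route buys is precision: it makes the effective Schmidt rank $\min(2^\ell,\,r\,2^{m-\ell})$ visible and correctly locates the crossover at $\ell^*\approx(m+\log_2 r)/2$, beyond which the displayed approximation $S_\ell\approx\min(\ell,m+n-\ell)\log 2-O(1)$ cannot hold and must be replaced by the rank-limited bound $S_\ell\le\log r+(m-\ell)\log 2$; the paper's statement and proof gloss over this, whereas you weaken the claim in that intermediate window to $S_\ell>\log r$, which is exactly what the valley conclusion requires. What the paper's route buys is brevity and a direct link to the Page-curve narrative used elsewhere in the text. The one place your sketch is thinner than it should be is the final step establishing $S_\ell>\log r$ in the intermediate regime: a smallest-singular-value bound on the tall matrix $\widetilde B$ gives a nearly flat spectrum on $r\,2^{m-\ell}$ modes and hence $S_\ell\approx\log r+(m-\ell)\log 2$, which suffices, but you should also verify that the perturbation $\frac{\alpha^2}{r^2}\widetilde B\,(I\otimes E)\,\widetilde B^\top$ from the Wishart fluctuation $E$ does not close the gap when $d_{\mathrm{in}}$ is only polynomially larger than $r$.
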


\begin{proof}
The hard bottleneck statement is exactly Lemma~\ref{lem:lora_row_col_bound},
which gives $S_{\ell=m}(\Delta W)\le \log r$.  For the typical growth statement, we give an intuition based on typical
high-dimensional concentration (the phenomenon that random high-dimensional vectors exhibit predictable, concentrated statistical properties, as exemplified by Page's theorem~\cite{page1993average} for random quantum states). Consider first cuts
$\ell<m$, which split only the $m$ row bits. For each $a\in[r]$, the column
vector $b_a\in\mathbb{R}^{2^m}$ has i.i.d.\ entries with zero mean and finite
variance, hence after normalization it is an isotropic random vector.
When reshaped into an order-$m$ tensor with local dimension $2$, such a random
vector is well-approximated in typical entanglement statistics by a Haar-random
pure state on $m$ qubits. By Page-type concentration~\cite{page1993average}, for any internal cut
$1\ll \ell<m$, one typically has
\begin{equation}
S_\ell(b_a)=\min(\ell,m-\ell)\log 2 - O(1).
\label{eq:page_like_ba}
\end{equation}
Now consider $\Delta W=\frac{\alpha}{r}\sum_{a=1}^r b_a a_a^\top$. At a cut
$\ell<m$, the bi-partition is entirely within the row subsystem, so the special
row--column low-rank structure is not yet fully exposed as a \emph{hard}
bottleneck. Empirically and heuristically, in the interior regime
$\log r \ll \ell \ll m$, the effective Schmidt rank available across the cut
is dominated by the local Hilbert-space dimension $2^\ell$ rather than by the
global factorization rank $r$, and the reduced spectrum is close to that of a
typical high-dimensional state. This leads to a near-maximal, volume-law scaling
\begin{equation}
S_\ell(\Delta W)\approx \ell\log 2 - O(1),
\qquad \log r \ll \ell \ll m,
\end{equation}
and hence $S_\ell(\Delta W) > S_{\ell=m}(\Delta W)$ since
$S_{\ell=m}(\Delta W)\le \log r$. As $\ell$ approaches $m$ from below, the cut begins to probe the global
row--column factorization of $\Delta W$, and the entanglement is progressively
constrained, interpolating between the interior volume-law behavior and the
bottleneck bound at $\ell=m$. An analogous argument applies to cuts $\ell>m$
inside the column subsystem, yielding
\begin{equation}
S_\ell(\Delta W)\approx (m+n-\ell)\log 2 - O(1),
\qquad m \ll \ell \ll m+n-\log r.
\end{equation}
Therefore the profile $\ell\mapsto S_\ell(\Delta W)$ is suppressed near
$\ell=m$, forming an entanglement valley; when $m\approx n$, this suppression
appears near the middle of the sweep.
\end{proof}

\textbf{Why the specific valley position: a symmetry perspective.}
From a symmetry perspective, the appearance of the entanglement valley at a
specific bi-partition can be understood as a consequence of an explicit
structural symmetry breaking induced by the low-rank parameterization.
For a generic full-rank matrix, the vectorized state
$\mathrm{vec}(W)$ does not distinguish one internal bi-partition from another,
and the entanglement profile is approximately invariant under translations of
the cut position.
In contrast, the factorized form $\Delta W = BA$ singles out a preferred
decomposition between the row and column spaces,
$\mathbb{R}^{d_{\mathrm{out}}}\otimes\mathbb{R}^{d_{\mathrm{in}}}$.
The corresponding cut $\ell = m$ is the unique bi-partition that respects this
structural decomposition, while all other cuts mix the two subspaces and are
therefore insensitive to the low-rank constraint.
As a result, the effective cut-translation symmetry is explicitly broken, and
the entanglement bottleneck associated with the rank-$r$ coupling becomes
manifest only at the row–column cut, giving rise to a localized entanglement
valley.

\textbf{Spectral Interpretation of the Entanglement Valley.}
The entanglement entropy across a given cut is fully determined by the singular value spectrum of the corresponding bipartite decomposition.
Let $\{\lambda_i^{(\ell)}\}_{i=1}^{r_\ell}$ denote the normalized singular values
at cut $\ell$, satisfying $\sum_i (\lambda_i^{(\ell)})^2 = 1$.
The entanglement entropy is
\(S(\ell) = -\sum_i (\lambda_i^{(\ell)})^2 \log (\lambda_i^{(\ell)})^2\),
therefore directly reflects how spectral weight is distributed across modes: \textbf{flat spectra yield high \(S\), whereas concentrated spectra yield low \(S\).} To probe this mechanism, we visualize the singular value spectra at different cut
positions. For each cut $\ell$, we perform an SVD of the induced bipartite matrix, sort the singular values in descending order, and normalize them by their $\ell_2$ norm. We then plot the normalized singular values $\lambda_i^{(\ell)}$ against their sorted index. See FIG.~\ref{fig:singular_dist_combined_q_v_first_last}. Comparing these spectra across cuts, especially between boundary and middle regions, reveals a pronounced suppression of intermediate and small singular values near the mid-cut, providing a direct empirical spectral explanation for the observed entropy dip.

\textbf{Measuring Distance from Random Matrix Behavior.} To better understand the statistical properties of these singular value distributions and distinguish them from random matrix behavior, we compare them against theoretical predictions from random matrix theory. Specifically, we consider the \emph{Marchenko-Pastur (MP) distribution}~\cite{marvcenko1967distribution}, which describes the limiting eigenvalue distribution of large random matrices. For a random matrix $R \in \mathbb{R}^{d_{\min} \times d_{\max}}$ with i.i.d.\ entries of zero mean and finite variance, where $d_{\min} \le d_{\max}$ and $c = d_{\min}/d_{\max}$, the normalized eigenvalues $x_i = d_{\min} \cdot \lambda_i$ of the sample covariance matrix $RR^\top/d_{\max}$ converge to the MP distribution as the dimensions grow. Here, $\lambda_i$ denotes the normalized eigenvalues (or singular values squared) of the reduced density matrix, and the scaling $x_i = d_{\min} \cdot \lambda_i$ is the standard normalization in MP theory that makes the distribution shape depend only on the aspect ratio $c$ rather than the absolute dimensions, ensuring the theoretical results are scale-invariant. The MP distribution $\rho(x)$ represents the probability density function describing how these scaled eigenvalues $x_i$ are distributed. The MP distribution provides a natural baseline for understanding how the singular value spectra of our MPS-decomposed matrices deviate from purely random behavior. FIG.~\ref{fig:mp_baseline_theoretical} shows the theoretical MP density curves $\rho(x)$ for different values of $c$, where the vertical axis represents the probability density and the horizontal axis represents the normalized eigenvalue $x = d_{\min} \cdot \lambda$. The figure illustrates how the distribution shape changes from highly concentrated (small $c$) to more spread out (large $c$, approaching $c=1$). FIG.~\ref{fig:mp_comparison_combined_q_v_random} compares the singular value distributions from our MPS-decomposed weight matrices across different training steps with the theoretical Marchenko-Pastur distribution. In this figure, we plot the empirical density of normalized eigenvalues $\lambda$ (from the reduced density matrices at different cuts) against the theoretical MP density $\rho(x)$, where $x = d_{\min} \cdot \lambda$. A key observation is that, while the entanglement entropy in regions far from the mid-cut follows a volume law and can reach the theoretical maximum (see FIG.~\ref{fig:lora_wq_wv}), the internal structure of these matrices is not just that of a fully random matrix. The empirical distributions from $W_Q$ and $W_V$ exhibit deviations from the MP baseline, particularly in the tail regions and the overall shape of the spectrum. This indicates that despite achieving high entanglement entropy, the learned weight matrices have acquired structured correlations that distinguish them from purely random configurations. We include the random matrix distribution as a baseline reference to highlight these structural differences, demonstrating that the volume-law scaling observed in the entanglement profile does not imply random matrix statistics, but rather reflects learned structured patterns encoded in the weight matrices.

\begin{figure*}[htbp]
    \centering
    \includegraphics[width=\linewidth]{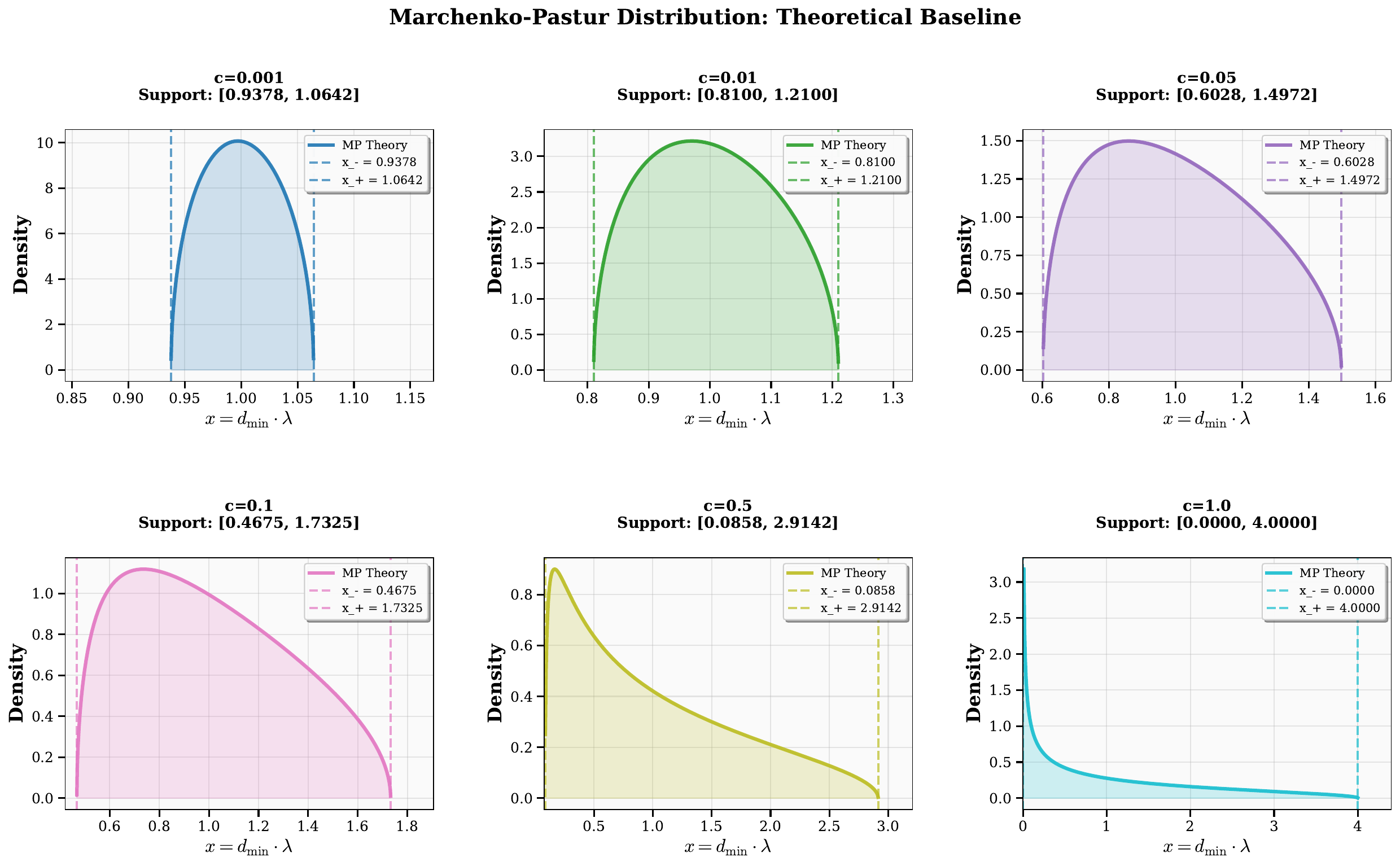}
    \caption{Marchenko-Pastur distribution: theoretical baseline. The figure shows the theoretical density curves for different values of the parameter $c = d_{\min}/d_{\max}$, where $d_{\min}$ and $d_{\max}$ are the smaller and larger dimensions of a random matrix, respectively. The MP distribution describes the limiting eigenvalue distribution of large random matrices with i.i.d.\ entries, providing a reference for comparing the singular value spectra observed in our MPS-decomposed weight projection matrices.}
    \label{fig:mp_baseline_theoretical}
\end{figure*}

\begin{figure*}[htbp]
    \centering
    \includegraphics[width=\linewidth]{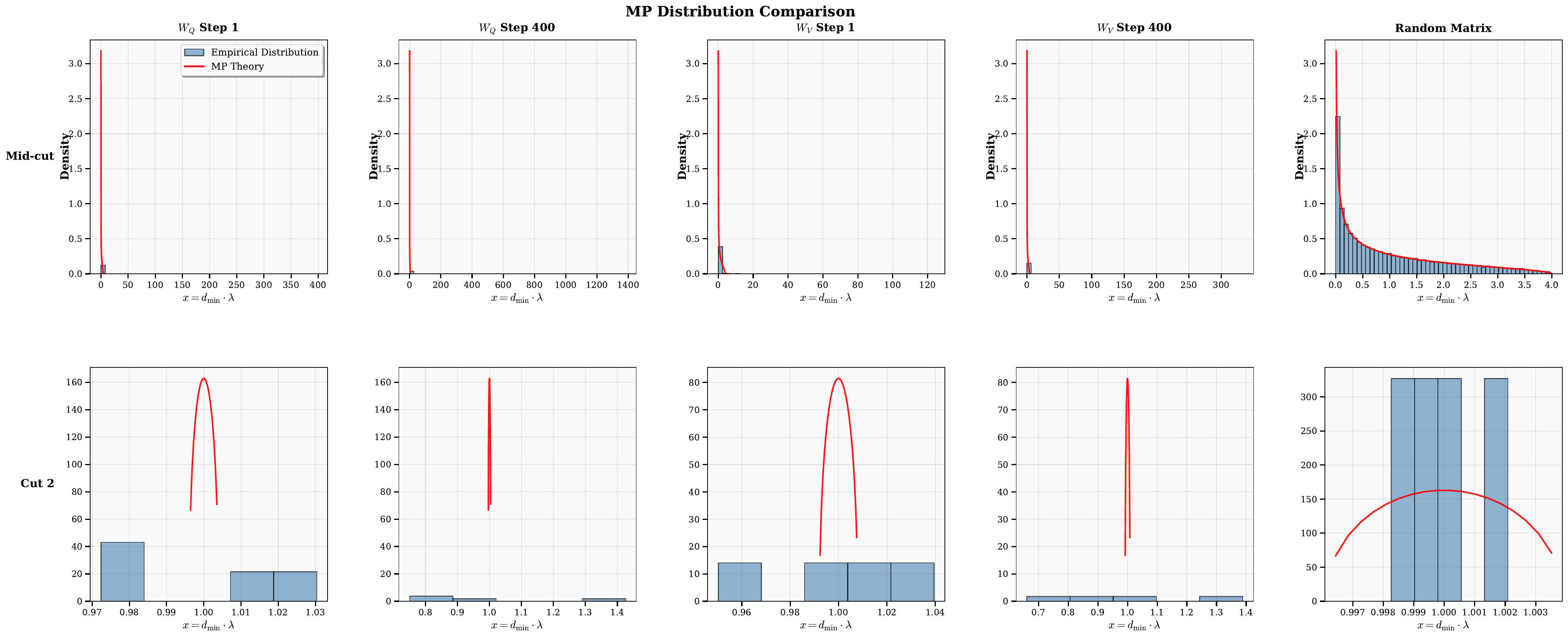}
    \caption{Comparison of normalized eigenvalue distributions from reduced density matrices at different cuts in MPS-decomposed weight projection matrices with the theoretical Marchenko-Pastur distribution. The figure shows empirical density distributions of normalized eigenvalues $\lambda$ (from reduced density matrices at different cuts) from $\Delta W_Q$ and $\Delta W_V$ matrices, plotted against the theoretical MP density $\rho(x)$ where $x = d_{\min} \cdot \lambda$, illustrating how the observed eigenvalue spectra deviate from random matrix behavior.}
\label{fig:mp_comparison_combined_q_v_random}
\end{figure*}

\subsection{Limits of the No-Hair Property Under Extreme Scaling Coefficients}

In the main text, we demonstrate that the attention matrices exhibit a no-hair property, where the entanglement entropy remains robust to variations in the scaling coefficient $\alpha$ within a moderate range (see FIG.~\ref{fig:attention_entropy_normalized_by_alpha_all_heads} and FIG.~\ref{fig:MPS_attention_S_alpha_8_16_32_64}). However, this robustness may break down when $\alpha$ deviates substantially from typical values used in practice. To probe the limits of this no-hair behavior, we investigate the entanglement structures of attention matrices under largely deviated scaling coefficients $\alpha \in \{16, 512, 4096\}$, extending beyond the regime explored in the main text. FIG.~\ref{fig:MPS_attention_entropy_normalized_by_alpha_all_heads} shows the normalized entanglement entropy $S_A/\log(\chi)$ across training steps for several attention heads under these extreme $\alpha$ values. Compared to the moderate regime where $\alpha \in \{8, 16, 32, 64\}$ (FIG.~\ref{fig:MPS_attention_S_alpha_8_16_32_64}), we observe that when $\alpha$ increases to $512$, the entanglement curves remain approximately invariant, maintaining the no-hair-like property observed in the main text. However, when $\alpha$ scales further to $4096$, pronounced differences emerge: for several attention heads, $\alpha=4096$ produces substantially higher entanglement entropy $S$, with a sudden increase during early training steps, while $\alpha=16$ and $\alpha=512$ remain almost flat throughout training. These results highlight that the no-hair property observed in the main text holds within a certain range of $\alpha$ values, but breaks down under extreme scaling. Nevertheless, compared to the dramatic variations in entanglement structures observed in the token embedding space (see FIG.~\ref{fig:entropy_evolution_by_alpha} and FIG.~\ref{fig:large_alpha_WQ_WV}), where the embedding space exhibits substantial sensitivity to $\alpha$ with entanglement profiles varying significantly across different scaling coefficients, we still consider the attention behavior as exhibiting a no-hair-like property. 

\begin{figure}[H]
    \centering
\includegraphics[width=0.8\linewidth]{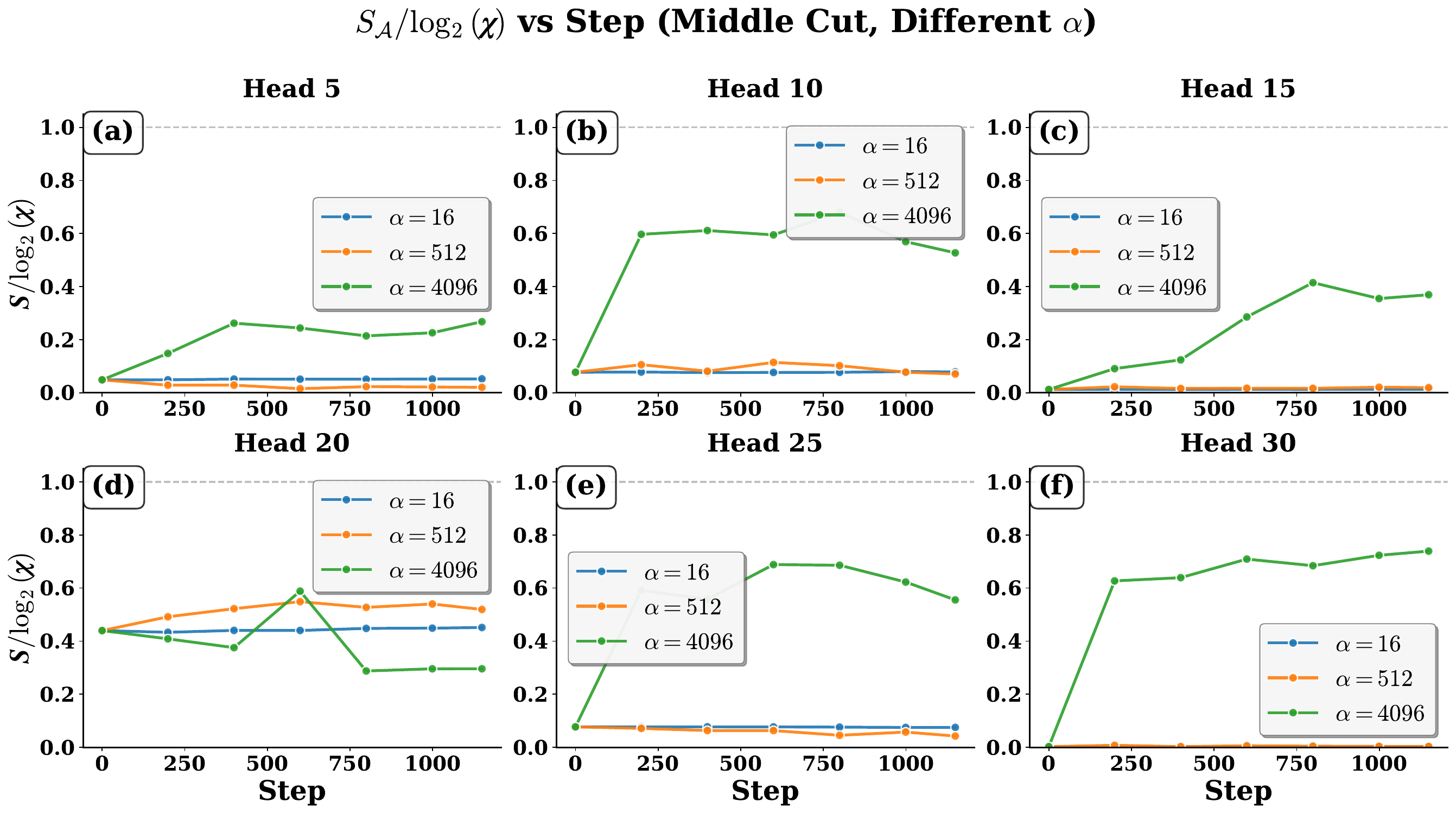}
\caption{
Normalized entanglement entropy $S_A/\log(\chi)$ for attention heads under extreme scaling coefficients $\alpha\in\{16,512,4096\}$, probing the limits of the no-hair property. Compared to the moderate regime (FIG.~\ref{fig:MPS_attention_S_alpha_8_16_32_64}), when $\alpha=512$ the entanglement curves remain approximately invariant, maintaining the no-hair-like property. However, when $\alpha$ scales further to $4096$, pronounced differences emerge: for several attention heads, $\alpha=4096$ produces substantially higher entanglement entropy $S$, with a sudden increase during early training steps, while $\alpha=16$ and $\alpha=512$ remain almost flat throughout training. These results demonstrate that the no-hair property holds within a certain range of $\alpha$ values but breaks down under extreme scaling.
}
\label{fig:MPS_attention_entropy_normalized_by_alpha_all_heads}
\end{figure}

\subsection{Artificial Entanglement Profiling: LLaMA-3.2-1B, OpenThoughts3 Dataset}

We extend our empirical analysis to the OpenThoughts3 dataset. The results verify that the entanglement structures and no-hair phenomena observed in the Tulu3 dataset (Sec.~\ref{sec:Entanglement Structure in FFT and LoRA}) remain consistent across different data distributions.

\textbf{Artificial entanglement profiling in Attention matrix.}
FIG.~\ref{3.2_openthought:attention_entropy_evolution_head30},~\ref{3.2_openthought:attention_entropy_normalized_by_alpha_all_heads}, and~\ref{3.2_openthought:attention_entropy_normalized_evolution_all_heads} demonstrate that the attention matrices on OpenThoughts3 exhibit the same approximate area-law scaling with logarithmic correction as observed on Tulu3. The normalized entanglement entropy $S_A/\log(\chi)$ remains significantly below the theoretical maximum and robust to variations in the scaling coefficient $\alpha$, confirming that the no-hair property holds across datasets. Similarly, FIG.~\ref{3.2_openthought:attention_output_entropy_combined} shows that the output operator $\mathcal{O} = XX^{\top}$ maintains entanglement significantly below the theoretical maximum, consistent with the filtering behavior observed in the main text.

\begin{figure}[H]
    \centering
    \includegraphics[width=0.5\linewidth]{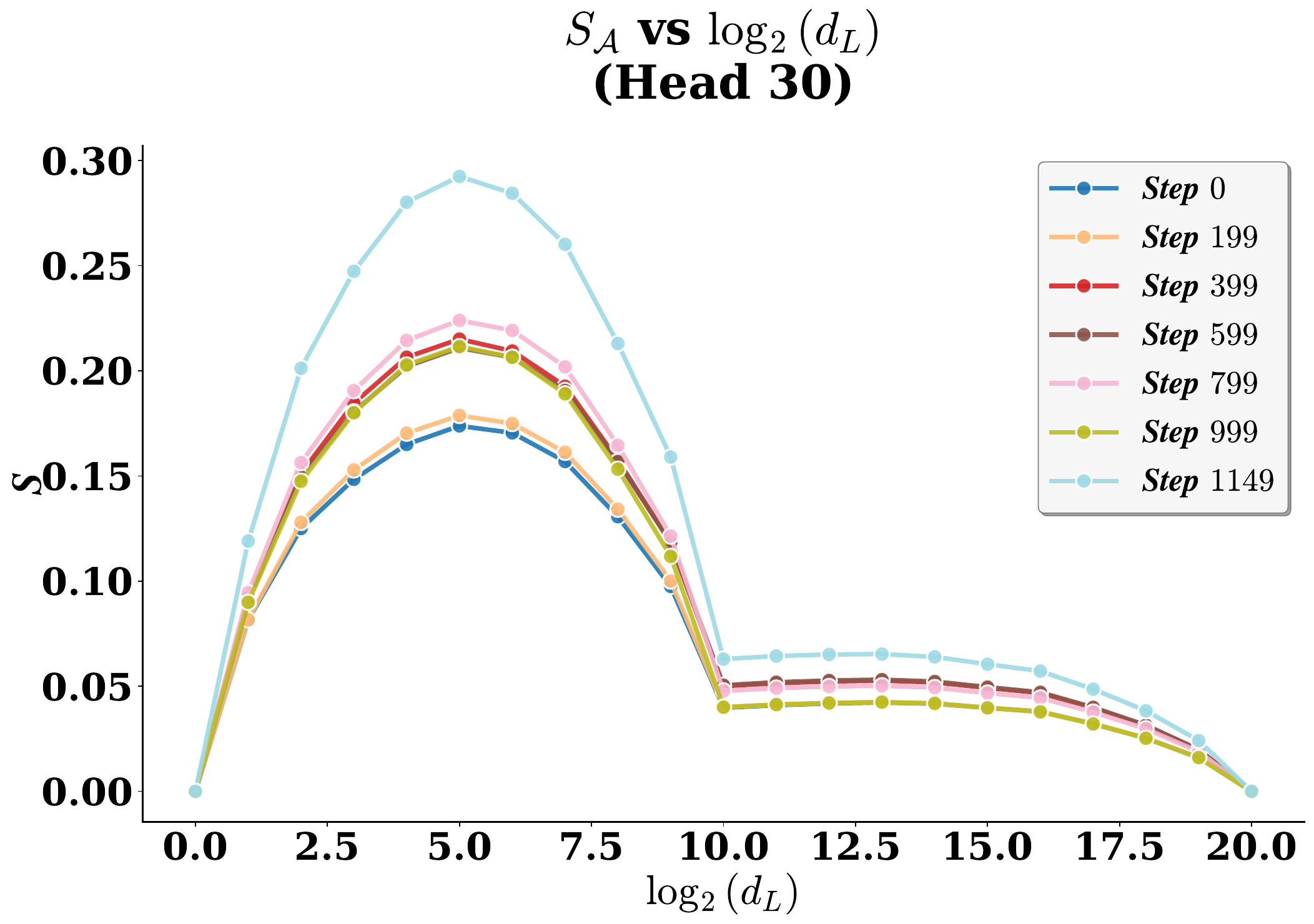}
    \caption{Entanglement entropy $S_A$ of the attention matrix (Head 30) with respect to the bi-partition position across training steps for LLaMA-3.2-1B fine-tuned on OpenThoughts3 dataset.}
    \label{3.2_openthought:attention_entropy_evolution_head30}
\end{figure}

\begin{figure}[H]
    \centering
    \includegraphics[width=0.8\linewidth]{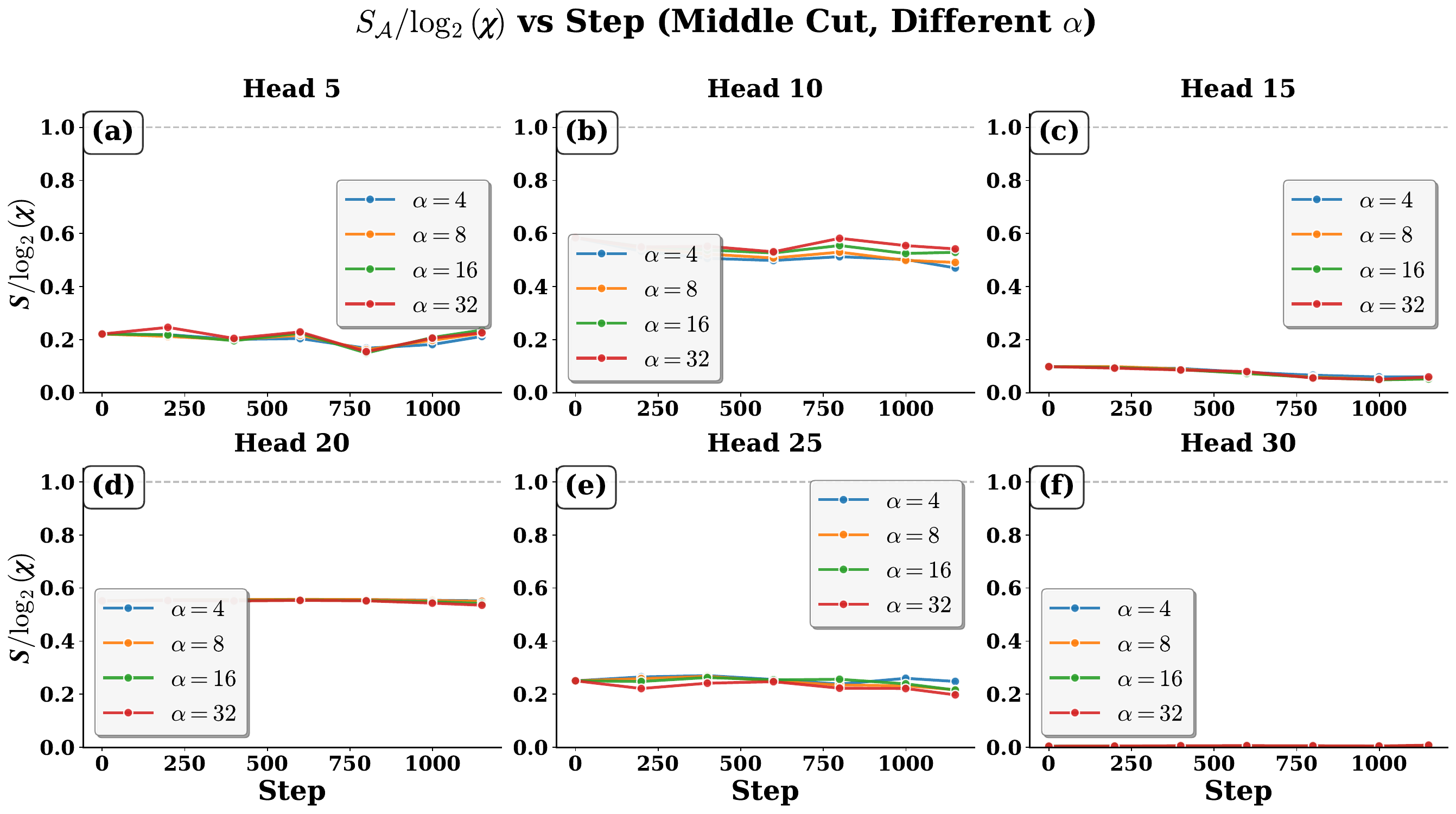}
    \caption{Entanglement entropy $S_A/\log(\chi)$ across training steps for several attention heads under less-deviated scaling coefficients $\alpha\in\{4, 8,16,32\}$ for LLaMA-3.2-1B fine-tuned on OpenThoughts3 dataset. See FIG.~\ref{fig:attention_entropy_normalized_by_alpha_all_heads} for details.}
    \label{3.2_openthought:attention_entropy_normalized_by_alpha_all_heads}
\end{figure}

\begin{figure}[H]
    \centering
    \includegraphics[width=\linewidth]{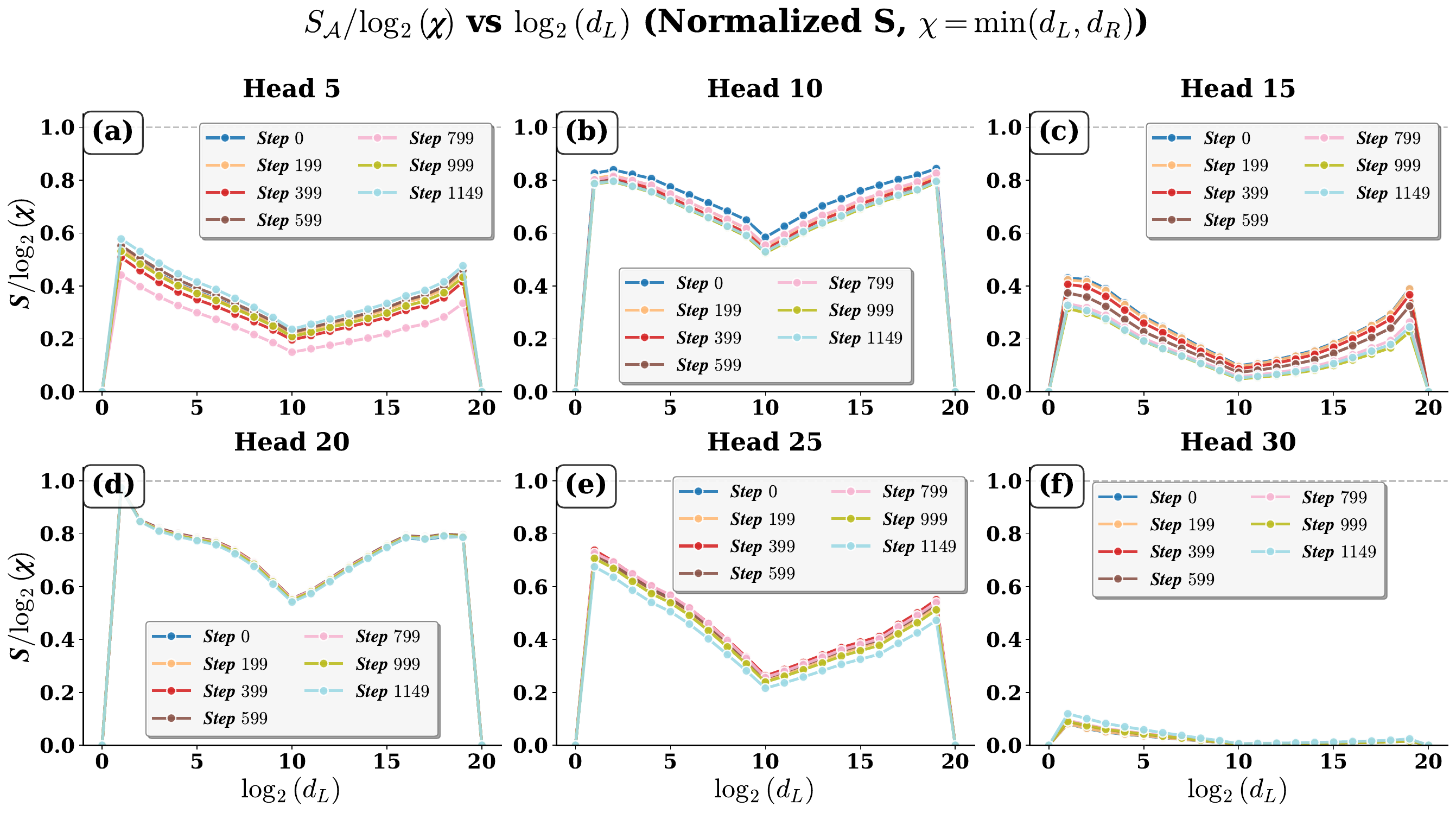}
    \caption{Normalized entanglement entropy $S_A/\log(\chi)$ of six representative attention heads across training steps for LLaMA-3.2-1B fine-tuned on OpenThoughts3 dataset, where $\chi=\min(d_L,d_R)$ denotes the maximal entanglement capacity allowed by the bi-partition.}
    \label{3.2_openthought:attention_entropy_normalized_evolution_all_heads}
\end{figure}

\begin{figure}[H]
    \centering
    \includegraphics[width=\linewidth]{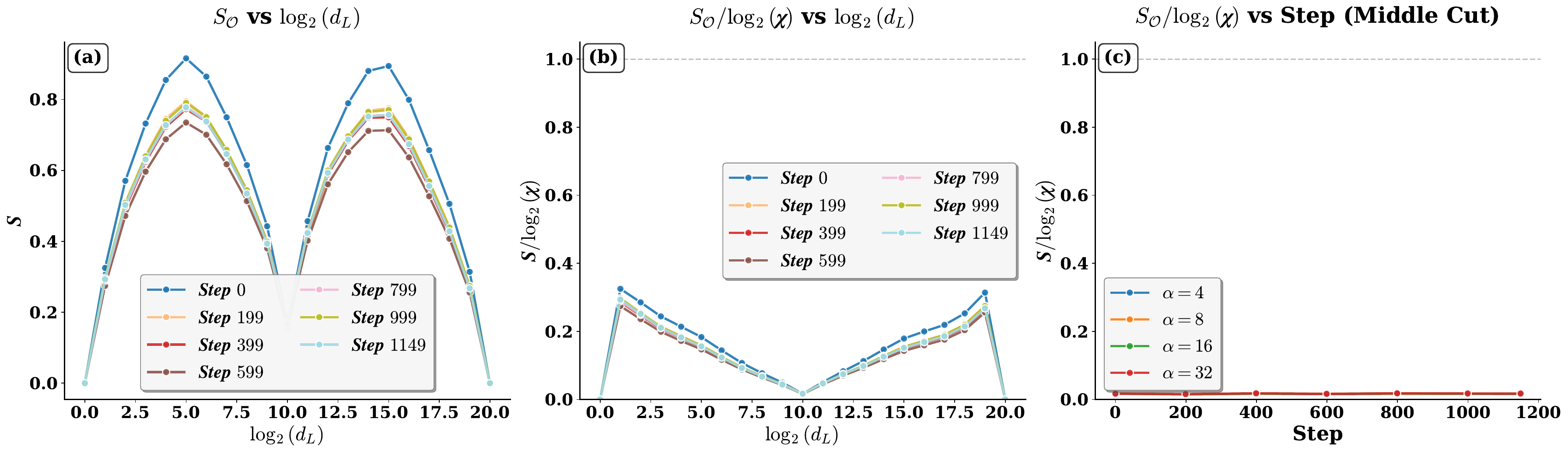}
    \caption{Entanglement entropy $S_{\mathcal{O}}$ of the output operator $\mathcal{O} = XX^{\top}$ across bi-partition positions and training steps for LLaMA-3.2-1B fine-tuned on OpenThoughts3 dataset.}
    \label{3.2_openthought:attention_output_entropy_combined}
\end{figure}

\textbf{Token embedding space artificial entanglement profiling: volume-law and entanglement valley.}
FIG.~\ref{3.2_openthought:entropy_vs_cut_position_by_rank_lr001_alpha16} shows that MPS adaptation on OpenThoughts3 exhibits the same characteristic entanglement valley structure as observed in the main text, confirming that the volume-law behavior in embedding space is robust across datasets.

\begin{figure}[H]
    \centering
    \includegraphics[width=\linewidth]{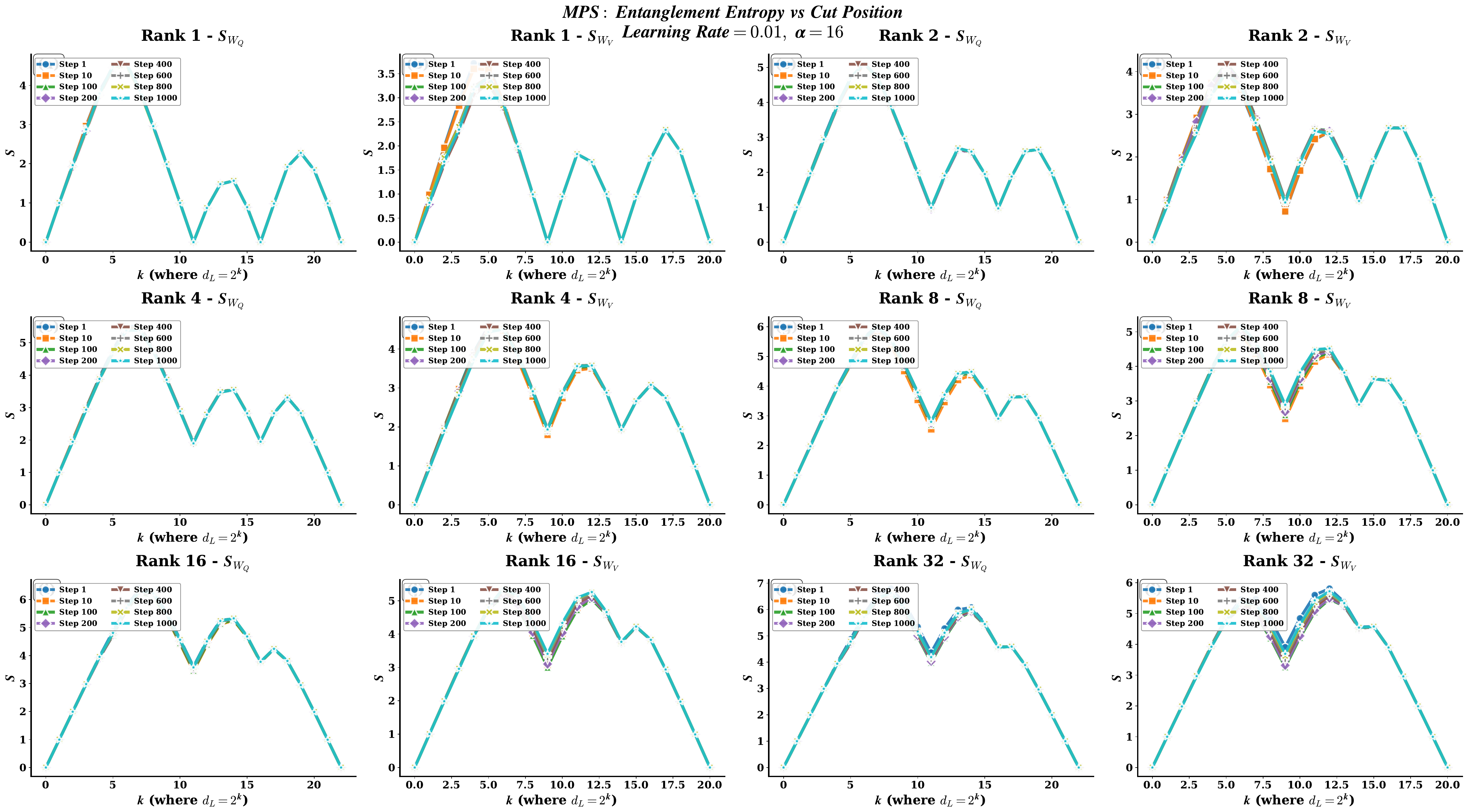}
    \caption{Artificial entanglement profiling of the MPS adaptation of $\Delta W$ for ranks $r \in \{1, 2, 4, 8, 16, 32\}$ with $\alpha = 16$ across training steps for LLaMA-3.2-1B fine-tuned on OpenThoughts3 dataset.}
    \label{3.2_openthought:entropy_vs_cut_position_by_rank_lr001_alpha16}
\end{figure}


\begin{figure}[H]
    \centering
    \includegraphics[width=0.4\linewidth]{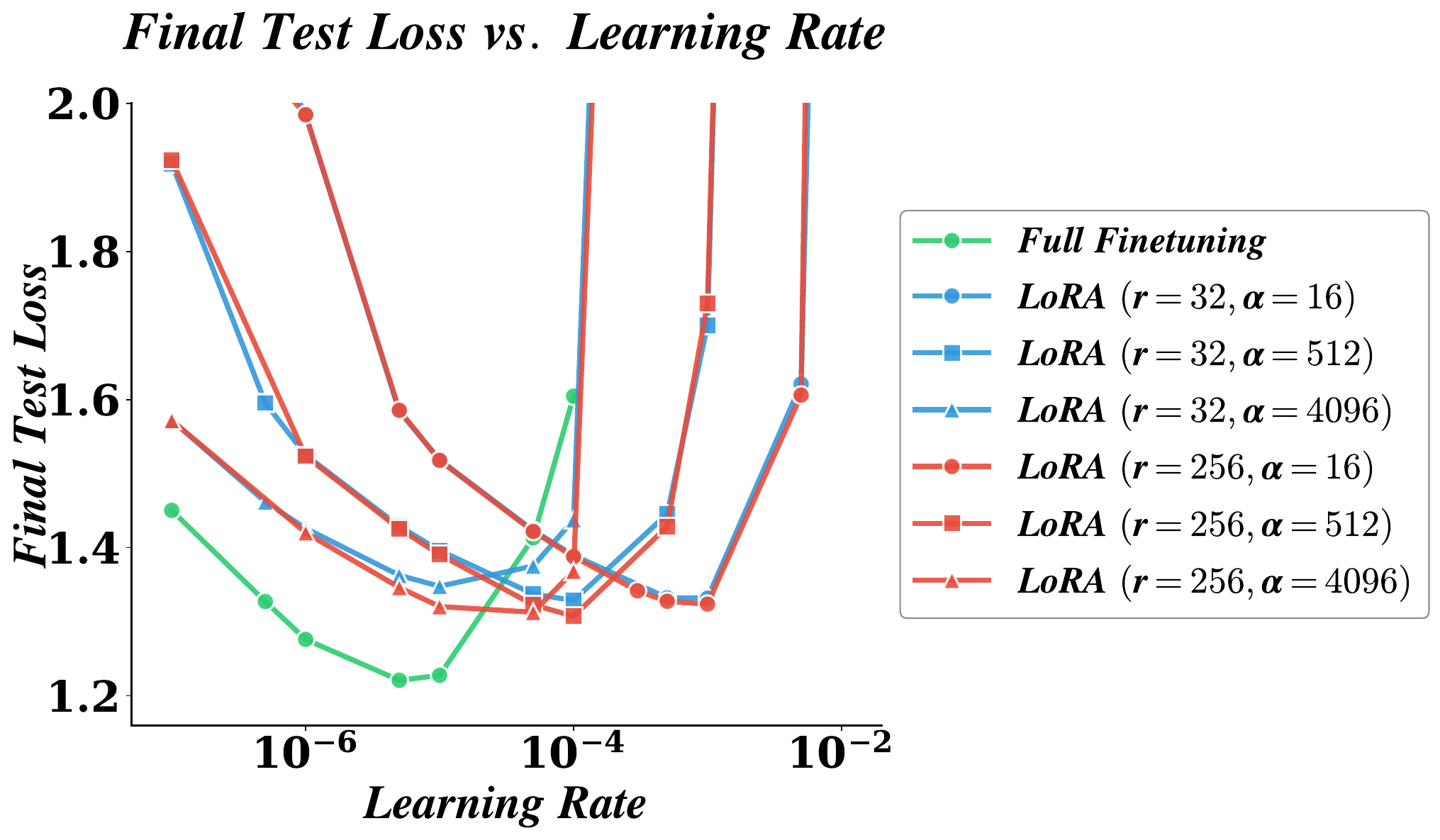}
    \caption{Final test loss with respect to learning rate under different LoRA configurations for LLaMA-3.2-1B fine-tuned on OpenThoughts3 dataset, including a comparison with full fine-tuning.}
\label{3.2_openthought:final_test_loss_vs_lr_combined_20251124_20251125}
\end{figure}
\vspace{-5pt}
\begin{figure}[H]
    \centering
    \includegraphics[width=0.4\linewidth]{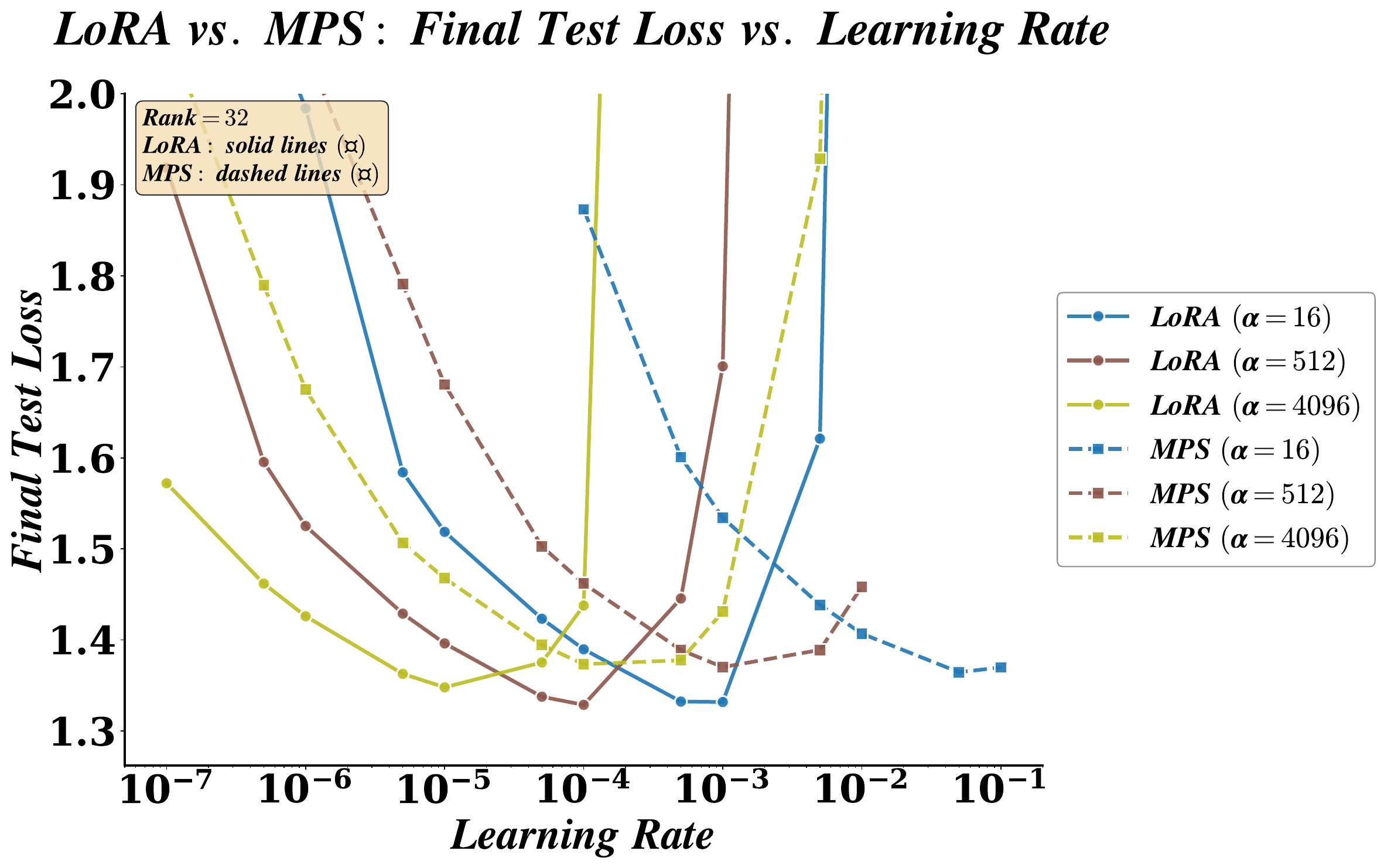}
    \caption{Final test loss of LoRA and MPS adaptation across multiple learning rates and scaling coefficients $\alpha$ for LLaMA-3.2-1B fine-tuned on OpenThoughts3 dataset.}
\label{3.2_openthought:lora_vs_mps_final_test_loss_vs_lr_rank32_combined_20251125_20251126}
\end{figure}

\textbf{Optimization landscape comparison.}
FIG.~\ref{3.2_openthought:final_test_loss_vs_lr_combined_20251124_20251125} and~\ref{3.2_openthought:lora_vs_mps_final_test_loss_vs_lr_rank32_combined_20251125_20251126} demonstrate that the optimization dynamics and relative performance of LoRA and MPS adaptation on OpenThoughts3 mirror the observations on Tulu3. The optimal learning rate regions and the shift in optimal learning rates with $\alpha$ follow the same patterns as described in Sec.~\ref{sec:Entanglement Structure in FFT and LoRA}, confirming the consistency of these phenomena across datasets.


\subsection{Artificial Entanglement Profiling: LLaMA-3.1-8B, Tulu3 Dataset}

We extend our analysis to the larger LLaMA-3.1-8B model fine-tuned on the Tulu3 dataset to validate the universality of our findings across model scales. The results demonstrate that the entanglement structures and no-hair phenomena observed in the 1B model (Sec.~\ref{sec:Entanglement Structure in FFT and LoRA}) persist consistently in larger architectures.

\begin{figure}[H]
    \centering
    \includegraphics[width=0.5\linewidth]{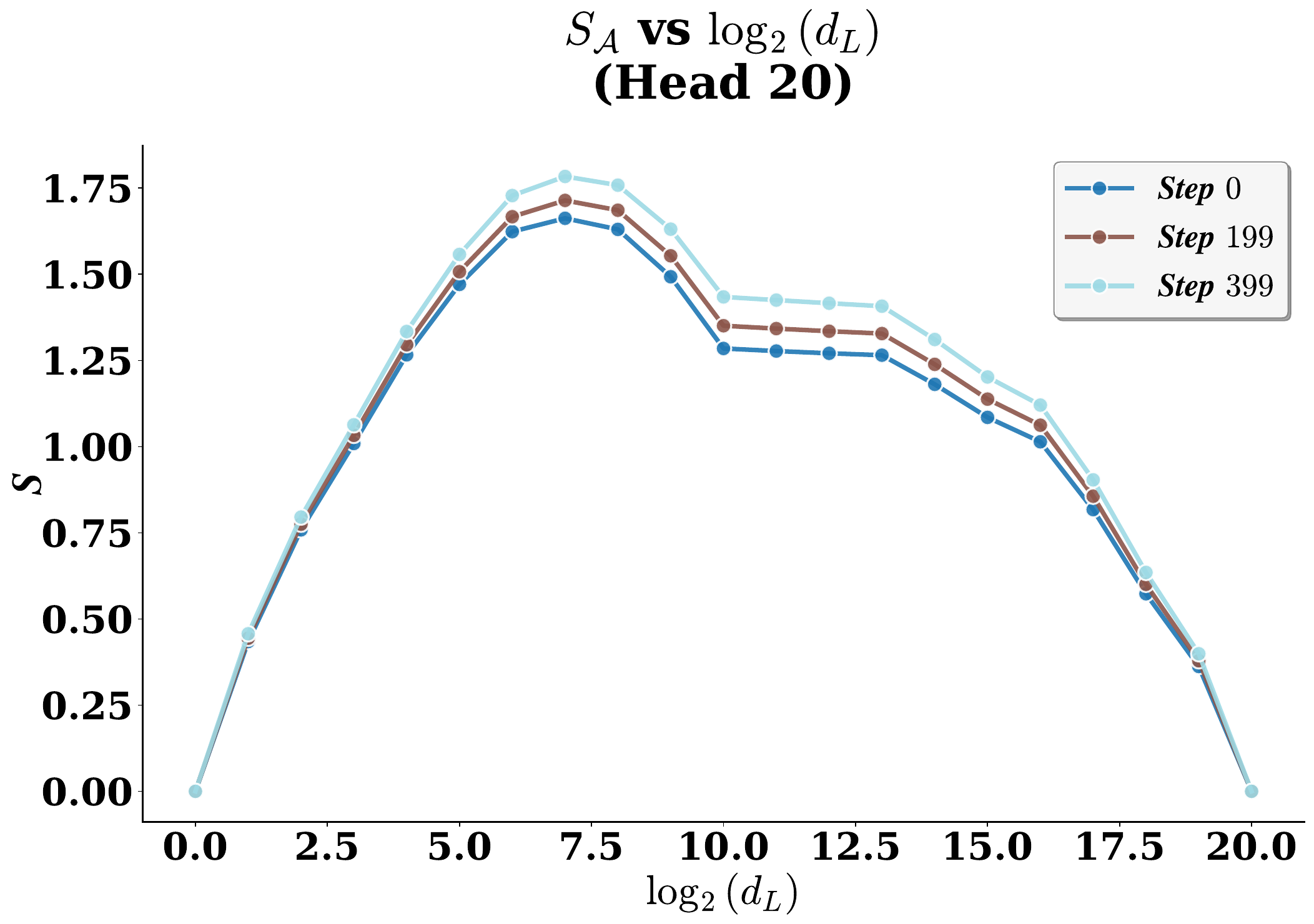}
    \caption{Entanglement entropy $S_A$ of the attention matrix (Head 20) with respect to the bi-partition position across training steps for LLaMA-3.1-8B fine-tuned on Tulu3 dataset.}
    \label{8B_tutu:attention_entropy_evolution_head20}
\end{figure}

\begin{figure}[H]
    \centering
    \includegraphics[width=0.8\linewidth]{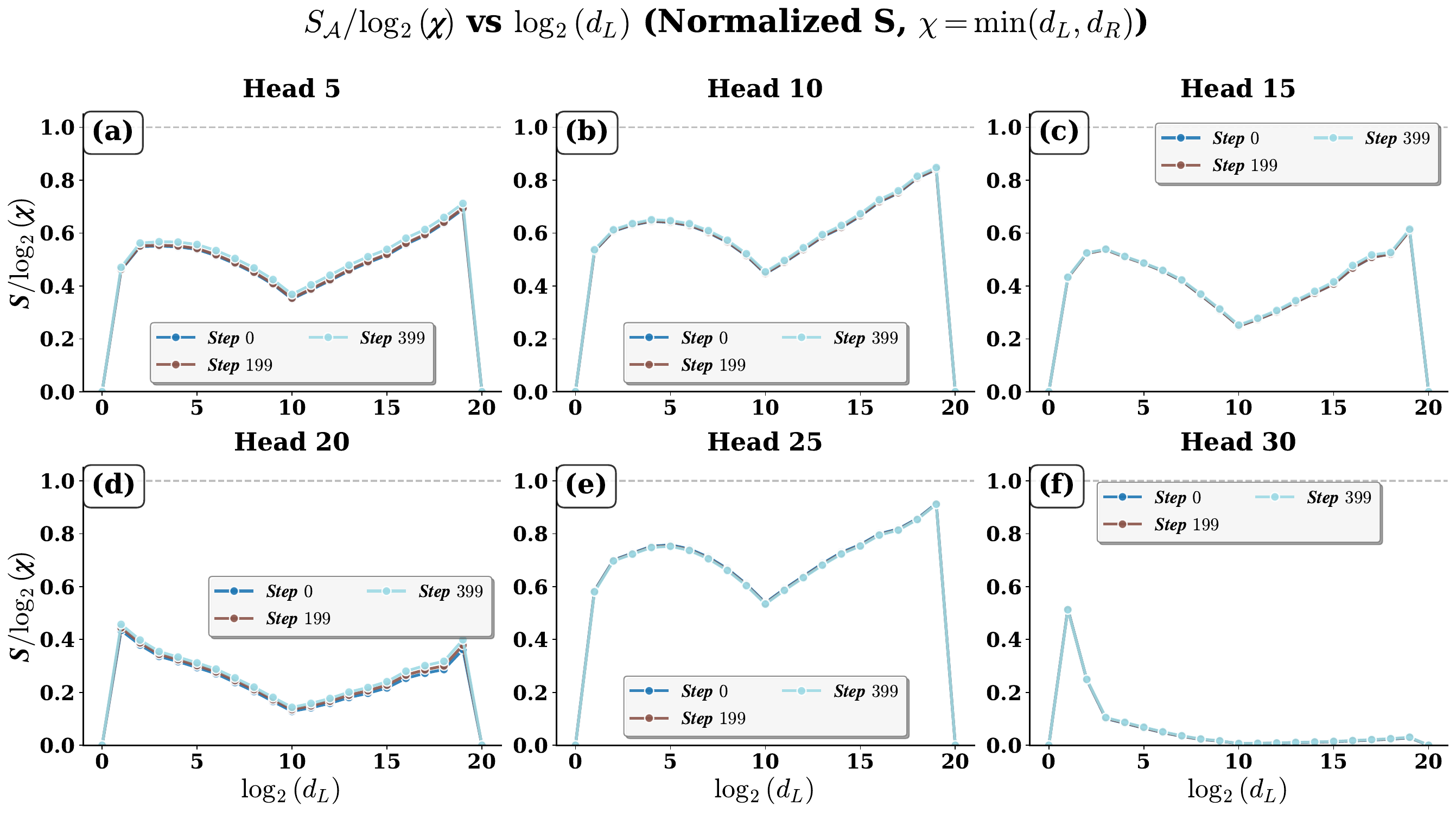}
    \caption{Normalized entanglement entropy $S_A/\log(\chi)$ of six representative attention heads across training steps for LLaMA-3.1-8B fine-tuned on Tulu3 dataset, where $\chi=\min(d_L,d_R)$ denotes the maximal entanglement capacity allowed by the bi-partition.}
    \label{8B_tutu:attention_entropy_normalized_evolution_all_heads}
\end{figure}

\begin{figure}[H]
    \centering
    \includegraphics[width=0.9\linewidth]{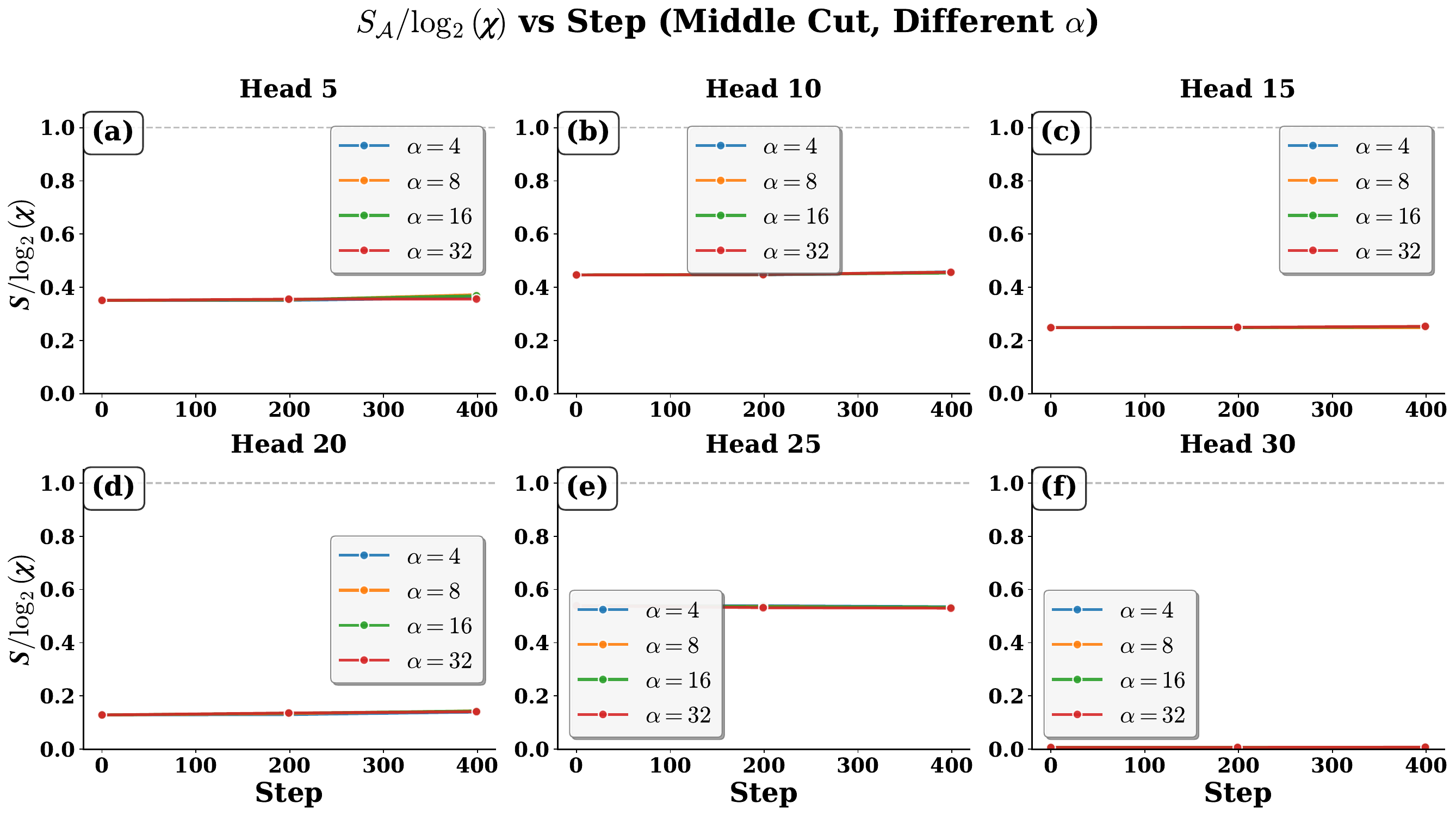}
    \caption{Normalized entanglement entropy $S_A/\log(\chi)$ across training steps for several attention heads under different scaling coefficients $\alpha$ for LLaMA-3.1-8B fine-tuned on Tulu3 dataset.}
    \label{8B_tutu:attention_entropy_normalized_by_alpha_all_heads}
\end{figure}

\textbf{Artificial entanglement profiling in Attention matrix.}
FIG.~\ref{8B_tutu:attention_entropy_evolution_head20},~\ref{8B_tutu:attention_entropy_normalized_by_alpha_all_heads}, and~\ref{8B_tutu:attention_entropy_normalized_evolution_all_heads} show that the attention matrices in the 8B model exhibit the same approximate area-law scaling with logarithmic correction as observed in the 1B model. The normalized entanglement entropy $S_A/\log(\chi)$ remains significantly below the theoretical maximum and robust to variations in the scaling coefficient $\alpha$, confirming that the no-hair property holds across model scales. Similarly, FIG.~\ref{8B_tutu:attention_output_entropy_combined} demonstrates that the output operator $\mathcal{O} = XX^{\top}$ maintains entanglement significantly below the theoretical maximum, consistent with the filtering behavior observed in smaller models.

\begin{figure}[H]
    \centering
    \includegraphics[width=\linewidth]{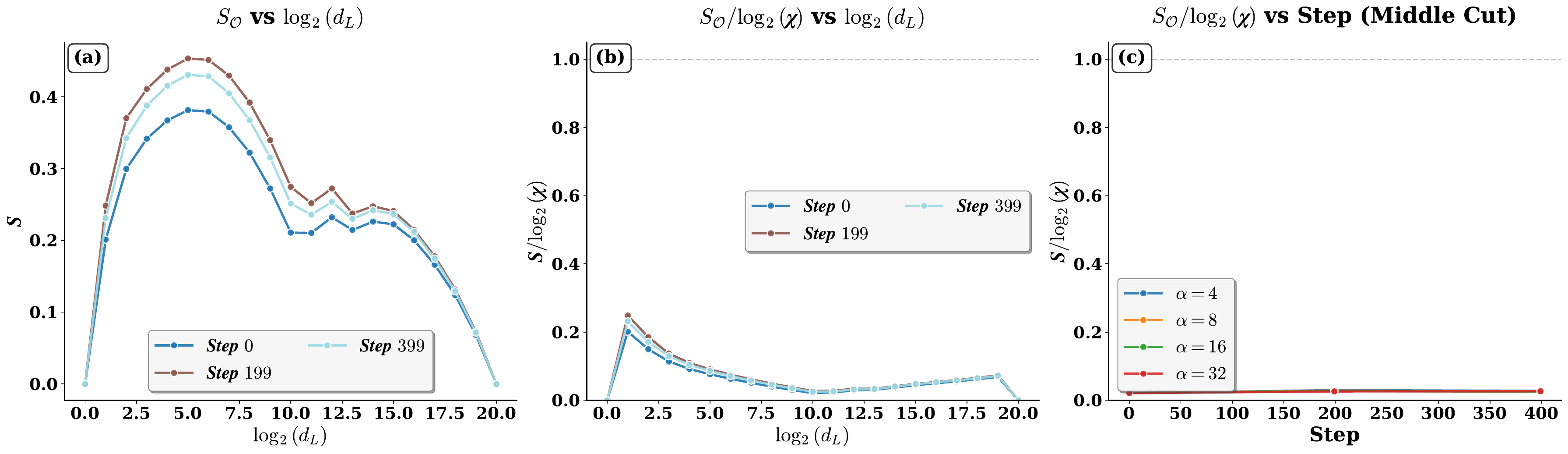}
    \caption{Entanglement entropy $S_{\mathcal{O}}$ of the output operator $\mathcal{O} = XX^{\top}$ across bi-partition positions and training steps for LLaMA-3.1-8B fine-tuned on Tulu3 dataset.}
    \label{8B_tutu:attention_output_entropy_combined}
\end{figure}

\textbf{Token embedding space artificial entanglement profiling: volume-law and entanglement valley.}
FIG.~\ref{8B_tutu:delta_w_entropy_lr5e4_rank256_alpha16_512_combined} and~\ref{8B_tutu:entropy_vs_cut_position_by_rank_lr001_alpha16} reveal that the embedding space ($W_Q$ and $W_V$) in the 8B model exhibits the same volume-law behavior with a characteristic entanglement valley as observed in the 1B model. The valley deepens during training in LoRA and shows sensitivity to the scaling coefficient $\alpha$, consistent with the findings in Sec.~\ref{sec:Entanglement Structure in FFT and LoRA}. This confirms that the volume-law entanglement profile and its hyperparameter sensitivity are fundamental properties independent of model scale.

\begin{figure}[H]
    \centering
    \includegraphics[width=0.8\linewidth]{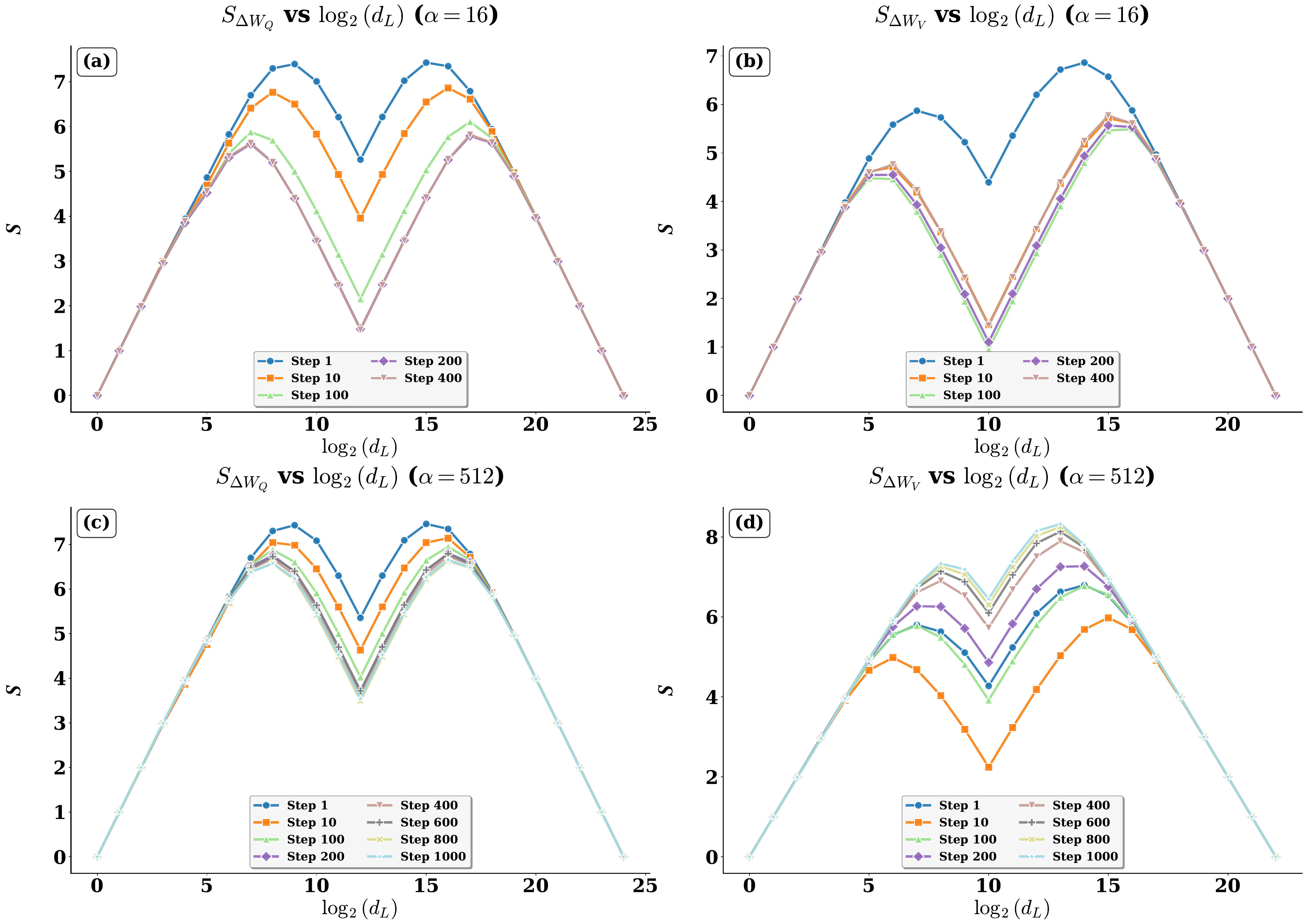}
    \caption{Artificial entanglement profiling of $\Delta W_Q$ and $\Delta W_V$ with respect to the cut position across training steps and scaling coefficients $\alpha \in \{16, 512\}$ for LLaMA-3.1-8B fine-tuned on Tulu3 dataset.}
    \label{8B_tutu:delta_w_entropy_lr5e4_rank256_alpha16_512_combined}
\end{figure}

\textbf{Optimization landscape comparison.}
FIG.~\ref{8B_tutu:lora_vs_mps_final_test_loss_vs_lr_llama3.1_8b} compares LoRA and MPS adaptation strategies for the 8B model. Both methods achieve comparable performance, with optimal learning rates shifting as $\alpha$ increases, mirroring the behavior observed in the 1B model (Sec.~\ref{sec:Entanglement Structure in FFT and LoRA}). This consistency further validates the generalizability of our theoretical framework across model scales.

\begin{figure}[H]
    \centering
    \includegraphics[width=0.5\linewidth]{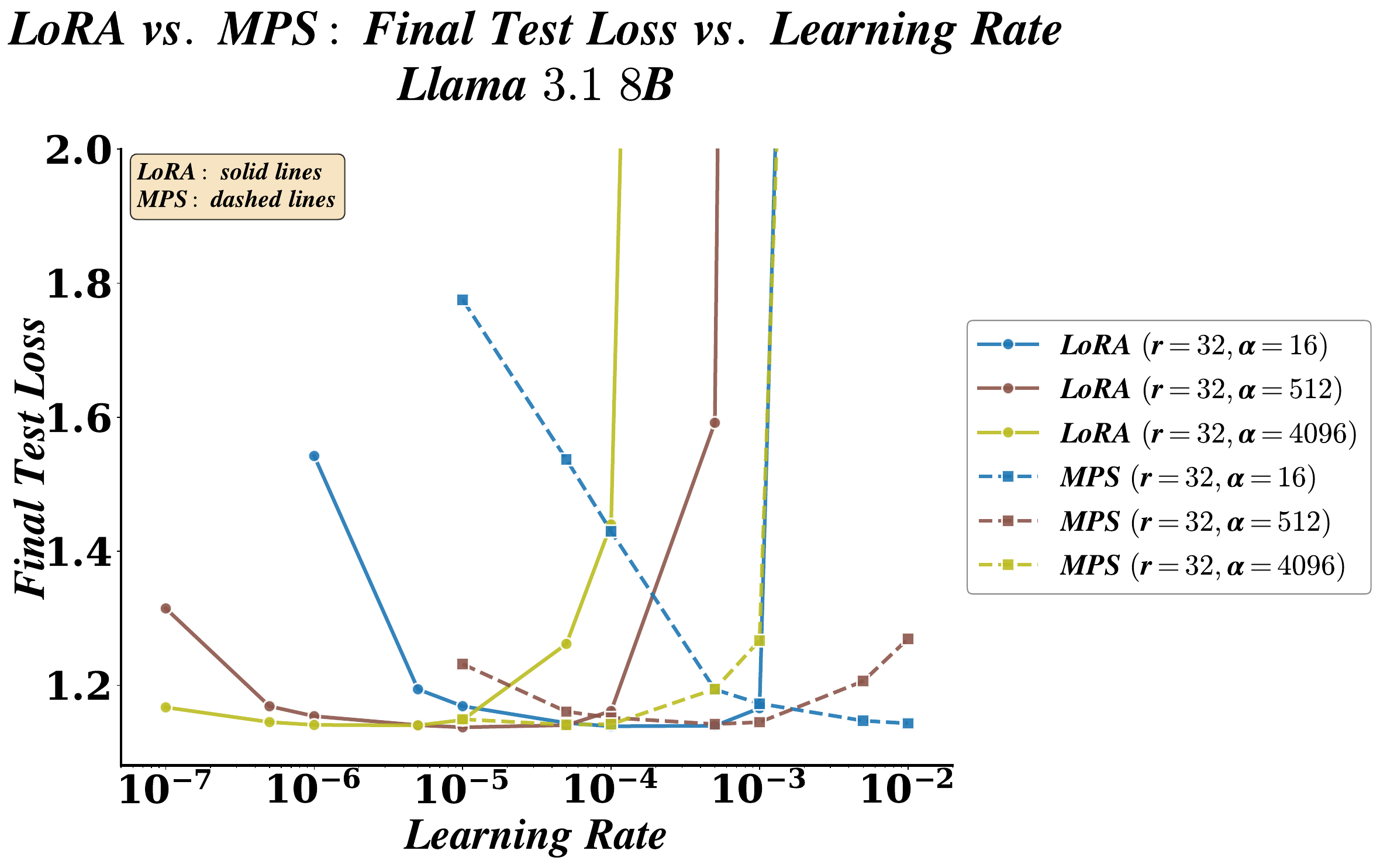}
    \caption{Final test loss of LoRA and MPS adaptation across multiple learning rates and scaling coefficients $\alpha$ for LLaMA-3.1-8B fine-tuned on Tulu3 dataset.}
    \label{8B_tutu:lora_vs_mps_final_test_loss_vs_lr_llama3.1_8b}
\end{figure}

\begin{figure}[H]
    \centering
    \includegraphics[width=\linewidth]{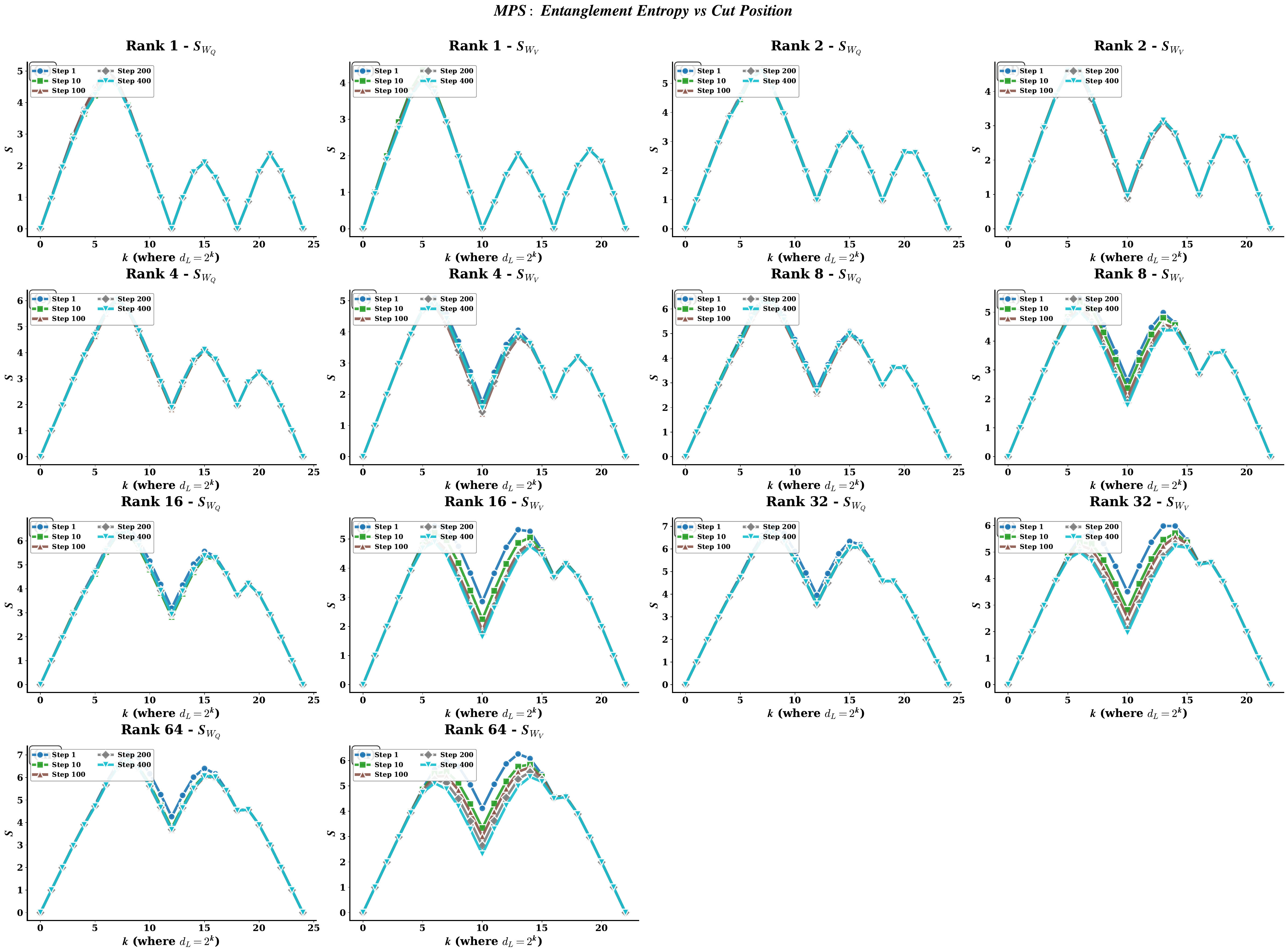}
    \caption{Artificial entanglement profiling of the MPS adaptation of $\Delta W$ for ranks $r \in \{1, 2, 4, 8, 16, 32\}$ with $\alpha = 16$ across training steps for LLaMA-3.1-8B fine-tuned on Tulu3 dataset.}
    \label{8B_tutu:entropy_vs_cut_position_by_rank_lr001_alpha16}
\end{figure}

\subsection{Artificial Entanglement Profiling: LLaMA-3.1-8B, OpenThoughts3 Dataset}

We further validate our findings by analyzing the LLaMA-3.1-8B model fine-tuned on the OpenThoughts3 dataset. The results confirm that the entanglement structures and no-hair phenomena are robust across both model scales and dataset distributions, reinforcing the universality.

\textbf{Artificial entanglement profiling in Attention matrix.}
FIG.~\ref{8B_OpenThoughts:attention_entropy_evolution_head31} and~\ref{8B_OpenThoughts:attention_entropy_normalized_by_alpha_all_heads} demonstrate that the attention matrices in the 8B model on OpenThoughts3 exhibit the same area-law scaling with logarithmic correction and no-hair property as observed in both the 1B model (Sec.~\ref{sec:Entanglement Structure in FFT and LoRA}) and the 8B model on Tulu3. The normalized entanglement entropy remains significantly below the theoretical maximum and robust to variations in $\alpha$, confirming that these phenomena are universal across different datasets and model scales.

\begin{figure}[H]
    \centering
    \includegraphics[width=0.6\linewidth]{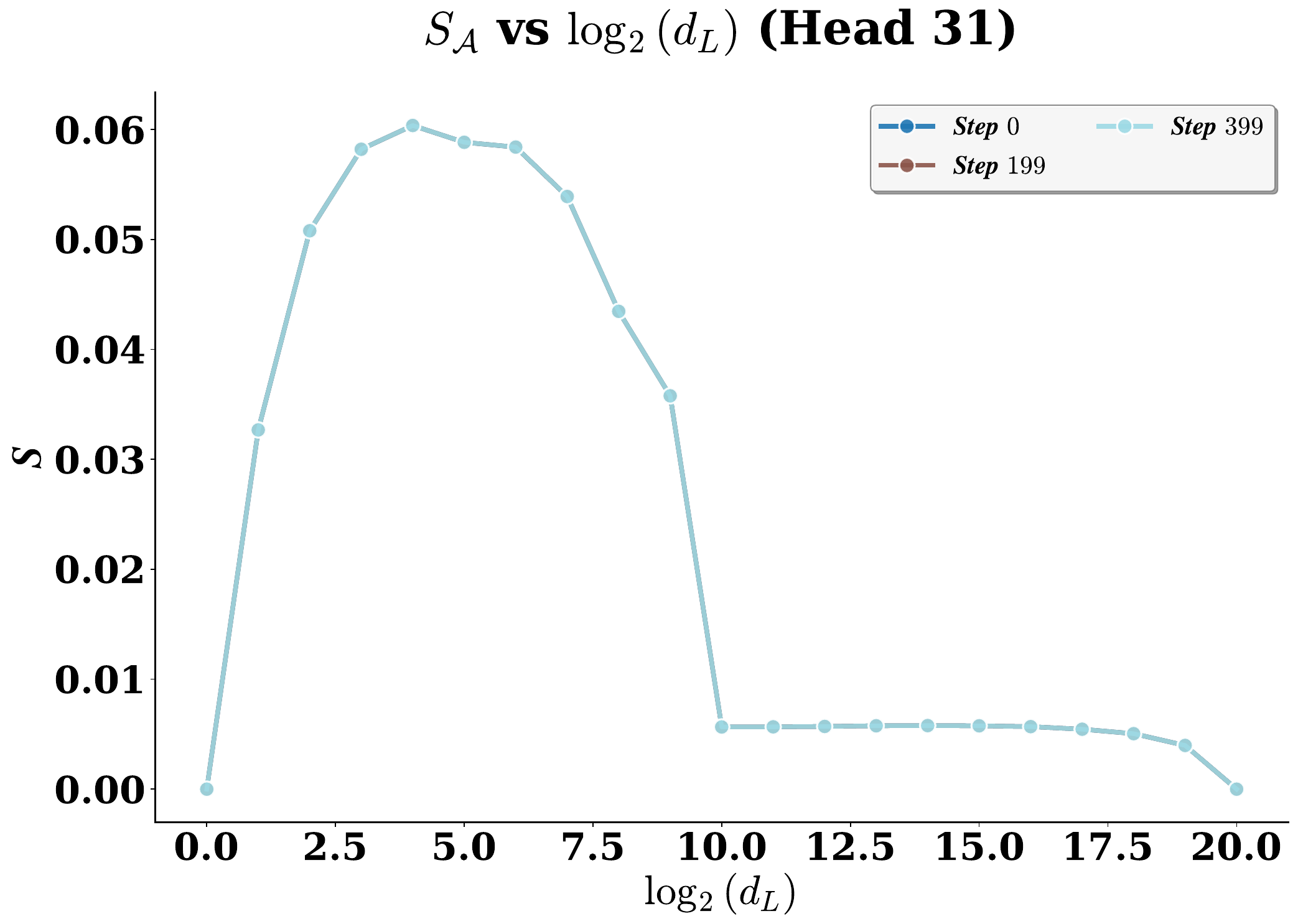}
    \caption{Entanglement entropy $S_A$ of the attention matrix (Head 31) with respect to the bi-partition position across training steps for LLaMA-3.1-8B fine-tuned on OpenThoughts3 dataset.}
    \label{8B_OpenThoughts:attention_entropy_evolution_head31}
\end{figure}

\begin{figure}[H]
    \centering
    \includegraphics[width=0.9\linewidth]{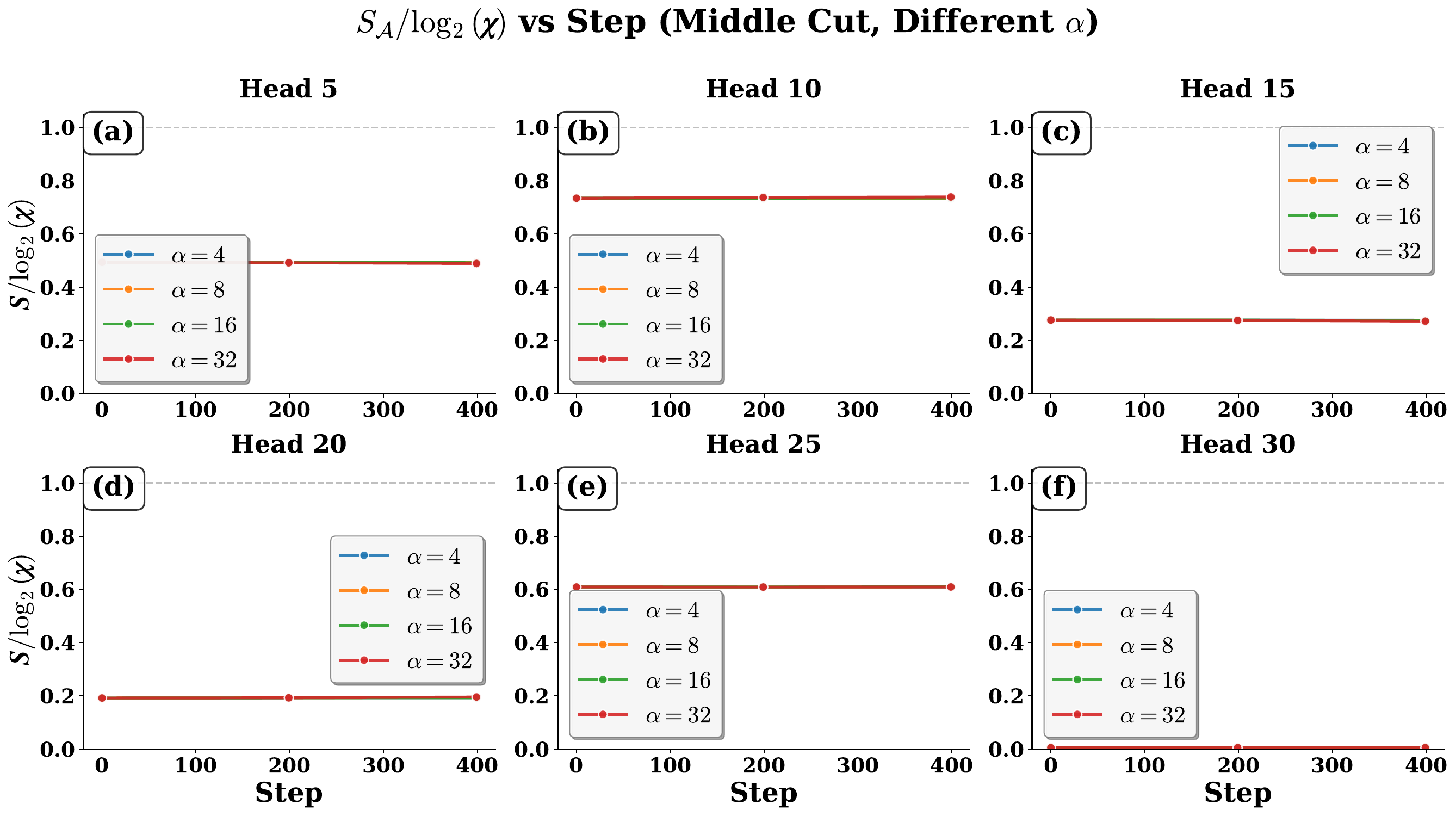}
    \caption{Normalized entanglement entropy $S_A/\log(\chi)$ across training steps for several attention heads under different scaling coefficients $\alpha$ for LLaMA-3.1-8B fine-tuned on OpenThoughts3 dataset.}
    \label{8B_OpenThoughts:attention_entropy_normalized_by_alpha_all_heads}
\end{figure}

\textbf{Token embedding space artificial entanglement profiling: volume-law and entanglement valley.}
FIG.~\ref{8B_OpenThoughts:delta_w_entropy_lr1e5_rank256_alpha16_4096_combined} reveals that the embedding space ($W_Q$ and $W_V$) in the 8B model on OpenThoughts3 exhibits the same volume-law behavior with a characteristic entanglement valley as observed in both the 1B model (Sec.~\ref{sec:Entanglement Structure in FFT and LoRA}) and the 8B model on Tulu3. The valley deepens during training in LoRA and shows sensitivity to the scaling coefficient $\alpha$, consistent with the findings in Sec.~\ref{sec:Entanglement Structure in FFT and LoRA}. This confirms that the volume-law entanglement profile and its hyperparameter sensitivity are fundamental properties independent of both model scale and dataset distribution.

\begin{figure}[H]
    \centering
    \includegraphics[width=0.8\linewidth]{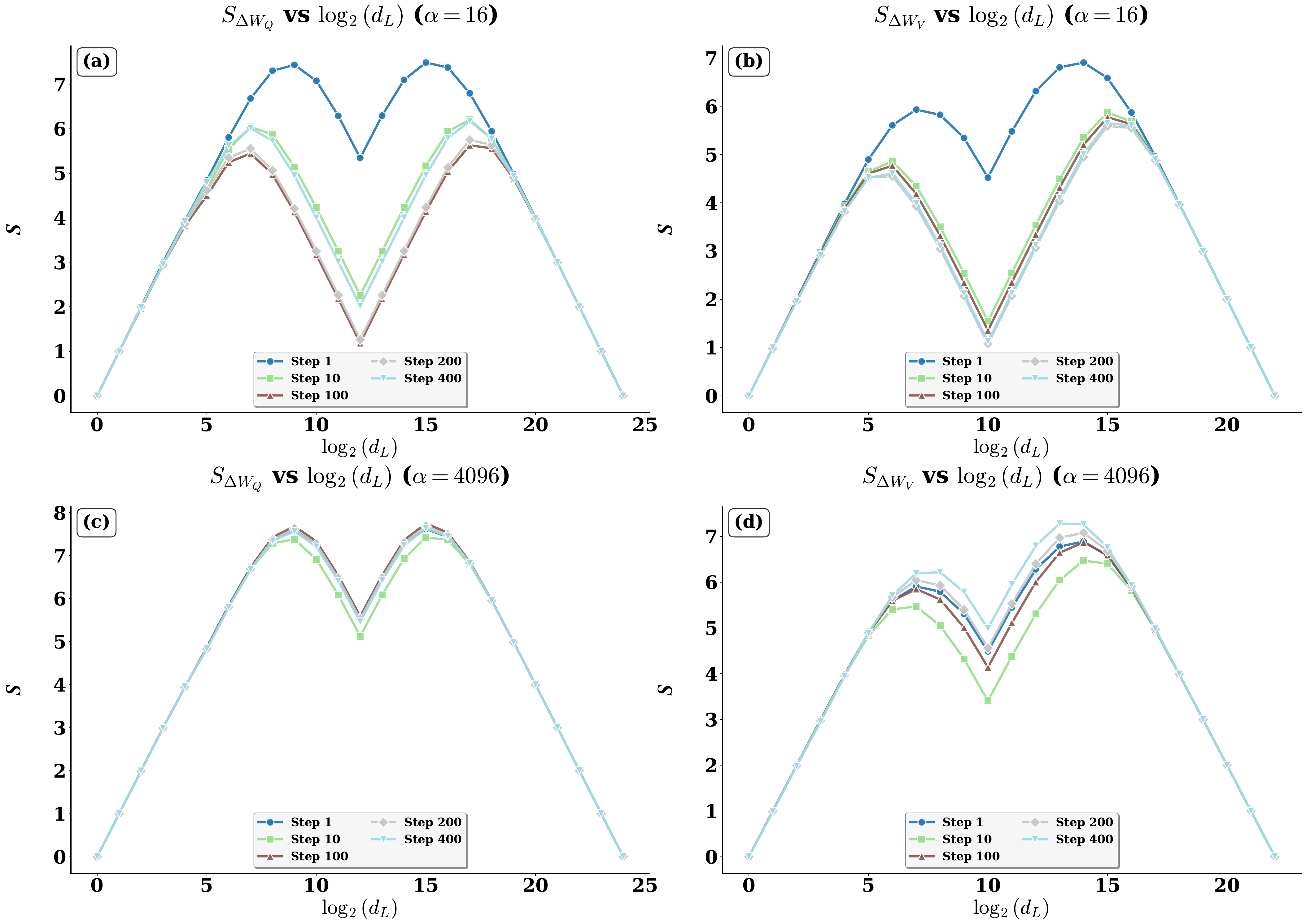}
    \caption{Artificial entanglement profiling of $\Delta W_Q$ and $\Delta W_V$ with respect to the cut position across training steps and scaling coefficients $\alpha \in \{16, 4096\}$ for LLaMA-3.1-8B fine-tuned on OpenThoughts3 dataset.}
\label{8B_OpenThoughts:delta_w_entropy_lr1e5_rank256_alpha16_4096_combined}
\end{figure}

\textbf{Optimization landscape comparison.}
FIG.~\ref{8B_OpenThoughts:lora_vs_mps_final_test_loss_vs_lr_llama3.1_8b_openthoughts} compares LoRA and MPS adaptation strategies for the 8B model on OpenThoughts3. The consistent performance patterns—where both methods achieve comparable results and optimal learning rates shift with $\alpha$—mirror the observations in other model-dataset combinations, providing strong evidence for the generalizability of tensor-network-based adaptation methods and the robustness of the no-hair phenomenon across diverse experimental settings.

\begin{figure}[H]
    \centering
    \includegraphics[width=0.5\linewidth]{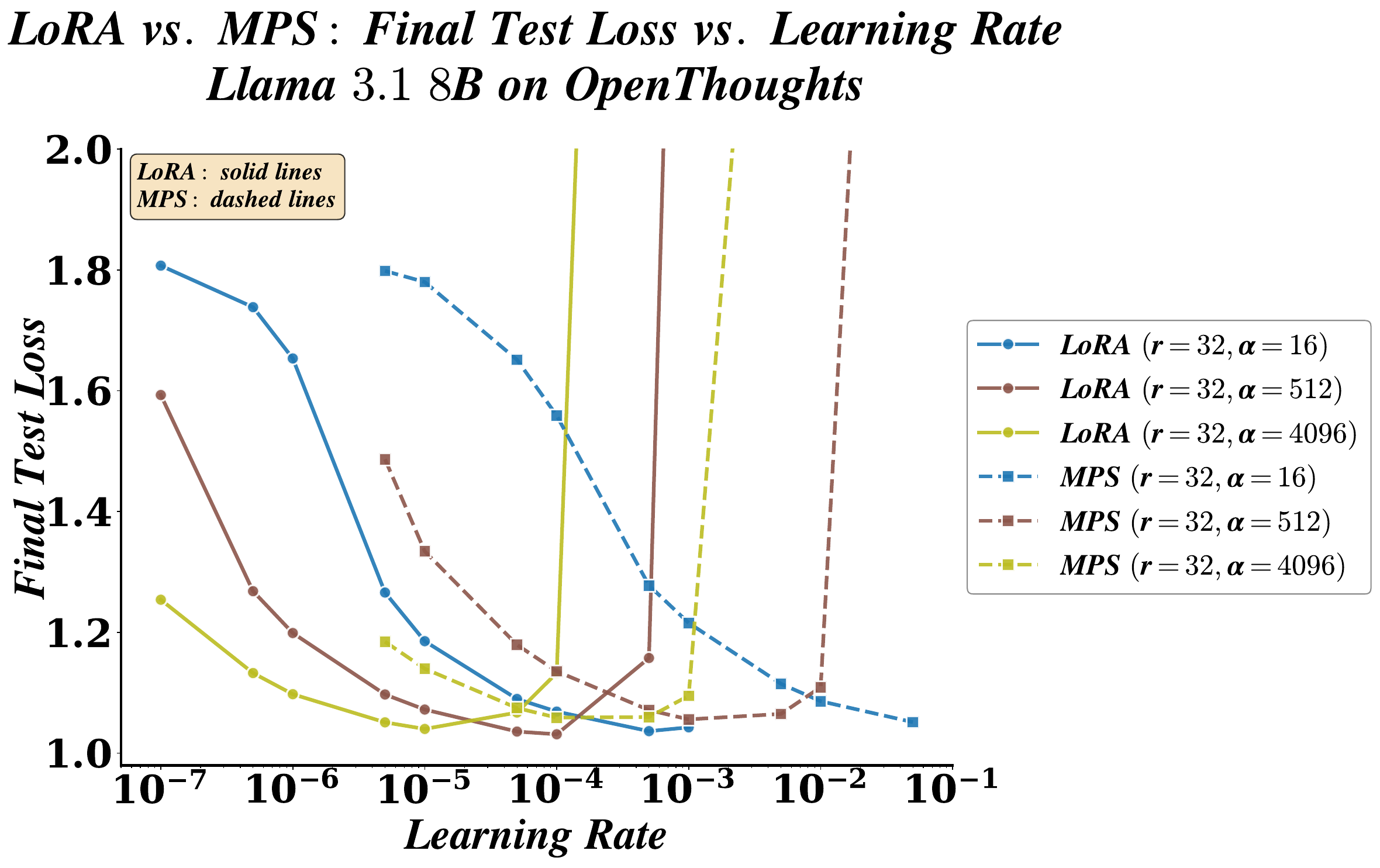}
    \caption{Final test loss of LoRA and MPS adaptation across multiple learning rates and scaling coefficients $\alpha$ for LLaMA-3.1-8B fine-tuned on OpenThoughts3 dataset.}
    \label{8B_OpenThoughts:lora_vs_mps_final_test_loss_vs_lr_llama3.1_8b_openthoughts}
\end{figure}

\end{document}